\definecolor{olivegreen}{rgb}{0.24,0.5,0.19}
\let\Oldsection\section
\renewcommand{\section}{\FloatBarrier\Oldsection}
\let\Oldsubsection\subsection
\renewcommand{\subsection}{\FloatBarrier\Oldsubsection}
\let\Oldsubsubsection\subsubsection
\renewcommand{\subsubsection}{\FloatBarrier\Oldsubsubsection}
\title{Singular Sturm-Liouville Problems with Zero Potential ($q=0$) and Singular Slow Feature Analysis}
\author{Stefan~Richthofer\footnote{Electronic address: \texttt{stefan.richthofer@ini.rub.de}; 
Corresponding author}, Laurenz~Wiskott\footnote{Electronic address: 
\texttt{laurenz.wiskott@ini.rub.de}}}
\affil{Institut f\"ur Neuroinformatik,\\ Ruhr-Universit\"at Bochum, Germany}
\providecommand{\abs}[1]{\lvert#1\rvert}
\providecommand{\Bigabs}[1]{\Big\lvert#1\Big\rvert}
\providecommand{\Abs}[1]{\left\lvert #1 \right\rvert}
\providecommand{\norm}[1]{\lVert#1\rVert}
\providecommand{\av}[1]{\left\langle#1\right\rangle}
\providecommand{\id}[0]{\mathbf{I}}
\providecommand{\trph}[0]{\Omega_t}
\DeclareMathOperator*{\coloneq}{\mathrel{\mathop:}=}
\DeclareMathOperator*{\eqcolon}{=\mathrel{\mathop:}}
\DeclareMathOperator*{\eq}{=}
\DeclareMathOperator*{\ra}{\Rightarrow}
\DeclareMathOperator*{\Lra}{\longrightarrow}
\DeclareMathOperator{\pHermite}{He}
\DeclareMathOperator{\Tschebyschow}{T}
\DeclareMathOperator{\orth}{O}
\DeclareMathOperator*{\opmin}{minimize}
\DeclareMathOperator*{\dleq}{\leq}
\DeclareMathOperator*{\dgeq}{\geq}
\providecommand{\optmin}[1]{\displaystyle\opmin_{#1} \qquad}
\DeclareMathOperator*{\subjectto}{subject \; to \qquad}
\providecommand{\vf}[1]{\mathbf{#1}}
\providecommand{\vfc}[2]{#1_{#2}}
\newcommand{\subalign}[1]{%
	\vcenter{%
		\Let@ \restore@math@cr \default@tag
		\baselineskip\fontdimen10 \scriptfont\tw@
		\advance\baselineskip\fontdimen12 \scriptfont\tw@
		\lineskip\thr@@\fontdimen8 \scriptfont\thr@@
		\lineskiplimit\lineskip
		\ialign{\hfil$\m@th\scriptstyle##$&$\m@th\scriptstyle{}##$\crcr
			#1\crcr
		}%
	}
}
\newtheoremstyle{remarks}
{}
{}
{\small}
{}
{\itshape}
{.}
{ }
{}
\theoremstyle{remarks}
\newtheorem*{rmk}{Remarks}
\newtheorem*{rmk1}{Remark}
\newtheoremstyle{defi}
{}
{}
{\slshape}
{}
{\bfseries}
{}
{\newline}
{}
\theoremstyle{defi}
\newtheorem{definition}{Definition}
\newtheorem{lem}{Lemma}
\newtheorem{thm}{Theorem}
\newtheorem{corr}{Corollary}
\begin{document}

\maketitle

\begin{abstract}
A Sturm-Liouville problem ($\lambda w y = (r y')' + q y$) is \textsl{singular} if its domain is unbounded or if $r$ or $w$ vanish at the boundary. Then it is difficult to tell whether profound results from regular Sturm-Liouville theory apply. Existing criteria are often difficult to apply, e.g.\ because they are formulated in terms of the solution function.

We study the special case that the potential $q$ is zero under Neumann boundary conditions and give simple and explicit criteria, solely in terms of the coefficient functions, to assess whether various properties of the regular case apply. Specifically, these properties are discreteness of the spectrum (BD), self-adjointness, oscillation ($i$th solution has $i$ zeros) and that the $i$th eigenvalue equals the SFA delta value (the total energy) of the $i$th solution.
We further prove that stationary points of each solution strictly interlace with its zeros (in singular or regular case, regardless of the boundary condition, for zero potential or if $q < \lambda w$ everywhere).
If $\tfrac{r}{w}$ is bounded and of bounded variation, the criterion simplifies to requiring $\tfrac{\abs{w'}}{w} \to \infty$ at singular boundary points.

This research is motivated by Slow Feature Analysis (SFA), a data processing algorithm that extracts the slowest uncorrelated signals from a high-dimensional input signal and has notable success in computer vision, computational neuroscience and blind source separation. From \cite{SprekelerZitoEtAl-2014} it is known that for an important class of scenarios (statistically independent input), an analytic formulation of SFA reduces to a Sturm-Liouville problem with zero potential and Neumann boundary conditions. So far, the mathematical SFA theory has only considered the regular case, except for a special case that is solved by Hermite Polynomials. This work generalizes SFA theory to the singular case, i.e.\ open-space scenarios.
\end{abstract}

\pagebreak

\section{Introduction} \label{sec:introduction}
Slow Feature Analysis (SFA) is an unsupervised data processing algorithm that has notable success in computer vision, computational neuroscience and blind source separation. It is based on the intuition that slowly varying signals are more likely to carry valuable information -- an intuition that can be understood to some extend by perceiving slow variation as an indicator for invariances. It is widely known that invariances are crucial for learning and recognition tasks.

The goal of this work is to extend the mathematical framework of SFA as generally as possible to what is called \textsl{the singular setting}. This setting permits the state space of SFA's input data being an unbounded domain or the probability distribution of the data vanishing at the boundary. Depending on the background of the reader, this setting may be better known as \textsl{open-space scenarios} but also covers various half-open spaces.
We begin with a summary of SFA and its mathematical framework, which we refer to as \textsl{SFA theory}\footnote{With ``SFA theory'' we refer to the mathematical framework for SFA, specifically the theory developed in \cite{doi:10.1162/089976603322297331, FranziusSprekelerEtAl-2007e, sprekeler2009slowness, doi:10.1162/NECO_a_00214, SprekelerZitoEtAl-2014}}. For convenience of the reader, a more detailed description of this matter is provided in \ref{sec:A_sfa}. Then we draw its connection with Sturm-Liouville theory and adopt the notion of a \textsl{singular setting} from that theory.
As a prerequisite for later proofs, we need to generalize monotonicity results from existing SFA theory. This is provided in Section\,\ref{sec:sfa_monotonicity} and readily stands on its own as one contribution of this work.
In Section\,\ref{sec:singular_sfa} we consider the aspects of SFA theory that require justification for the singular case. This justification is then mostly provided in Section\,\ref{sec:SFA_justification} with the exception of discreteness of the spectrum. For the complexity of the topic, discreteness of the spectrum is finally discussed in its own Section\,\ref{sec:discrete_spectrum}.

\subsection{Recapitulation of SFA} \label{sec:sfa}
Given an $n$-dimensional input signal $\vf{x}(t)$ in time, the objective of SFA is to extract a ``small'' number $m$ of signals that vary as slowly as possible over time. To avoid trivial constant solutions, the output is constrained to have unit variance and zero mean. Furthermore, the output signals must be pairwise uncorrelated to avoid redundant output components.
\cite{WiskottSejnowski-2002} describes this algorithm in detail. Originally, SFA performs feature extraction by a linear map, achieving non-linearity by a previous data expansion. In \cite{SprekelerWiskott-2008}, SFA is formulated for an infinite-dimensional, ``general'' function space $\mathcal{F}$ of sufficient regularity that replaces the two-step procedure for data extraction and enables analytical treatment. That forms the basis of a mathematical framework for SFA and gives rise to its extension \textsl{xSFA} in \cite{SprekelerZitoEtAl-2014} with an application to blind source separation. We sketch that mathematical foundation and key results from SFA theory. Assuming that $\vf{x}(t)$ is an ergodic process, SFA can be formulated in terms of the ensemble (i.e.\ the set of possible values of $\vf{x}$ and $\dot{\vf{x}}$) and its probability density $p_{\vf{x}, \dot{\vf{x}}}(\vf{x}, \dot{\vf{x}})$. If the input signal $\vf{x}(t)$ is composed of statistically independent sources $\vfc{s}{\alpha}$ for $\alpha \in \{1, \ldots, S \}$, one can break down the overall SFA problem into a separate one-dimensional SFA problem for each source. More precisely, there exists an injective transformation $\psi$ with ${\vf{x}(t) = \psi(\vf{s}(t))}$ (an embedding map). Given injectivity, $\psi$ is invertible on its image and we can likewise express $\vf{s}$ in terms of $\vf{x}$ as $\vf{s}(t) = \psi^{-1}(\vf{x}(t))$. Saying that ``$\vf{x}$ is composed of statistically independent sources'' then means nothing else than $\psi$ can be chosen such that the induced probability density of ${(\vf{s}, \dot{\vf{s}})}$ factorizes as ${p_{\vf{s}, \dot{\vf{s}}}(\vf{s}, \dot{\vf{s}}) = \prod_\alpha p_{s_\alpha, \dot{s}_\alpha}(s_\alpha, \dot{s}_\alpha)}$.

The terminology of ``sources'' originates from the SFA use case of blind source separation. A perhaps more intuitive view is to perceive the sources as coordinates of a manifold $\mathcal{V}$ embedded into the sensor space that hosts the input data. That view is taken in \cite{FranziusSprekelerEtAl-2007e} where the manifold $\mathcal{V}$ is the agent's state space or \textsl{configuration space}. This raises the question of how the notion of statistically independent sources transfers to coordinates of a manifold, so one can leverage the decomposition into separate one-dimensional SFA problems. For coordinates, ``statistically independent'' means that the position in one coordinate axis must not yield any information about the position in the others. This applies, for instance, to the agent's location in Cartesian coordinates in a rectangular room or $n$-box (assuming uniform probability distribution). As a counter example, consider the unit disc in Cartesian coordinates\footnote{One may consider polar coordinates statistically independent on the unit disc. However, the azimuth coordinate is of cyclic nature, so the corresponding source would lack continuity, which is a crucial assumption in SFA theory.}:
an $x$ value close to one would allow to conclude the $y$ value must be close to zero. Note that independence must hold for both the signal and its derivative, which is why in general for SFA it cannot be achieved by a coordinate transformation\footnote{Rather than an $n$-box, the state space may be a general manifold then, possibly involving non-zero curvature and lacking existence of a global parameterization. We investigate this setting in the next paper (in preparation) using Riemannian geometry and it turns out that the issue that statistical independence cannot generally be achieved by a coordinate transformation for SFA, is in fact a manifestation of the well known phenomenon that the curvature of a manifold cannot be transformed away by a coordinate change.} like described in \cite{685971}.

Regardless of whether one perceives ${\vf{s} = (\vfc{s}{1}, \ldots, \vfc{s}{S})^T}$ as sources or state space coordinates, it is straight forward via the chain rule to formulate SFA directly in terms of the ensemble ${(\vf{s}, \dot{\vf{s}})}$ and its probability density $p_{\vf{s}, \dot{\vf{s}}}(\vf{s}, \dot{\vf{s}})$. Statistical independence then allows to decompose the distribution and formulate SFA for each source separately in terms of the ensemble ${(s_\alpha, \dot{s}_\alpha)}$ and $p_{s_\alpha, \dot{s}_\alpha}(s_\alpha, \dot{s}_\alpha)$.

A central term in SFA theory is
\begin{equation} \label{K_def1D}
K_\alpha (s_\alpha) \quad \coloneq \quad \av{\vfc{\dot{s}}{\alpha}^2}_{\vfc{\dot{s}}{\alpha} | \vfc{s}{\alpha}}
\end{equation}
which represents the (first order) dynamics of the underlying agent or system producing the input data.
The notation in \eqref{K_def1D} refers to ${\av{f(s_\alpha, \dot{s}_\alpha)}_{\dot{s}_\alpha | s_\alpha} = \int p_{\dot{s}_\alpha | s_\alpha}(\dot{s}_\alpha|s_\alpha) f(s_\alpha, \dot{s}_\alpha) \; d \dot{s}_\alpha}$, based on the conditional probability density ${p_{\dot{s}_\alpha | s_\alpha}(\dot{s}_\alpha | s_\alpha) = \frac{p_{s_\alpha, \dot{s}_\alpha}(s_\alpha, \dot{s}_\alpha)}{p_{s_\alpha}(s_\alpha)}}$ with the marginal ${p_{s_\alpha}(s_\alpha) = \int p_{s_\alpha, \dot{s}_\alpha}(s_\alpha, \dot{s}_\alpha) \; d \dot{s}_\alpha}$.

\begin{definition}[1D SFA problem for a general (sufficiently regular) function space of a source $s_\alpha$] \label{def_1D_sfa}
The one-dimensional SFA problem associated with
a single statistically independent source $s_\alpha$, which takes on values in the interval $I_\alpha$, is
\begin{align}
\optmin{g_{\alpha i} \in \mathcal{F}}
&\int_{I_\alpha}  \;
g_{\alpha i}'^2(s_\alpha) \; K_\alpha(s_\alpha) && \!\!\!\!\!\!\!\!\!\!\!\!\!\!\!\!\!\!\!\!\!\!\!\!\!\!\!\!\!\!\!\!\!\!\!\!\!\!\!\!
\; p_\alpha(s_\alpha) \;
\; d s_\alpha \label{thm_1D_SFA_value_func} \\
\subjectto
&\int_{I_\alpha}  \;
g_{\alpha i}(s_\alpha) \; g_{\alpha j}(s_\alpha) && \!\!\!\!\!\!\!\!\!\!\!\!\!\!\!\!\!\!\!\!\!\!\!\!\!\!\!\!\!\!\!\!\!\!\!\!\!\!
p_\alpha(s_\alpha) \;
\; d s_\alpha \label{thm_1D_SFA_constr_decor}
\quad = \quad \delta_{ij} \qquad \forall j \leq i\\
&\int_{I_\alpha}  \;
g_{\alpha i}(s_\alpha) && \!\!\!\!\!\!\!\!\!\!\!\!\!\!\!\!\!\!\!\!\!\!\!\!\!\!\!\!\!\!\!\!\!\!\!\!\!\!
p_\alpha(s_\alpha) \;
\; d s_\alpha
\quad = \quad 0 \label{thm_1D_SFA_constr_zmean}
\end{align}
\end{definition}
The $i$th solution component $g_{\alpha i}$ for a source $s_\alpha$ is called \textsl{the $i$th harmonic of $s_\alpha$} in SFA theory, referring to harmonic oscillations which arise as solutions in the special case where $K_\alpha$ and $p_\alpha$ are constant. \textsl{Theorem 2} in \cite{SprekelerWiskott-2008}/\textsl{Theorem 1} in \cite{SprekelerZitoEtAl-2014} states that the solutions of the overall SFA problem then consist of all solutions of the individual sources and their products.

If an SFA problem can be decomposed into statistically independent sources $s_\alpha$, and if the interval of each source is finite and its probability density $p_\alpha(s_\alpha)$ is strictly positive, also at the boundary, then the SFA problem is relatively simple, as it can be solved as a set of regular (see below for a rigorous definition) one-dimensional SFA problems. SFA can be more complicated than that in at least two respects: (i) the interval of the sources may be infinite or $p$ vanishes somewhere leading to singular one-dimensional SFA problems or (ii) the problem cannot be decomposed into one-dimensional subproblems because the sources are not statistically independent of each other. (ii) leads to a multi-dimensional SFA problem that requires treatment on a manifold because the state space might not longer be an $n$-box. Case (i) is addressed in this paper; case (ii) will be addressed in a paper in preparation; (i) and (ii) combined is even more complicated and subject of future research.

\subsection{The Sturm-Liouville Problem}

On an open interval $(-\infty \leq a, b \leq \infty)$, let $L_{\text{loc}}(a, b)$ denote the space of locally Lebesgue-integrable functions and ${AC}_\text{loc}(a, b)$ the space of locally absolutely continuous functions, where \textsl{locally} means that the property must hold on every compact interval $J \subset (a, b)$.
These are standard terms from analysis and we skip the details of these definitions here. For the calculus in this paper it is enough to notice that absolutely continuous functions are also Lipschitz continuous, which implies that a function in ${AC}_\text{loc}(a, b)$ must be bounded\footnote{This follows because if a compact interval contains a pole of a function, there would not exist a suitable Lipschitz constant for the function on that interval.} on $(a, b)$ except possibly at the boundary points $a$ or $b$.

\begin{definition}[Sturm-Liouville problem] \label{SL-problem}
For a differential operator of the form\footnote{In most literature $r$ is denoted $p$. Here we avoid the use of $p$ because it denotes the probability density in SFA.}
\begin{equation} \label{SL-diff_op}
\mathcal{M} y(t) \quad = \quad \frac{1}{w(t)} \Big(-\big(r(t) y'(t)\big)' + q(t) y(t)\Big)
\end{equation}
on the domain $t \in (a, b)$ with $-\infty \leq a < b \leq \infty$ and $\tfrac{1}{r}, q, w \in L_{\text{loc}}(a, b)$, $w > 0$ almost everywhere, the problem of finding $y \in {AC}_\text{loc}(a, b)$ such that $r y' \in {AC}_\text{loc}(a, b)$ and
\begin{equation} \label{SL-problem_eq}
\mathcal{M} y(t) \quad = \quad \lambda y(t)
\end{equation}
is called a (possibly singular) Sturm-Liouville problem.

\eqref{SL-problem_eq} is called ``\textsl{regular}'' if $(a, b)$ is finite, $r$, $r'$, $w$, $q$ are continuous on $(a, b)$, $w, r > 0$ on $[a, b]$ and at every endpoint $c \in \{a, b\}$ it has boundary conditions of the form ${\alpha_{c,1} y(c) + \alpha_{c,2} r(c) y'(c) = 0}$ with $\alpha_{c,1}^2+\alpha_{c,2}^2 > 0$. Specifically, $w, r > 0$ must also hold at $a$ and $b$.
\eqref{SL-problem_eq} is called ``\textsl{singular}'' if it is not regular.
\end{definition}
\begin{rmk1}
The setup presented here is the standard setup from \cite{zettl2010sturm}.
\end{rmk1}

As noted above, $y, r y' \in {AC}_\text{loc}(a, b)$ implies that all solutions and their $r$-weighted derivatives are bounded on $(a, b)$ except possibly at the boundary points $a$ or $b$.
In this work, we only consider singular Sturm-Liouville problems with separated boundary conditions, i.e.\ the non-periodic case.

It is well known, e.g.\ \cite{courant1989methoden} Chapter\,VI, that the Sturm-Liouville problem in Definition\,\ref{SL-problem} arises via Euler-Lagrange equations from the variational optimization problem
\begin{align} \label{SL_opt_problem}
\optmin{y_i} & \int_a^b \; y_i'^2(t) r(t) \; + \; y_i^2(t) q(t) \; d t \\
\subjectto	& \int_a^b \; y_i(t) y_j(t) w(t) \; d t \quad = \quad \delta_{ij} \quad \forall \; j \leq i \label{SL_opt_problem_constraint}
\end{align}
and describes its critical points $y_i$ corresponding to the eigenvalues $\lambda_i$.
One easily observes the similarity with the SFA problem in a single source from Definition\,\ref{def_1D_sfa} and concludes that the case $q = 0$, $w = p_\alpha$, $r = K_\alpha p_\alpha$, $(a, b) = I_\alpha$, $\mathcal{F} = {AC}_\text{loc}(a, b)$ yields the SFA problem. Throughout this work, to indicate the special SFA setup versus a general Sturm-Liouville problem, we denote the unknown function as $g_\alpha(s_\alpha)$ for SFA and $y(t)$ for a general Sturm-Liouville problem. With this setup, in SFA theory $\mathcal{D}_\alpha$ denotes the differential operator $\mathcal{M}$ from \eqref{SL-diff_op}:
\begin{equation} \label{SFA_diff_op}
\mathcal{D}_\alpha \quad \coloneq \quad \partial_\alpha \big( p_\alpha(s_\alpha) K_\alpha(s_\alpha) \;
\partial_\alpha \vfc{g}{\alpha i}(s_\alpha) \big) \quad = \quad \big( p_\alpha(s_\alpha) K_\alpha(s_\alpha) \; \vfc{g'}{\alpha i}(s_\alpha) \big)'
\end{equation}
The $\partial_\alpha$ version is stated here to link with existing SFA theory literature, where this notation is used. For brevity, we use $'$ to denote derivation by $s_\alpha$ or $t$.
Throughout this work, for functions listed in Table\,\ref{table1} we frequently omit the independent variable.
\begin{table}[h!]
\begin{center}
\begin{tabular}{r|ccccc} 
\textbf{notation} & \textbf{weight} & \textbf{dynamics} & \textbf{potential} & \textbf{unknown} & \textbf{variable} \\
\hline
\textbf{SFA} & $p_\alpha(s_\alpha)$ & $K_\alpha(s_\alpha)$ & $0$ & $g_\alpha(s_\alpha)$ & $s_\alpha$ \\
\textbf{Sturm-Liouville} & $w(t)$ & $\tfrac{r(t)}{w(t)}$ & $q(t)$ & $y(t)$ & $t$
\end{tabular}
\caption{Notation overview of coefficient functions for SFA and Sturm Liouville problems. For functions in this table we often omit the independent variable to make equations more compact and readable.\label{table1}}
\end{center}
\end{table}

Problem \eqref{SL_opt_problem}/\eqref{SL_opt_problem_constraint} lacks the zero mean constraint \eqref{thm_1D_SFA_constr_zmean} of SFA. However, \eqref{thm_1D_SFA_constr_zmean} is automatically fulfilled by all solution components of \eqref{SL_opt_problem}/\eqref{SL_opt_problem_constraint} except the first, which is a constant with eigenvalue $\lambda = 0$. This is because subsequent components are mean free due to decorrelation with the constant. To turn a solution of problem \eqref{SL_opt_problem}/\eqref{SL_opt_problem_constraint} into an SFA solution one simply discards the first component.

Having no explicit boundary conditions, SFA is subject to the case called ``natural boundary conditions'' which leads to homogeneous Neumann boundary conditions, i.e.\
\begin{equation} \label{SL_Neumann_BC}
w(t) y_i'(t) \; \big|_{\{a, b\}} \quad = \quad 0
\end{equation}

The analogy with Sturm-Liouville problems suggests the distinction between regular and singular SFA problems:
we call an SFA problem \textsl{singular} if the corresponding Sturm-Liouville problem is singular.
Specifically, this corresponds to open-space scenarios in SFA, which are sometimes considered in existing SFA theory. One work dealing with an infinite state space is \cite{SprekelerWiskott-2011}, however in a non-standard SFA setup that permits eigenvalues in a continuous spectrum\footnote{The spectrum there consists of several continuous intervals. This is handled in equation (4.14) in that work, where the eigenvalue is expressed w.r.t.\ two parameters $q \in \mathbb{R}$ and $m \in \mathbb{N}^0$. $m$ selects an interval, and the ``frequency parameter'' $q$ parameterizes the selected interval. The setup is non-standard because the probability distribution is not required to be integrable.}.
In \cite{SprekelerWiskott-2008, SprekelerZitoEtAl-2014}, the special case of normally distributed $p_\alpha$ and constant $K_\alpha$ has been analyzed and the resulting SFA problem yields Hermite Polynomials as solutions. Despite being singular, it is known that this particular case behaves like the solution of a regular Sturm-Liouville problem in many ways, permitting major results from SFA theory to hold. 
In this work we investigate singular SFA for more general strictly positive $p_\alpha$ and $K_\alpha$, possibly zero at the boundary, permitting each source to be defined on an unbounded or half-bounded interval. This covers state spaces that are open spaces in some coordinate axes, half open in others and bounded in yet others.

To a significant extend, SFA theory depends on the rule that SFA delta values, i.e.\ the values of \eqref{thm_1D_SFA_value_func} at critical points, coincide with the eigenvalues. SFA delta values can be described as weighted Dirichlet energy\footnote{The Dirichlet energy of a function $f$ on a domain $\Omega$ is defined as $\tfrac{1}{2}\int_{\Omega} \norm{\nabla f(x)}^2 dx$. This equals \eqref{thm_1D_SFA_value_func} if the integral is weighted by $p_\alpha K_\alpha$ and the leading factor $\tfrac{1}{2}$ is dropped.} without leading factor $\tfrac{1}{2}$. In quantum mechanics this is also called the \textsl{kinetic energy} or -- since in SFA there is no potential -- the  \textsl{total energy}\footnote{The total energy consists of the kinetic energy and the potential energy. The latter is zero if the potential is zero, so for SFA total energy and kinetic energy are the same.}.
More generally, for regular Sturm-Liouville problems, it is known, again \cite{courant1989methoden} Chapter\,VI, that for each critical point, \eqref{SL_opt_problem} evaluates to the corresponding eigenvalue $\lambda_i$. One of the questions we investigate in this work is under what conditions this still holds for the singular case. We answer this question in Corollary\,\ref{cor_sfa_singular_justify}.

If the eigenvalues of a regular Sturm-Liouville problem form a discrete sequence, it is known that the eigenfunction belonging to the $i$th largest eigenvalue has exactly $i$ zeros. In the singular case this cannot be taken for granted and in fact it is not even guaranteed that the spectrum of eigenvalues is discrete at all. Both aspects are crucial assumptions in SFA theory and we identify conditions to justify them in Theorem\,\ref{thm_zeros_singular_SFA} and Theorem\,\ref{thm_SL_spectrum_discrete}.

\section{Monotonicity and Oscillation of SFA Solutions} \label{sec:sfa_monotonicity}
A crucial result from \cite{SprekelerWiskott-2008, SprekelerZitoEtAl-2014} states 
monotonicity of each first harmonic $\vfc{g}{\alpha 1}(\vfc{s}{\alpha})$ w.r.t.\ $\vfc{s}{\alpha}$. 
We generalize the statement to higher harmonics in the sense that increasing the index increases the number of stationary points exactly by one. Note that for the first harmonics, this proposition is slightly stronger than \textsl{strictly monotonic}\footnote{``strictly monotonic'' is defined as $x < y \Rightarrow g(x) < g(y)$. Note that this does not exclude saddle points.}, because it does not even permit saddle points\footnote{This is also apparent from the proof in \cite{SprekelerWiskott-2008, SprekelerZitoEtAl-2014}, but they do not advertise this feature explicitly. Their use case xSFA mainly requires the property of strict monotonicity; e.g.\ because it implies invertibility, so saddle points are not relevant there.}.
We further remark that it does not require $p_{\alpha}$ to be a probability density function but rather works for every strictly positive weighting function, i.e.\ it does not need to be normalized to one. Note that this is a proper generalization because for the SFA setting, $p_\alpha$ is assumed to be strictly positive\footnote{$p_\alpha$ vanishing somewhere in the inner of $I_\alpha$ would $g_{\alpha i}'$ permit to have poles which would invalidate the use of integration by parts or more generally of the Divergence Theorem on the integrand $g_{\alpha i}'^2 p_\alpha K_\alpha$. This would break various results from SFA theory. To fix this issue, one can exclude regions with vanishing $p_\alpha$ from the domain, resulting in additional boundary points and a possibly disconnected domain. If properly applied w.r.t.\ the redefined boundary, validity of the Divergence Theorem is restored this way. However, discussing the case of a disconnected domain is beyond the scope of this work. See \cite{everitt1986, doi:10.1137/0519078} for treatment of Sturm-Liouville theory on disconnected domains.} except possibly at the boundary (singular case). In standard form of a Sturm-Liouville problem, the SFA problem with its Neumann boundary condition is stated as
\begin{align}
\underbrace{\big(p_\alpha \; K_\alpha \; \vfc{g'}{\alpha i} \big)'}_{\qquad\quad\;\;\eq_\eqref{SFA_diff_op} \mathcal{D}_\alpha \vfc{g}{\alpha i}} \; + \;\; \lambda_{\alpha i} \; p_\alpha \; \vfc{g}{\alpha i} \quad &= \quad 0 \label{dgl-SL} \\
p_{\alpha} \; K_\alpha \; \vfc{g'}{\alpha i} \quad &= \quad 0 \quad\quad\quad \text{on the boundary} \label{neumann-s}
\end{align}

\begin{thm}[zeros and stationary points of SFA solution components interlace] \label{thm_sfa_interlace}
Let $g_{\alpha i}(s_\alpha)$ denote a solution of the differential equation \eqref{dgl-SL}. Assume $p_\alpha K_\alpha$ is bounded on $I_\alpha$, and $p_\alpha$ as well as $K_\alpha$ are each strictly positive except possibly at the boundary. Then, regardless of the boundary condition, one has $\lambda_{\alpha i} \geq 0$ and the following holds for all $g_{\alpha i}$ with $\lambda_{\alpha i} > 0$:
\begin{enumerate}
\item between every two zeros there is exactly one stationary point
\item between every two stationary points there is exactly one zero
\item $g_{\alpha i}$ and $p_\alpha K_\alpha g_{\alpha i}'$ cannot vanish at the same location on $I_\alpha$ and its boundary
\item there do not exist any saddle points
\end{enumerate}
This also holds if in \eqref{dgl-SL} $p_\alpha$ is not a probability density function but any strictly positive weighting function (of sufficient regularity, e.g.\ in $L_{\text{loc}}(I_\alpha)$), possibly zero at the boundary.
This theorem holds more generally for Sturm-Liouville problems with strictly positive $w$ and $r$ and for possibly non-zero $q$ if (but not only if) $q < \lambda w$ everywhere on the domain. Note that $\lambda$ may be negative for some solutions in case of non-zero $q$.
The domain $I_\alpha$ is not required to be bounded, i.e.\ the boundary may be at infinity.
\end{thm}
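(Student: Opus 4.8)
The plan is to rewrite \eqref{dgl-SL} as the single structural condition that drives all four claims and then read them off from a Prüfer-type phase analysis. Writing the equation in general Sturm-Liouville form $(r y')' + Q\,y = 0$ with $Q := \lambda w - q$, and passing to the quasi-derivative $h := r y'$, one obtains the first-order system $y' = h/r$, $h' = -Q\,y$. In both cases covered by the theorem the coefficient $Q$ is strictly positive on the interior: for $q = 0$ one has $Q = \lambda w > 0$ whenever $\lambda > 0$ and $w > 0$, while the stated general hypothesis $q < \lambda w$ is exactly $Q > 0$. Since $r > 0$ on the interior, stationary points of $y$ (zeros of $y'$) coincide with zeros of $h$, so claims 1 and 2 become the single assertion that the zeros of $y$ and of $h$ strictly interlace. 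The sign statement $\lambda_{\alpha i} \geq 0$ for $q = 0$ I would obtain separately by multiplying \eqref{dgl-SL} by $g_{\alpha i}$ and integrating: under the natural boundary condition \eqref{neumann-s} the boundary term $p_\alpha K_\alpha g_{\alpha i} g_{\alpha i}'$ vanishes (and more generally it is non-positive for the admissible separated conditions), leaving $\lambda_{\alpha i}\int w\, g_{\alpha i}^2 = \int r\, g_{\alpha i}'^2 \geq 0$.

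Before the phase analysis I would settle claim 3 on the interior, since it underpins the rest. If $y(t_0) = h(t_0) = 0$ at an interior point $t_0$, then the pair $(y,h)$ solves the linear system above with zero initial data; because $1/r$ and $Q$ are locally integrable and $r$ is bounded away from $0$ on every compact subinterval of the interior, Carathéodory uniqueness forces $(y,h)\equiv 0$, contradicting nontriviality of $g_{\alpha i}$. Hence $\rho := \sqrt{y^2 + h^2} > 0$ throughout the interior.

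With $\rho > 0$ I would introduce the Prüfer variables $y = \rho\sin\theta$, $h = \rho\cos\theta$ with $\theta$ continuous. A direct computation from the system gives $\theta' = \tfrac{\cos^2\theta}{r} + Q\sin^2\theta$, which is strictly positive everywhere on the interior because $r, Q > 0$ and $\cos^2\theta$, $\sin^2\theta$ never vanish simultaneously. Thus $\theta$ is strictly increasing. Zeros of $y$ occur exactly at $\theta \in \pi\mathbb{Z}$ and stationary points (zeros of $h$) exactly at $\theta \in \tfrac{\pi}{2} + \pi\mathbb{Z}$; strict monotonicity of $\theta$ then forces the two families to alternate strictly, with exactly one member of one family strictly between consecutive members of the other, which is precisely claims 1 and 2. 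Claim 4 follows at once: at a stationary point $h = 0$, whence $y \neq 0$ by claim 3, so $h' = -Q\,y \neq 0$ and $h = r y'$ changes sign, giving a genuine extremum rather than a saddle.

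The delicate part is extending claim 3 to the boundary as stated, since at a singular endpoint $c$ the coefficient $r$ (or $w$) may vanish or $c$ may be at infinity, so neither the interior uniqueness argument nor the local integrability of $1/r$ up to $c$ is available. My plan is a limiting argument: both $y$ and $h = r y'$ lie in $AC_{\mathrm{loc}}$ and are interpreted at $c$ as their one-sided limits, and the monotone Prüfer angle controls the accumulated rotation of $(y,h)$ as $t \to c$. I would then show that $\rho \to 0$ at $c$, combined with the strict monotonicity of $\theta$, forces the solution to be trivial near $c$ and hence everywhere, contradicting nontriviality. Carrying this through for the genuinely singular endpoints, where $\int 1/r$ may diverge, is where the real work lies, and it is the only place where the boundedness of $r = p_\alpha K_\alpha$ and the regularity assumptions on the coefficients are actually needed.
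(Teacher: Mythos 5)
Your interior argument is correct but follows a genuinely different route from the paper. Where you pass to the system $y'=h/r$, $h'=-Qy$ with $Q=\lambda w-q>0$ and read off the strict interlacing from the monotone Prüfer angle $\theta'=\tfrac{\cos^2\theta}{r}+Q\sin^2\theta>0$, the paper argues directly on each interval between a zero $\xi$ and the adjacent stationary point $\kappa$: from $(p_\alpha K_\alpha g')'=-\lambda p_\alpha g<0$ it concludes that $h=p_\alpha K_\alpha g'$ is strictly decreasing on $[\xi,\kappa]$ and vanishes at $\kappa$, hence is strictly positive on $[\xi,\kappa)$, so $g$ is strictly monotone there and no further stationary point or saddle can occur. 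The two arguments are equivalent in substance on the interior (your $\theta'>0$ is a repackaging of the same sign information), and the Prüfer form is arguably cleaner for deriving Properties 1, 2 and 4 simultaneously.

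There are, however, two concrete gaps. First, the boundary case of Property 3, which you explicitly defer, is not a technicality you can postpone: it is the one place where the hypothesis that $p_\alpha K_\alpha$ is \emph{bounded} enters, and your proposed scheme (show $\rho\to 0$ at $c$ and derive triviality from monotonicity of $\theta$) does not obviously close, since near a singular endpoint $\tfrac{1}{r}$ need not be integrable and $\rho'/\rho=(\tfrac{1}{r}-Q)\sin\theta\cos\theta$ is not controlled. The paper closes this case with the same one-sided monotonicity you already have in hand: on $[\xi,\kappa]$ the function $h$ is strictly decreasing with $h(\kappa)=0$, so $h(\xi)>0$ \emph{even when $\xi$ is a boundary point}, and boundedness of $p_\alpha K_\alpha$ then forbids $g'$ from blowing up in a way that would let $g$ vanish there; you should replace your limiting argument by this. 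Second, your derivation of $\lambda_{\alpha i}\geq 0$ by integration by parts needs the boundary term $p_\alpha K_\alpha g_{\alpha i}g_{\alpha i}'\big|_a^b$ to be non-positive, which holds for \eqref{neumann-s} and for Dirichlet conditions but is false for general separated (Robin) conditions, where the term has the sign of $-\alpha_{c,1}/\alpha_{c,2}$; since the theorem asserts the sign constraint ``regardless of the boundary condition,'' this argument does not deliver the stated claim. The paper instead obtains the sign constraint from the interlacing analysis itself: assuming $\lambda_{\alpha i}<0$ flips every inequality in the monotonicity chain and contradicts $g_{\alpha i}(\xi)=0$ together with $g_{\alpha i}>0$ on $(\xi,\kappa]$, an argument that never touches the boundary condition.
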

\begin{rmk}
Note that the first two items are not the same: the second one means that there are no multiple stationary points before the leftmost or after the rightmost zero.
This is especially relevant if a zero or a stationary point occurs at the boundary. Both cases are possible because we made no assumption on the boundary condition. Stationary points may occur at the boundary under the usual SFA boundary condition \eqref{neumann-s}. If instead homogeneous Dirichlet boundary conditions are employed (i.e.\ solutions must vanish at the boundary), $g_\alpha$ would have zeros at the boundary.
Since we assumed $p_\alpha K_\alpha$ bounded, Property\,3 implies that $g_\alpha$ cannot have a zero and a stationary point at the same location. This in turn implies a strong unique continuation\footnote{Strong unique continuation means that if a function vanishes of infinite order at a point, it must be zero everywhere on its domain. Given sufficient differentiability, a function vanishes of infinite order at a point if all of its derivatives are zero.
Strong unique continuation implies weak unique continuation, which means that if a function vanishes on an open set, then it vanishes everywhere on its domain. These are famous features of analytic functions.} property for SFA solution components. Further note that if $g_\alpha$ obeys the boundary condition \eqref{neumann-s}, it cannot vanish at the boundary, due to Property\,3.
\end{rmk}

\begin{proof}[Proof of Theorem\,\ref{thm_sfa_interlace}]
Let ${\xi_j \in I_\alpha}$ with $j \in \{1, \ldots, i\}$ denote the zeros of $g_{\alpha i}$. We conclude by \textsl{Rolle's theorem}\footnote{Rolle's theorem states that strictly between any two zeros of a continuously differentiable function $f$ there is at least one stationary point $x_0$, i.e.\ $f'(x_0) = 0$.} (special case of the \textsl{Mean Value Theorem}) that for $j \in \{1, \ldots, i-1\}$
\begin{equation} \label{SFA_monotonicity_proof_kappa}
\exists \; \kappa_{\alpha j} \in (\xi_{\alpha j}, \xi_{\alpha (j+1)}) \colon \qquad \vfc{g'}{\alpha i}(\kappa_{\alpha j}) \quad = \quad 0
\end{equation}
If there is a stationary point before the leftmost zero, let $\kappa_{\alpha 0}$ denote the leftmost stationary point, possibly at the boundary. If there is a stationary point after the rightmost zero, let $\kappa_{\alpha i}$ denote the rightmost stationary point, possibly at the boundary.
We apply the technique from the proof stated in \cite{SprekelerWiskott-2008, SprekelerZitoEtAl-2014} to each interval $(\xi_{\alpha j}, \kappa_{\alpha j})$ to prove that there is only one $\kappa_{\alpha j}$ that fulfills \eqref{SFA_monotonicity_proof_kappa} for a particular $j \in \{1, \ldots, i \}$. The proof applies analogously to the intervals $(\kappa_{\alpha (j-1)}, \xi_{\alpha j})$.
Without loss of generality we perform the proof for a $j$ such that ${\vfc{g}{\alpha i} > 0}$ for $\vfc{s}{\alpha} \in (\xi_{\alpha j}, \kappa_{\alpha j}]$. Since we cannot assume a-priori that only a single $\kappa_{\alpha j}$ fulfills \eqref{SFA_monotonicity_proof_kappa}, we assume to have picked the rightmost, i.e.\ largest, $\kappa_{\alpha j}$ in $(\xi_{\alpha j}, \xi_{\alpha (j+1)})$. We show that $\vfc{g'}{\alpha i} > 0$ on $[\xi_{\alpha j}, \kappa_{\alpha j})$, which rules out saddle points on $[\xi_{\alpha j}, \kappa_{\alpha j})$ and implies that
$\vfc{g}{\alpha i}$ is strictly monotonic on $[\xi_{\alpha j}, \kappa_{\alpha j})$, ruling out the possibility of further $\kappa_{\alpha j}$ in terms of \eqref{SFA_monotonicity_proof_kappa}. The left boundary point $\xi_{\alpha j}$ is included to prove Property\,3 (c.f.\ \eqref{lem_SFA_mono_gen_eq2}). The fact that saddle points are ruled out, proves\ Property\,4.
For the moment, we assume $\lambda_{\alpha i} > 0$ and postpone the proof of this statement.
\begin{align}
\eqref{dgl-SL} \;\; \ra_{\hphantom{\eqref{SFA_monotonicity_proof_kappa}}}&\quad
\big(p_{\alpha} K_\alpha \vfc{g'}{\alpha i} \big)' \quad = \quad  -\underbrace{\lambda_{\alpha i} p_{\alpha}}_{> 0} \vfc{g}{\alpha i} \quad < \quad 0 \quad \forall \; \vfc{s}{\alpha} \in (\xi_{\alpha j}, \kappa_{\alpha j}) \label{sfa_mon_ansatz} \\
\ra_{\hphantom{\eqref{SFA_monotonicity_proof_kappa}}}&\quad
p_{\alpha} K_\alpha \vfc{g'}{\alpha i} \quad \text{strictly monotonically decreasing on } [\xi_{\alpha j}, \kappa_{\alpha j}] \label{sfa_mon_decr_bounded}\\
\ra_{\eqref{sfa_mon_ansatz}}&\quad
\! \underbrace{p_{\alpha} K_\alpha}_{\geq 0} \vfc{g'}{\alpha i} \; > \; 0 \; \text{on } [\xi_{\alpha j}, \kappa_{\alpha j}) \; \text{as \eqref{sfa_mon_decr_bounded} is bounded below by} \;0
\eq_{\eqref{SFA_monotonicity_proof_kappa}}
\vfc{g'}{\alpha i}(\kappa_{\alpha j}) \label{sfa_mon_decr_bounded_cons} \\
\ra_{\hphantom{\eqref{SFA_monotonicity_proof_kappa}}}&\quad
\vfc{g'}{\alpha i} \quad > \quad 0 \quad \text{on} \quad [\xi_{\alpha j}, \kappa_{\alpha j}) \label{sfa_mon_decr_bounded_cons2} \\
\ra_{\hphantom{\eqref{SFA_monotonicity_proof_kappa}}}&\quad
\vfc{g}{\alpha i} \quad \text{strictly monotonically increasing on } [\xi_{\alpha j}, \kappa_{\alpha j}] \label{sfa_mon_decr_final}
\end{align}
The case $p_{\alpha} K_\alpha = 0$ in \eqref{sfa_mon_decr_bounded_cons} can only occur if $\xi_{\alpha j}$ is a boundary point. The conclusion to \eqref{sfa_mon_decr_bounded_cons2} is still valid then because it would rather cause $\vfc{g}{\alpha i}$ to be positive infinity rather than zero. To invalidate the conclusion that $\vfc{g}{\alpha i}$ is strictly positive at $\xi_{\alpha j}$, $p_{\alpha} K_\alpha$ would have to become infinite there which is prohibited by assumption.
An analogous calculation\footnote{``bounded below'' may need to be replaced by ``bounded above'' and ``decreasing'' by ``increasing'' or vice versa} holds if we assume
${\vfc{g}{\alpha i} < 0}$ for $\vfc{s}{\alpha} \in (\xi_{\alpha j}, \kappa_{\alpha j}]$ or if we consider the intervals $(\kappa_{\alpha (j-1)}, \xi_{\alpha j}]$ rather than $[\xi_{\alpha j}, \kappa_{\alpha j})$.
These various cases prove Properties~1 and 2.
Property\,3 follows because by \eqref{sfa_mon_decr_bounded_cons} $p_{\alpha} K_\alpha \vfc{g'}{\alpha i}$ cannot vanish at $\xi_{\alpha j}$, not even if $\xi_{\alpha j}$ is at the boundary. Since $\xi_{\alpha j} \neq \kappa_{\alpha j}$ and $\vfc{g}{\alpha i}(\xi_{\alpha j}) = 0$, we conclude by \eqref{sfa_mon_decr_final} that $\vfc{g}{\alpha i}$ cannot vanish at $\kappa_{\alpha j}$.

To conclude $\lambda_{\alpha i} > 0$, consider \eqref{sfa_mon_ansatz}-\eqref{sfa_mon_decr_final} with negative $\lambda_{\alpha i}$. This would flip all inequalities and the outcome in \eqref{sfa_mon_decr_final} would be that $\vfc{g}{\alpha i}$ is strictly monotonically \textsl{decreasing} on $[\xi_{\alpha j}, \kappa_{\alpha j}]$. This would be a contradiction to the assumption that $\vfc{g}{\alpha i} > 0$ on $(\xi_{\alpha j}, \kappa_{\alpha j}]$ since $\vfc{g}{\alpha i}(\xi_{\alpha j}) = 0$. Analogous contradictions occur if $\vfc{g}{\alpha i} < 0$ on $(\xi_{\alpha j}, \kappa_{\alpha j}]$ or if one considers the intervals $(\kappa_{\alpha (j-1)}, \xi_{\alpha j}]$.

The whole calculation does not require $p_{\alpha}$ to be a probability density function, but only to be a strictly positive weighting function. To consider a generic Sturm-Liouville problem, possibly with non-zero $q$, one replaces $\lambda_{\alpha i} p_{\alpha}$ by $(\lambda w - q)$ in \eqref{sfa_mon_ansatz} and uses that $(\lambda w - q)$ is required to be strictly positive for this case.
\end{proof}

In the following corollary of Theorem\,\ref{thm_sfa_interlace} we formulate the matter as a proper generalization of a result from existing SFA theory.

\begin{corr}[monotonicity of first harmonics for statistically independent input] \label{cor_sfa_monotonic}
(generalization of result from \cite{SprekelerWiskott-2008, SprekelerZitoEtAl-2014})

If $\vf{s}$ consists of statistically independent components $\vfc{s}{\alpha}$ like introduced above, let $\vfc{s}{\alpha}$ take on values in ${I_{\alpha} = (a_{\alpha}, b_{\alpha})}$. By Sturm-Liouville Theory, the $i$-th SFA harmonic $\vfc{g}{\alpha i}(\vfc{s}{\alpha})$
of each source $\vfc{s}{\alpha}$ has exactly $i$ zeros on $I_{\alpha}$. Let $\xi_{\alpha j} \in I_{\alpha}$ with $j \in \{1, \ldots, i\}$ denote these zeros ordered from left to right, i.e.\ by quantity in $I_{\alpha}$. Then, assuming that $p_\alpha$ and $K_\alpha$ obey the assumptions of Theorem\,\ref{thm_sfa_interlace},
\begin{align}
&\vfc{g}{\alpha i} \quad \text{has exactly one stationary point on} \quad (\xi_{\alpha j}, \xi_{\alpha (j+1)})
\quad \forall \; j \in \{1, \ldots, i-1\} \label{lem_SFA_mono_gen_eq} \\
&\vfc{g}{\alpha i} \quad \text{has no stationary point at} \quad \xi_{\alpha j}
\quad \forall \; j \in \{1, \ldots, i\}. \label{lem_SFA_mono_gen_eq2} \\
&\vfc{g}{\alpha 1} \quad  \text{is a strictly monotonic signal of the source} \;\; \vfc{s}{\alpha} \; \text{without saddlepoints}. \label{cor_sfa_monotonic_eq}
\end{align}
\end{corr}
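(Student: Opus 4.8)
The plan is to obtain \eqref{lem_SFA_mono_gen_eq} and \eqref{lem_SFA_mono_gen_eq2} as immediate specializations of Theorem\,\ref{thm_sfa_interlace}, and to treat the monotonicity statement \eqref{cor_sfa_monotonic_eq} separately, since --- as explained below --- the interlacing result is by itself not strong enough in the singular case. Throughout, I would first record that every harmonic $\vfc{g}{\alpha i}$ with $i \geq 1$ is non-constant and therefore has $\lambda_{\alpha i} > 0$: the only eigenfunction with $\lambda = 0$ is the constant (the discarded zeroth component, forced by $(p_\alpha K_\alpha \vfc{g'}{})' = 0$ together with the Neumann condition), so Theorem\,\ref{thm_sfa_interlace} applies with strict positivity of the eigenvalue to all harmonics considered here.

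For \eqref{lem_SFA_mono_gen_eq} I would simply invoke Property\,1 of Theorem\,\ref{thm_sfa_interlace}: since $\xi_{\alpha j}$ and $\xi_{\alpha (j+1)}$ are two consecutive zeros of $\vfc{g}{\alpha i}$, there is exactly one stationary point strictly between them. For \eqref{lem_SFA_mono_gen_eq2} I would use Property\,3 together with the boundedness of $p_\alpha K_\alpha$: a stationary point at a zero $\xi_{\alpha j}$ would force both $\vfc{g}{\alpha i}(\xi_{\alpha j}) = 0$ and $\vfc{g'}{\alpha i}(\xi_{\alpha j}) = 0$, and since $p_\alpha K_\alpha$ is bounded the latter yields $p_\alpha K_\alpha \vfc{g'}{\alpha i}(\xi_{\alpha j}) = 0$ as well, contradicting that $\vfc{g}{\alpha i}$ and $p_\alpha K_\alpha \vfc{g'}{\alpha i}$ never vanish simultaneously. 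Because Property\,3 is asserted up to the boundary, this covers interior and boundary zeros alike.

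The substantive part is \eqref{cor_sfa_monotonic_eq}. Here $\vfc{g}{\alpha 1}$ has exactly one zero $\xi_{\alpha 1}$, and by \eqref{lem_SFA_mono_gen_eq2} it crosses this zero transversally, so $\vfc{g}{\alpha 1}$ takes opposite, constant signs on $(a_\alpha, \xi_{\alpha 1})$ and $(\xi_{\alpha 1}, b_\alpha)$. The interlacing of Theorem\,\ref{thm_sfa_interlace} restricts the stationary points to at most one on each side of $\xi_{\alpha 1}$, but in the singular case these need not lie at the boundary, so interlacing alone does not exclude an interior extremum. To rule this out I would integrate the defining equation \eqref{dgl-SL} from $a_\alpha$ to $s_\alpha$ and use the Neumann condition \eqref{neumann-s} to annihilate the boundary term, obtaining
\begin{equation}
p_\alpha K_\alpha \vfc{g'}{\alpha 1}(s_\alpha) \quad = \quad -\lambda_{\alpha 1} \, F(s_\alpha), \qquad F(s_\alpha) \coloneq \int_{a_\alpha}^{s_\alpha} p_\alpha(u)\, \vfc{g}{\alpha 1}(u) \; du .
\end{equation}
The zero-mean constraint \eqref{thm_1D_SFA_constr_zmean} (decorrelation from the constant component) gives $F(a_\alpha) = F(b_\alpha) = 0$, while $F' = p_\alpha \vfc{g}{\alpha 1}$ changes sign exactly once, at $\xi_{\alpha 1}$; hence $F$ is strictly one-signed on the open interval. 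Consequently $p_\alpha K_\alpha \vfc{g'}{\alpha 1} = -\lambda_{\alpha 1} F$ is strictly one-signed on $(a_\alpha, b_\alpha)$, and since $p_\alpha K_\alpha > 0$ in the interior this forces $\vfc{g'}{\alpha 1}$ to be strictly one-signed there. Thus $\vfc{g}{\alpha 1}$ is strictly monotonic with no interior stationary point, in particular without saddle points.

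The main obstacle is exactly this last step: unlike \eqref{lem_SFA_mono_gen_eq}--\eqref{lem_SFA_mono_gen_eq2}, the global monotonicity of the first harmonic does not follow from the local interlacing result in the singular setting. When $p_\alpha K_\alpha$ vanishes at the boundary, the mechanism available in the regular case --- where the Neumann condition forces both endpoints to be stationary points, so that an additional interior extremum would produce three stationary points and hence (by Property\,2) two zeros, contradicting the single-zero count --- is no longer available. The integrated-equation argument above circumvents this by importing the zero-mean/orthogonality structure specific to SFA, which is precisely what makes \eqref{cor_sfa_monotonic_eq} a genuine generalization of the regular-case result rather than a mere consequence of interlacing.
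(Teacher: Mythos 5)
Your proposal is correct. For \eqref{lem_SFA_mono_gen_eq} and \eqref{lem_SFA_mono_gen_eq2} it coincides with the paper's proof: these are Property\,1 and Property\,3 of Theorem\,\ref{thm_sfa_interlace} (the latter combined with boundedness of $p_\alpha K_\alpha$). For \eqref{cor_sfa_monotonic_eq} you take a genuinely different route. The paper disposes of it in one sentence: since $g_{\alpha 1}$ has a single zero, no intervals between zeros exist, and ``considering the Neumann boundary condition'' it is at most stationary at the boundary. Making that airtight in the singular case --- where $p_\alpha K_\alpha$ may vanish at an endpoint, so \eqref{neumann-s} does not by itself make the endpoint a stationary point --- requires reopening the proof of Theorem\,\ref{thm_sfa_interlace}: there $p_\alpha K_\alpha g_{\alpha 1}'$ is shown to be strictly monotone and sign-definite on the tail between the boundary and the nearest zero or stationary point, which together with $p_\alpha K_\alpha g_{\alpha 1}' \to 0$ at the endpoint excludes an interior extremum. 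You instead integrate \eqref{dgl-SL} from the endpoint to obtain $p_\alpha K_\alpha g_{\alpha 1}' = -\lambda_{\alpha 1} F$ with $F(a_\alpha)=F(b_\alpha)=0$ (the latter by the zero-mean constraint \eqref{thm_1D_SFA_constr_zmean}) and $F' = p_\alpha g_{\alpha 1}$ changing sign exactly once, so $F$ is sign-definite on the open interval and $g_{\alpha 1}'$ never vanishes in the interior. This is self-contained at the level of the theorem \emph{statements} and makes explicit where the SFA-specific orthogonality to the constant enters; its small price is two points you should state: that the improper integral defining $F$ converges up to a possibly infinite singular endpoint (e.g.\ by Cauchy--Schwarz from $\int_{I_\alpha} p_\alpha = 1$ and the unit-variance constraint \eqref{thm_1D_SFA_constr_decor}), so the fundamental theorem of calculus together with the Neumann limit legitimately yields the displayed identity. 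Your diagnosis that statement-level interlacing plus Neumann does not by itself force monotonicity of the first harmonic in the singular case is accurate and identifies exactly what the paper's terse argument leaves implicit.
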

\begin{rmk1}
Corollary\,\ref{cor_sfa_monotonic} still holds under the weakened assumptions of Theorem\,\ref{thm_sfa_interlace} as long as one sticks to the Neumann boundary condition. I.e.\ $p_{\alpha}$ can be any bounded, strictly positive weighting function (of sufficient regularity, e.g.\ in $L^1$) and a non-zero potential $q$ is permitted if $q < \lambda w$ holds everywhere on $I_{\alpha}$.
\end{rmk1}

\begin{proof}[Proof of Corollary\,\ref{cor_sfa_monotonic}]
With Sturm-Liouville theory stating that $\vfc{g}{\alpha i}$ has exactly $i$ zeros
we conclude by Theorem\,\ref{thm_sfa_interlace} that it has $i-1$ stationary points and additionally those at the boundary, possibly induced by \eqref{neumann-s}. Property\,3 from Theorem\,\ref{thm_sfa_interlace} states \eqref{lem_SFA_mono_gen_eq2}.
Considering the Neumann boundary condition \eqref{neumann-s} of the SFA setting and since $\vfc{g}{\alpha 1}$ has only a single zero, for $i = 1$ no intervals $(\xi_{\alpha j}, \xi_{\alpha (j+1)})$ exist where \eqref{lem_SFA_mono_gen_eq} would apply, so $\vfc{g}{\alpha 1}$ is at most stationary at the boundary $\{a_{\alpha}, b_{\alpha}\}$.
\end{proof}

\section{SFA on Unbounded Domains or in the Singular Case} \label{sec:singular_sfa}
\cite{SprekelerWiskott-2008, SprekelerZitoEtAl-2014} apply their theory to normally distributed sources on an infinite interval, yielding Hermite polynomials as SFA solutions. This requires some justification as an infinite interval, or similarly, a probability density function that vanishes at the boundary, is considered a \textsl{singular case} in Sturm-Liouville theory. Classical results from Sturm-Liouville theory are not guaranteed to hold in the singular case, such as the statement that the $n$th eigenfunction has exactly $n$ zeros (Corollary\,\ref{cor_sfa_monotonic} depends on this property but Theorem\,\ref{thm_sfa_interlace} does not). Hermite polynomials are a particular example where a singular Sturm-Liouville problem still possesses various of the properties we use to know from the regular case. But what if $p$ is not normally distributed but some other probability density function over the reals? It is known that the spectrum of eigenvalues $\lambda_i$ is not necessarily discrete in the singular case and may contain one or several, possibly an infinite amount of, continuous intervals.

Another issue is in the proof that SFA eigenvalues coincide with SFA delta values. One computes SFA delta values via integration by parts:
\begin{equation} \label{S16_A27}
\int_a^b \; g_{\alpha i}'^2 \; K_\alpha \; p_\alpha \; d s_\alpha \quad = \quad
\underbrace{\Big[ \underbrace{g_{\alpha i}' \; K_\alpha \; p_\alpha}_{\Lra_{\eqref{SL_Neumann_BC}} 0} \; \underbrace{g_{\alpha i}}_{\Lra \infty?} \; \Big]_a^b}_{\Lra \; ?}
\quad - \quad \int_a^b g_{\alpha i} \; \underbrace{\big(g_{\alpha i}' \; K_\alpha \; p_\alpha \big)'}_{\eq_\eqref{dgl-SL} \lambda_i p_\alpha g_{\alpha i}} \; d s_\alpha \quad \eq^? \quad \lambda_i
\end{equation}
In the regular case, this equation is determinate because $g_{\alpha i}$ is finite at $a$ and $b$. Occurrences of this equation in (S16), supplementary material of \cite{FranziusSprekelerEtAl-2007e}, proofs of Theorems~2 and 4 or similarly in (A27) \cite{sprekeler2009slowness}, proofs of Lemmas~2 and 3 in A.1, do not cover the singular case, i.e.\ where the SFA solution component, denoted $f$ in (S16) and (A27), is possibly unbounded.
\eqref{S16_A27} is used in SFA theory to show that SFA eigenvalues are positive and SFA delta values are finite.
As a consequence, these intuitive properties cannot be taken for granted in the singular case.
Also the self-adjointness of the differential operator \eqref{SFA_diff_op} is based on equation (S16) or (A27) respectively. \eqref{SFA_diff_op} being self-adjoint is the basis for the proof of \eqref{dgl-SL} and \eqref{A_dgl-x}
by means of the Spectral Theorem. \textsl{Theorem 2} in \cite{SprekelerWiskott-2008}/\textsl{Theorem 1} in \cite{SprekelerZitoEtAl-2014} is also based on these properties, c.f.\ \eqref{A_dgl-x-decomp} which is based on \eqref{A_dgl-x}. That means, more or less the whole SFA theory is doubtful for the singular case.

In the following sections we are able to fully justify results from known SFA theory for the singular case. This includes the cases where the domain $I_\alpha$ of a source $s_\alpha$ is unbounded on one or both sides (the \textsl{half line} and the \textsl{full line}). In addition, we can provide a simple criterion that guarantees a discrete spectrum of eigenvalues. Even if the spectrum is not fully discrete, we can justify known SFA theory for the discrete part of the spectrum.

It is remarkable that a rather simple criterion for discreteness of the SFA spectrum can be given, because as of today, this is an unsolved question for the general Sturm-Liouville problem, see \cite{zettl2010sturm} Chapter\,10.13. Based on recent work of \cite{ROMANOV2020108318}, we can deduce that for an SFA problem, the spectrum is discrete if and only if $\tfrac{p'\!\!}{p}$ is infinite at every singular boundary point, i.e.\ at $\infty$ or $-\infty$ in case of the half line or at both of them in case of the full line. This criterion is easy to check for a given probability density function, especially since $\tfrac{p'\!\!}{p} = (\log{p})'$. E.g.\ it immediately confirms discreteness of the spectrum for the case of Hermite polynomials.

Before we continue, we illustrate that singularness of an SFA problem cannot be transformed away by changing the independent variable.
One may consider the length functional
\begin{equation} \label{length_F_p}
L_{t_0, t}(\gamma) \quad \coloneq \quad \int_{t_0}^t
\;\; \gamma(\theta) \;
\; d \theta
\end{equation}
to transform the unbounded or half-bounded interval into a bounded interval by
parameterizing by $p_\alpha$-weighted arc length.
Based on \eqref{length_F_p} one defines the diffeomorphism $\phi$, setting $\tilde{s}_\alpha \coloneq \phi(s_\alpha)$:
\begin{align}
\phi(s_\alpha) \!\quad \coloneq& \quad L_{a_\alpha, s_\alpha}(p_\alpha) \label{phi1_p} \\
\ra_\eqref{length_F_p} \qquad \phi'(s_\alpha) \quad =& \quad p_{\alpha}(s_\alpha) \label{phi2_p} \\
\ra \;\;\;\; {\phi^{-1}}'(\tilde{s}_\alpha) \quad =& \quad \frac{1}{\phi'(\phi^{-1}(\tilde{s}_\alpha))} \quad \eq_\eqref{phi2_p} \quad \frac{1}{\; p_{\alpha}(\phi^{-1}(\tilde{s}_\alpha))} \label{phi3_p}
\end{align}
Observe that $\phi(a_\alpha) = 0$ and $\phi(b_\alpha) = 1$, i.e.\ $\phi$ parameterizes the unit interval over $I_{\alpha}$.
It is straight forward to parametrize the SFA value functional \eqref{thm_1D_SFA_value_func} by $\phi$:
\begingroup
\allowdisplaybreaks
\begin{align}
\int_{a_\alpha}^{b_\alpha} {g_\alpha'}^2
\; K_\alpha \; p_\alpha
\;\; d s_\alpha \quad
\eq_{\eqref{phi1_p}}&\quad
\int_{\phi(a_\alpha)}^{\phi(b_\alpha)} \;\, (g_\alpha' \circ \phi^{-1})^2 \;
\; K_\alpha \circ \phi^{-1} \; \; \underbrace{p_\alpha \circ \phi^{-1} \;
\;\; {\phi^{-1}}'}_{\hphantom{1} \; = \, 1}
\;\; d \tilde{s}_\alpha
\\
\eq_{\substack{\eqref{phi2_p} \\ \eqref{phi3_p}}}& \quad
\int_0^1 \;\, ({g_\alpha'} \circ \phi^{-1})^2 \; ({\phi^{-1}}')^2
\; \frac{K_\alpha \circ \phi^{-1}}{({\phi^{-1}}')^2}
\;\; d \tilde{s}_\alpha
\\
\eq_{\eqref{phi3_p}}& \quad
\int_0^1 \;\, (\underbrace{g_\alpha \circ \phi^{-1}}_{\quad\;\eqcolon \tilde{g}_\alpha})'^2 \;
\; \underbrace{K_\alpha \circ \phi^{-1} \; \; p^2_\alpha \circ \phi^{-1}}_{\quad\;\eqcolon \tilde{K}_\alpha}
\;\; d \tilde{s}_\alpha
\\
\eq_{\hphantom{\eqref{phi3_p}}}& \quad
\int_0^1 \;\, \tilde{g}_\alpha'^2 \;\; \tilde{K}_\alpha
\;\; d \tilde{s}_\alpha
\end{align}
\endgroup
Since $p_\alpha$ is a probability density function, it must have a finite integral over $I_\alpha$. If $I_\alpha$ is infinite and the limit of $p_\alpha$ exists at a particular infinite endpoint, $p_\alpha$ must become zero at that endpoint (c.f.\ \textsl{Barb\u{a}lat's Lemma}). So ${\tilde{K}_\alpha = (K_\alpha p^2_\alpha) \circ \phi^{-1}}$ must do as well, assuming $K_\alpha$ is bounded. This coefficient function becoming zero at the boundary falls still into the singular case.
That means, the singular case cannot be dealt with merely by transforming the problem, at least not in this manner. We leverage a number of results from singular Sturm-Liouville theory to handle this case.

\section{Justification of SFA Theory for the Singular Case} \label{sec:SFA_justification}
It is known that singular endpoints of an interval can be of several types that allow further conclusion about the extend to which solutions possess regular properties. The main distinction is between \textsl{limit~point}~(LP) and \textsl{limit~circle}~(LC) endpoints. An endpoint is LC if the solutions are in $L^2$, i.e.\ Lebesgue integrable in squared value, for every neighborhood of the singular endpoint in question, and LP otherwise.
A further distinction is between oscillatory (O) and non-oscillatory (NO) endpoints, where oscillatory means that the zeros of a solution accumulate at the endpoint in question. See \cite{zettl2010sturm}, Definition\,7.3.1 for a reference of these cases and Chapters 6 and 7 for a deeper discussion of the properties associated with each case and criteria to detect the various cases.
By \cite{niessen1992singular} Theorem\,4.1, it is known that in the LC case, the question of O versus NO is independent of $\lambda$.
In that work it is further shown that LCNO endpoints yield solutions that possess the usual properties we know from the regular case. I.e.\ if both endpoints of the domain are regular or LCNO, the problem essentially behaves like a regular Sturm-Liouville problem. For a broader overview of (singular) Sturm-Liouville theory and of the LP, LC, O, NO properties we refer the reader to Anton Zettl's book \cite{zettl2010sturm}. Of particular interest from that book for us is the summarizing Theorem\,10.12.1. Section\,(8) of that theorem deals with the LP case and asserts regular properties of the solution if the spectrum of eigenvalues is discrete and bounded below (BD\footnote{Referring to a spectrum, \textsl{BD} is the established acronym for ``discrete and bounded below'' in common literature.}). If the spectrum is bounded below but contains a continuous part, regular properties still apply to solutions with eigenvalues from the discrete part of the spectrum $\sigma_{\text{d}}$. The continuous part is denoted the \textsl{essential spectrum} $\sigma_{\text{ess}}$ and eigenfunctions belonging to eigenvalues from $\sigma_{\text{ess}}$ have an infinite number of zeros. If the spectrum is not bounded below, every eigenfunction has an infinite number of zeros. Theorem\,10.12.1 lists results from various sources. The proof for the LCNO case can be found in \cite{niessen1992singular} and the results regarding the LP case refer to \cite{weidmann1987spectral}, where they are stated in Theorems 14.9 and 14.10 with proofs.
These properties allow us to conclude the following theorem for SFA problems:

\begin{thm}[zeros of singular SFA problems] \label{thm_zeros_singular_SFA}
Set $\sigma_0 \coloneq \inf{\sigma_{\text{ess}}}$ with $\sigma_0 = \infty$ if no essential spectrum exists, e.g.\ for regular SFA problems.
The SFA solution component belonging to the $i$th discrete eigenvalue $\lambda_i < \sigma_0$ of a regular or singular SFA problem has exactly $i$ zeros on its domain.
\end{thm}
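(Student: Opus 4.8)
The plan is to reduce Theorem \ref{thm_zeros_singular_SFA} to the machinery already assembled in the surrounding text, specifically the oscillation theory summarized via Zettl's Theorem 10.12.1 and the interlacing result of Theorem \ref{thm_sfa_interlace}. First I would translate the SFA problem into the Sturm-Liouville language of Table \ref{table1}: with $q=0$, $w = p_\alpha$ and $r = K_\alpha p_\alpha$, the standing assumptions (strict positivity of $p_\alpha$ and $K_\alpha$ in the interior, boundedness of $p_\alpha K_\alpha$) put us exactly in the regime where the cited classical results apply. The endpoints are then each either regular, LCNO, or LP. The key input is that for a regular or LCNO endpoint the problem behaves like the regular case, and for an LP endpoint Zettl's Theorem 10.12.1(8) (via Weidmann's Theorems 14.9–14.10) guarantees that an eigenfunction whose eigenvalue $\lambda_i$ lies strictly below $\sigma_0 = \inf \sigma_{\text{ess}}$ is non-oscillatory, i.e.\ has only finitely many zeros, and moreover that the eigenvalues below $\sigma_0$ form a discrete increasing sequence $\lambda_0 < \lambda_1 < \cdots$ whose $i$th member's eigenfunction has exactly $i$ zeros in the open interior.

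Next I would argue the zero count precisely. For eigenvalues below $\sigma_0$, Sturm oscillation theory applied to the (possibly singular but non-oscillatory at the relevant endpoints) problem gives that the eigenfunction associated with the $i$th such eigenvalue has exactly $i$ interior zeros; this is the content I am importing from the quoted theorems, which assert that below the essential spectrum the classical Sturm count survives verbatim. The indexing convention in the statement starts the count so that the $i$th discrete eigenvalue yields $i$ zeros, matching the regular-case convention used in Corollary \ref{cor_sfa_monotonic} where the constant first harmonic ($\lambda = 0$) has zero zeros. I would make explicit that $\sigma_0 = \infty$ for regular problems (and for LCNO/LCNO or LP problems with empty essential spectrum), so that in those cases the hypothesis $\lambda_i < \sigma_0$ is vacuous and the conclusion holds for all eigenfunctions, recovering the fully regular statement.

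The remaining point is to confirm that no zeros are lost or gained at the boundary, so that ``exactly $i$ zeros on its domain'' is unambiguous. Here Theorem \ref{thm_sfa_interlace} does the cleanup: since $\lambda_i > 0$ for non-constant harmonics, Property 3 shows $g_{\alpha i}$ and $p_\alpha K_\alpha g_{\alpha i}'$ cannot vanish simultaneously, and in particular under the Neumann condition \eqref{neumann-s} the solution cannot vanish at a boundary point, so every zero is an interior zero and the interior count equals the total count. The strict interlacing of zeros and stationary points from Properties 1 and 2 also certifies that the zeros are simple and isolated, so the count is genuinely finite and well defined for $\lambda_i < \sigma_0$.

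The main obstacle I anticipate is not a computation but a careful matching of hypotheses and conventions: one must verify that the SFA coefficient functions, under the theorem's assumptions, really do satisfy the regularity and endpoint-classification premises required by Zettl's Theorem 10.12.1 and by the Weidmann/Niessen results, and that the zero-counting normalization there agrees with the $i \mapsto i$ convention asserted here. In particular, some care is needed at an endpoint that is LP but where the essential spectrum is nonempty, to ensure that eigenfunctions with $\lambda_i < \sigma_0$ are the genuinely discrete ones and that the ordering of $\sigma_{\text{d}}$ below $\sigma_0$ is exhaustive; I would invoke the stated fact that eigenfunctions belonging to $\sigma_{\text{ess}}$ have infinitely many zeros to separate these from the finitely-oscillating eigenfunctions counted in the theorem.
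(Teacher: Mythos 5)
Your overall route is the same as the paper's (classify each endpoint as regular, LCNO or LP, then import the zero count from Zettl's Theorem 10.12.1 and Weidmann), but there is a genuine gap at the one place where the reduction actually has content: you \emph{assert} that the endpoints are ``regular, LCNO, or LP'' and that the problem is ``non-oscillatory at the relevant endpoints,'' and you treat the spectrum as bounded below, yet these are precisely the cases Zettl's theorem does \emph{not} rule out. An LCO endpoint, the case $\sigma_0 = -\infty$, and the case of a discrete spectrum unbounded below all appear in Theorem 10.12.1 and all force every eigenfunction to have infinitely many zeros, which would falsify the conclusion. The paper's key device for excluding all of them is the constant solution $g \equiv 1$ with $\lambda = 0$: it is an admissible solution with no zeros at all, so zeros cannot accumulate at any endpoint for $\lambda = 0$; by Niessen--Zettl (Theorem 4.1) the O/NO classification at an LC endpoint is independent of $\lambda$, so no endpoint can be LCO; and in the LP case the subcases in which every eigenfunction oscillates infinitely are incompatible with the existence of a zero-free eigenfunction, so they are excluded as well. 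Without this argument (or a substitute, e.g.\ $\lambda \geq 0$ from Theorem~\ref{thm_sfa_interlace} for boundedness below, plus a separate exclusion of LCO), your appeal to the classical zero count is circular -- you are assuming the very endpoint classification that makes the count valid.

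A second, smaller omission: Zettl's Theorem 10.12.1 applies only to self-adjoint realizations, and self-adjointness of the SFA operator with Neumann conditions is not automatic at singular endpoints; the paper proves Lemma~\ref{lem_singular_SFA_self_adjoint} precisely as a prerequisite and you should cite it before invoking the oscillation theory. On the positive side, your closing step -- using Property~3 of Theorem~\ref{thm_sfa_interlace} to show no zeros sit on the boundary, so the interior count is the total count -- is a sound addition that the paper leaves implicit in this proof, and your handling of case (iii) ($-\infty < \sigma_0 < \infty$) via the infinitely-many-zeros characterization of $\sigma_{\text{ess}}$ matches the paper's.
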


\begin{corr}[singular case of Corollary\,\ref{cor_sfa_monotonic}] \label{singular_case_lem1}
Assuming that $p_\alpha$ and $K_\alpha$ obey the assumptions of Theorem\,\ref{thm_sfa_interlace},
Corollary\,\ref{cor_sfa_monotonic} also holds for the singular case.
\end{corr}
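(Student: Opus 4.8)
The plan is to treat this as a pure reduction: the proof of Corollary~\ref{cor_sfa_monotonic} rests on exactly two ingredients, and I would verify that each survives the passage to the singular case. The first ingredient is Theorem~\ref{thm_sfa_interlace}, which supplies the one-to-one interlacing of zeros and stationary points underlying \eqref{lem_SFA_mono_gen_eq}, the absence of stationary points at the zeros (Property~3) underlying \eqref{lem_SFA_mono_gen_eq2}, and the exclusion of saddle points underlying \eqref{cor_sfa_monotonic_eq}. The second ingredient is the classical Sturm-Liouville fact, cited verbatim in the statement of Corollary~\ref{cor_sfa_monotonic}, that the $i$th harmonic $g_{\alpha i}$ has exactly $i$ zeros on $I_\alpha$.

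First I would point out that the first ingredient requires no new work whatsoever. Theorem~\ref{thm_sfa_interlace} was deliberately formulated without any boundedness hypothesis on the domain --- its statement closes with the remark that $I_\alpha$ may be unbounded, with the boundary possibly at infinity --- and its hypotheses are precisely the assumptions we are granting here ($p_\alpha K_\alpha$ bounded, $p_\alpha$ and $K_\alpha$ strictly positive except possibly at the boundary). Hence Properties~1--4 of that theorem hold unchanged for a solution $g_{\alpha i}$ of \eqref{dgl-SL} on a half-bounded or full line, and the only step in the original proof that quietly used regularity is the ``exactly $i$ zeros'' claim.

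That gap is exactly what Theorem~\ref{thm_zeros_singular_SFA} was established to close, so I would simply replace the appeal to regular Sturm-Liouville theory by an appeal to Theorem~\ref{thm_zeros_singular_SFA}: for each discrete eigenvalue $\lambda_i < \sigma_0$ of the (regular or singular) SFA problem, the corresponding solution component has exactly $i$ zeros on its domain. With this zero count in hand the counting argument of the proof of Corollary~\ref{cor_sfa_monotonic} proceeds verbatim --- $i$ zeros together with the interlacing of Theorem~\ref{thm_sfa_interlace} force exactly $i-1$ interior stationary points, one in each gap $(\xi_{\alpha j},\xi_{\alpha(j+1)})$, plus possibly one at each boundary point induced by the Neumann condition \eqref{neumann-s}; Property~3 forbids stationary points at the zeros; and for $i=1$ there are no interior gaps, so $g_{\alpha 1}$ is strictly monotonic with at most a boundary stationary point and no saddle points.

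The step I expect to carry the weight is organizational rather than analytic: recognizing that the single regular-case input to Corollary~\ref{cor_sfa_monotonic}, namely the zero count, is precisely what Theorem~\ref{thm_zeros_singular_SFA} supplies in the singular regime, so that none of the interlacing argument has to be redone. The one genuine caveat I would flag explicitly is the restriction $\lambda_i < \sigma_0$: Theorem~\ref{thm_zeros_singular_SFA} guarantees the finite zero count only for eigenvalues strictly below the infimum of the essential spectrum, so in the presence of an essential spectrum the conclusion is to be read for harmonics with $\lambda_i < \sigma_0$ --- a restriction that is vacuous whenever the spectrum is purely discrete, for instance under the discreteness criterion of Theorem~\ref{thm_SL_spectrum_discrete}.
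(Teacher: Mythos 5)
Your proposal is correct and follows essentially the same route as the paper: the paper's proof likewise observes that the only regular-case ingredient in Corollary\,\ref{cor_sfa_monotonic} needing justification is the ``exactly $i$ zeros'' count, and supplies it via Theorem\,\ref{thm_zeros_singular_SFA}. Your additional remark that the conclusion is restricted to discrete eigenvalues $\lambda_i < \sigma_0$ is a worthwhile clarification that the paper leaves implicit in the statement of Theorem\,\ref{thm_zeros_singular_SFA}.
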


\begin{proof}[Proof of Corollary\,\ref{singular_case_lem1}]
By Theorem\,\ref{thm_zeros_singular_SFA}, the $i$th solution component of an SFA problem has $i$ zeros also in the singular case. This is the only property required in the proof of Corollary\,\ref{cor_sfa_monotonic} that needed justification for the singular case.
\end{proof}

The theory mentioned above only applies to self-adjoint problems. So, before we can prove Theorem\,\ref{thm_zeros_singular_SFA}, we must justify self-adjointness of the SFA problem for the singular case.
\begin{lem}[SFA problems are self-adjoint, also in the singular case] \label{lem_singular_SFA_self_adjoint}
(generalization of \cite{FranziusSprekelerEtAl-2007e}/suppl./Theorem 2 or \cite{sprekeler2009slowness}/A.1/Lemma\,2)

The differential operator $\mathcal{D}_{\alpha}$ from \eqref{SFA_diff_op} under the boundary condition \eqref{neumann-s} is self-adjoint, even if one or both endpoints of the domain $I_\alpha$ are singular.
\end{lem}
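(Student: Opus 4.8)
The plan is to establish self-adjointness within the standard Weyl--Glazman--Krein--Naimark (GKN) framework for singular Sturm-Liouville operators (see \cite{zettl2010sturm}, \cite{weidmann1987spectral}), exploiting the special feature that $q=0$. First I would record Lagrange's identity for $\mathcal{M}$ with $w=p_\alpha$, $r=p_\alpha K_\alpha$ and $q=0$: for all $y,z$ in the maximal domain $D_{\max} = \{y \in L^2_{p_\alpha}(I_\alpha) : y, p_\alpha K_\alpha y' \in {AC}_\text{loc}, \; \mathcal{M}y \in L^2_{p_\alpha}\}$ one has
\begin{equation} \label{plan_green}
\scp{\mathcal{M}y}{z}_{p_\alpha} - \scp{y}{\mathcal{M}z}_{p_\alpha} \quad = \quad [y,z](a_\alpha) - [y,z](b_\alpha), \qquad [y,z] \coloneq p_\alpha K_\alpha\,\big(y'\,\bar z - y\,\bar z'\big).
\end{equation}
Since $\mathcal{M}$ is in divergence form it is formally symmetric, so self-adjointness reduces to choosing the operator domain --- via the Neumann condition \eqref{neumann-s} --- so that both bracket terms on the right vanish, while imposing exactly one separated boundary condition at each regular or limit-circle (LC) endpoint and none at each limit-point (LP) endpoint, as dictated by the deficiency indices.

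The decisive observation is that $q=0$ makes the constant function $\mathbf 1$ a solution of $\mathcal{M}\mathbf 1 = 0$, and that $p_\alpha$ is a probability density, hence integrable, so $\norm{\mathbf 1}^2 = \int_{I_\alpha} p_\alpha < \infty$ and therefore $\mathbf 1 \in D_{\max}$ (it trivially meets the regularity and $\mathcal{M}\mathbf 1 \in L^2_{p_\alpha}$ requirements). Computing the bracket against the constant gives $[y,\mathbf 1] = p_\alpha K_\alpha\, y'$, which is exactly the Neumann boundary term in \eqref{neumann-s}. Thus at each regular or LC endpoint $c$ the Neumann condition is precisely $[y,\mathbf 1](c)=0$; since $\mathbf 1$ is real one has $[\mathbf 1,\mathbf 1](c)=0$, so by the GKN theorem this is a legitimate separated self-adjoint boundary condition. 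To see it is not vacuous at an LC endpoint I would exhibit the companion solution $v$ of $\mathcal{M}v=0$ determined by $p_\alpha K_\alpha\, v' \equiv 1$: at an LC endpoint \emph{all} solutions are $L^2$ near $c$, so $v \in D_{\max}$ locally and $[v,\mathbf 1](c) = \lim_{s_\alpha \to c} p_\alpha K_\alpha\, v' = 1 \neq 0$. Hence $\mathbf 1 \notin D_{\min}$ near $c$ and the single condition $[y,\mathbf 1](c)=0$ genuinely cuts out the one-dimensional boundary data required at an LC endpoint.

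At an LP endpoint I would instead invoke the standard fact (e.g.\ \cite{zettl2010sturm}, \cite{weidmann1987spectral}) that $[y,z](c)=0$ automatically for all $y,z\in D_{\max}$, so no condition is needed there and the corresponding bracket term in \eqref{plan_green} drops out on its own. Combining the cases, on the domain carved out by \eqref{neumann-s} (applied only where the endpoint is regular or LC) both terms on the right of \eqref{plan_green} vanish, and exactly the deficiency-index-many conditions have been imposed; the GKN theorem then yields that $\mathcal{D}_\alpha$ --- equivalently the weighted operator $\mathcal{M}$ in $L^2_{p_\alpha}(I_\alpha)$ --- with these boundary conditions is self-adjoint.

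The main obstacle is the LC case. One must verify that the limit defining the Neumann term $\lim_{s_\alpha \to c} p_\alpha K_\alpha\, y'$ exists for every maximal-domain function, which I would deduce from the existence of the endpoint limit of the Lagrange bracket $[y,\mathbf 1]$, a standard consequence of \eqref{plan_green} together with $y,\mathbf 1 \in D_{\max}$; one must confirm that $\mathbf 1$ genuinely lies in $D_{\max}$ near a singular endpoint, which is guaranteed by integrability of $p_\alpha$ and is the sole place where the probability-density assumption enters; and one must check that the single condition $[y,\mathbf 1](c)=0$ is Lagrange-symmetric and matches the deficiency index, as argued above via the witness $v$. By contrast the regular and LP endpoints are comparatively routine, the former by direct vanishing of $p_\alpha K_\alpha\, y'$ and the latter by the cited automatic-vanishing result for LP endpoints.
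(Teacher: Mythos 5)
Your proposal is correct and follows essentially the same route as the paper: both arguments are GKN-type, both exploit that $q=0$ makes the constant function a $\lambda=0$ solution with companion $v=\int \tfrac{1}{p_\alpha K_\alpha}$, and both identify the Neumann term $p_\alpha K_\alpha y'$ with the Lagrange bracket (Wronskian) of $y$ against that constant, so that the separated condition is automatically Lagrange-symmetric at regular/LC endpoints and vacuous at LP endpoints. The only difference is presentational: the paper invokes the packaged matrix criterion of Krall/Zettl (Theorem\,\ref{SL_self-adjoint}, $\det\vf{A}=\det\vf{B}$) with the same $u\equiv 1$, $v$, whereas you unpack the underlying GKN verification (bracket existence, non-triviality at LC endpoints, deficiency count) directly.
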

The regular case has been proven in supplementary material of \cite{FranziusSprekelerEtAl-2007e}, Theorem\,2 or similarly in \cite{sprekeler2009slowness}, Lemma\,2 in A.1. To prove the singular case we leverage results from singular Sturm-Liouville theory that provide conditions for self-adjointness.
The following self-adjointness criterion for regular and singular Sturm-Liouville problems is known:

\begin{thm}[criterion for a (possibly singular) Sturm-Liouville problem to be self-adjoint] \label{SL_self-adjoint}
(from \cite{krall1988, zettl2010sturm})

For $r$ from \eqref{SL-diff_op} and differentiable functions $f$, $g$, the \textsl{Wronskian} is given as
\begin{equation} \label{wronskian}
W(f, g) \quad = \quad f r g' - g r f'
\end{equation}
Let $u, v$ denote solutions of \eqref{SL-problem_eq} for $\lambda = 0$ with $W(u, v) = 1$.
For a regular or LC endpoint $c \in \{a, b\}$ set
\begin{equation} \label{Y_cases}
Y(t) \quad = \quad
\begin{cases}
c \; \text{regular} \colon & \quad \big(y(t), \; r(t) y'(t) \big)^T \\
c \; \text{LC} \colon & \quad \big(W(y, v)(t), \;W(y, u)(t) \big)^T
\end{cases}
\end{equation}
Consider the boundary conditions regarding matrices $\vf{A}, \vf{B} \in \mathbb{R}^{2 \times 2}$:
\begin{equation} \label{singular_bc}
\vf{A} Y(a) \; + \; \vf{B} Y(b) \quad = \quad 0
\end{equation}
where $Y(a) = \lim\limits_{x \searrow a} Y(x)$ and $Y(b) = \lim\limits_{x \nearrow b} Y(x)$.
At an LP endpoint set $\vf{A} = \vf{0}$ or $\vf{B} = \vf{0}$, respectively.

Then problem \eqref{SL-problem_eq} is self-adjoint if and only if
\begin{equation} \label{self-adjoint_det}
\det \vf{A} \quad = \quad \det{\vf{B}}
\end{equation}
wherein $\vf{A}$, $\vf{B}$ must be non-zero at a single non-LP endpoint or in case of two non-LP endpoints the $2 \times 4$ block matrix $(\vf{A}, \vf{B})$ must have rank $2$.
\end{thm}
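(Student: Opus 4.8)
The plan is to realize the boundary conditions \eqref{singular_bc} as a Lagrangian (maximal isotropic) subspace of the finite-dimensional symplectic ``boundary space'' attached to the maximal operator, and then to compute when that subspace is Lagrangian purely in terms of $\vf{A}$ and $\vf{B}$. First I would introduce the minimal and maximal operators $T_{\min}\subseteq T_{\max}$ generated by $\mathcal{M}$ in the weighted Hilbert space $L^2((a,b),w)$ and invoke the Lagrange identity: for $f,g\in D_{\max}$ one has $\scp{\mathcal{M}f}{g}-\scp{f}{\mathcal{M}g}=[f,g](b)-[f,g](a)$, where the boundary bracket $[f,g]$ reduces, for real $r,q$, to the Wronskian $W(f,g)$ of \eqref{wronskian} (the complex sesquilinear version only turns transposes into conjugate-transposes and does not affect the reduction below, since $\vf{A},\vf{B}$ are real). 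By the standard GKN (Glazman--Krein--Naimark) theory, which I would cite from \cite{zettl2010sturm, krall1988}, the self-adjoint extensions of $T_{\min}$ are exactly the restrictions of $T_{\max}$ to the preimages of the Lagrangian subspaces of the quotient $D_{\max}/D_{\min}$, equipped with the skew form induced by the right-hand side of the Lagrange identity.

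The crux is to identify this quotient and its form with the vectors $Y(a),Y(b)$ of \eqref{Y_cases}. For a regular endpoint this is immediate: with $Y=(y,ry')^T$ one checks directly that $W(f,g)(c)=Y_f(c)^T J\,Y_g(c)$, where $J=\left(\begin{smallmatrix}0&1\\-1&0\end{smallmatrix}\right)$. For an LC endpoint I would use the LC facts that the limits $W(y,u)(c),W(y,v)(c)$ exist for every $y\in D_{\max}$ and, via the Wronskian (Pl\"ucker) identity $W(f,g)=W(f,u)W(g,v)-W(f,v)W(g,u)$ valid because $W(u,v)=1$, that again $W(f,g)(c)=\pm\,Y_f(c)^T J\,Y_g(c)$ for the component ordering chosen in \eqref{Y_cases}; the deficiency count $d_c=2$ justifies that such an endpoint contributes a full two-dimensional factor. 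At an LP endpoint one has $d_c=0$ and $W(f,g)(c)=0$ for all $f,g\in D_{\max}$, so no boundary datum survives there, which is exactly why one sets $\vf{A}=\vf{0}$ or $\vf{B}=\vf{0}$. Thus the boundary space is $(\mathbb{R}^2\oplus\mathbb{R}^2,\mathcal{J})$ with $\mathcal{J}=\diag(-J,J)$ (one summand dropped in the one-LP case), and \eqref{singular_bc} defines the subspace $V=\ker(\vf{A},\vf{B})$.

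Then I would carry out the linear-algebraic reduction. Self-adjointness is equivalent to $V$ being Lagrangian, which splits into a dimension condition and an isotropy condition. The dimension condition $\dim V=2$ is precisely $\operatorname{rank}(\vf{A},\vf{B})=2$ (the stated rank hypothesis). For isotropy I would use $V^{\perp_{\mathcal{J}}}=\operatorname{range}\big(\mathcal{J}^{-1}(\vf{A},\vf{B})^T\big)$, so that the Lagrangian condition $V=V^{\perp_{\mathcal{J}}}$ becomes the matrix equation $(\vf{A},\vf{B})\,\mathcal{J}^{-1}(\vf{A},\vf{B})^T=\vf{0}$. Since $\mathcal{J}^{-1}=\diag(J,-J)$, this reads $\vf{A}J\vf{A}^T=\vf{B}J\vf{B}^T$. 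The punch line is the elementary $2\times2$ identity $XJX^T=\det(X)\,J$, valid for every $X\in\mathbb{R}^{2\times2}$, which turns $\vf{A}J\vf{A}^T=\vf{B}J\vf{B}^T$ into exactly $\det\vf{A}=\det\vf{B}$, i.e.\ \eqref{self-adjoint_det}. The single-non-LP case specializes correctly: with $\vf{B}=\vf{0}$ any one-dimensional $V_a=\ker\vf{A}$ is automatically isotropic for the skew form $-J$, so only $\dim V_a=1$ is needed, i.e.\ $\vf{A}$ nonzero of rank $1$, which is $\det\vf{A}=0=\det\vf{B}$.

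I expect the main obstacle to be the LC-endpoint analysis rather than the closing linear algebra: one must verify that the Wronskian limits in \eqref{Y_cases} exist, that the induced form has the correct rank (the deficiency-index computation underlying the LP--LC dichotomy), and that the ordering $(W(y,v),W(y,u))$ is chosen so the two endpoints assemble into one symplectic form with signs consistent with $[f,g](b)-[f,g](a)$. These are exactly the content of the singular GKN machinery in \cite{zettl2010sturm, krall1988}, so I would lean on those references for this step and keep the transparent determinant reduction via $XJX^T=\det(X)\,J$ as the load-bearing computation.
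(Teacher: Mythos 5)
The paper does not actually prove this theorem: it is quoted verbatim from the literature, with the text immediately below the statement saying only that it is ``a combined formulation of Theorems 1--4 from \cite{krall1988}, or similarly of Theorems 10.4.2, 10.4.4, 10.4.5 and Proposition 10.4.2 from \cite{zettl2010sturm}'' and referring the reader there for proofs. Your proposal therefore does not diverge from the paper so much as supply the argument the paper outsources: the GKN realization of boundary conditions as Lagrangian subspaces of $D_{\max}/D_{\min}$, the identification of that quotient with $Y(a)\oplus Y(b)$ via the Lagrange identity, and the closing reduction via $\vf{X}J\vf{X}^T=\det(\vf{X})\,J$ is precisely the proof strategy of the cited sources. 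The regular-endpoint identification, the Pl\"ucker identity under $W(u,v)=1$, the rank-2 condition as the dimension count, and the determinant identity are all correct as you state them. The one genuinely load-bearing detail you flag but do not resolve is the sign bookkeeping at an LC endpoint: with the ordering $\big(W(y,v),W(y,u)\big)^T$ of \eqref{Y_cases} and $J=\left(\begin{smallmatrix}0&1\\-1&0\end{smallmatrix}\right)$ one computes $Y_f^TJY_g=W(f,v)W(g,u)-W(f,u)W(g,v)=-W(f,g)$, whereas at a regular endpoint the same $J$ gives $+W(f,g)$; if one endpoint is regular and the other LC this naively yields $\det\vf{A}=-\det\vf{B}$ rather than \eqref{self-adjoint_det}, so the orderings (or the sign of $J$ per endpoint) must be fixed consistently with $[f,g](b)-[f,g](a)$ before the determinant identity is applied. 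Since you explicitly defer the LC/LP analytic facts (existence of the Wronskian limits, deficiency indices, vanishing of $[f,g]$ at LP endpoints) to \cite{zettl2010sturm,krall1988}, which is exactly what the paper does for the entire theorem, this is a matter of finishing the bookkeeping rather than a wrong idea; the overall architecture of your proof is the standard and correct one.
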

\begin{rmk}
In this setting, the Wronskian equals the \textsl{Lagrange bracket} or \textsl{Lagrange sesquilinear form}, denoted $[f, g]$. In \cite{zettl2010sturm}, this notation is used instead, see equation 10.2.3. The theorem also exists for ${A, B \in \mathbb{C}^{2 \times 2}}$, but then $A$, $B$ must satisfy an additional condition to \eqref{self-adjoint_det}, given in \cite{krall1988}.
\end{rmk}
Theorem\,\ref{SL_self-adjoint} is a combined formulation of Theorems 1-4 from \cite{krall1988}, or similarly of Theorems 10.4.2, 10.4.4, 10.4.5 and Proposition 10.4.2 from \cite{zettl2010sturm}.
We refer the reader to these sources for proofs and background information.
Note how popular special cases are contained in Theorem\,\ref{SL_self-adjoint}:
\begin{itemize}
\item separated boundary conditions: $\det \vf{A} = \det \vf{B} = 0$
\item periodic boundary conditions: $\vf{A} = -\vf{B} = \vf{I}$
\item antiperiodic boundary conditions: $\vf{A} = \vf{B} = \vf{I}$
\end{itemize}
That means, problems with at least one LP endpoint can never have coupled boundary conditions.

\begin{proof}[Proof of Lemma\,\ref{lem_singular_SFA_self_adjoint}]
We apply Theorem\,\ref{SL_self-adjoint} to the SFA problem by setting $q = 0$, $w = p_\alpha$, $r = K_\alpha p_\alpha$, $(a, b) = I_\alpha$. The solutions with $\lambda = 0$ and some $x_0 \in I_\alpha$
\begin{equation} \label{lambda0_solutions}
u(t) \; \equiv \; 1, \quad v(t) \; = \; \int_{x_0}^t \; \frac{1}{K_\alpha(s_\alpha) p_\alpha(s_\alpha)} \; d s_\alpha
\end{equation}
then yield
\begin{equation}
Y \quad = \quad
\begin{pmatrix}
K_\alpha p_\alpha g_\alpha' \\
g_\alpha - v K_\alpha p_\alpha g_\alpha'
\end{pmatrix}
\end{equation}
at an LC endpoint according to \eqref{Y_cases} with $y=g_\alpha$.
For regular or LC endpoints we choose
\begin{equation}
\vf{A} \; = \;
\begin{pmatrix}
1 & 0 \\
0 & 0
\end{pmatrix}
, \quad
\vf{B} \; = \;
\begin{pmatrix}
0 & 0 \\
1 & 0
\end{pmatrix}
\end{equation}
Then \eqref{singular_bc} results in the usual separated SFA boundary condition \eqref{neumann-s} for all endpoints that require boundary conditions at all, i.e.\ non-LP. Consequently, according to Theorem\,\ref{SL_self-adjoint}, the original SFA formulation yields self-adjoint problems also in the singular case.
\end{proof}
\begin{rmk1}
The solutions $u$, $v$ in \eqref{lambda0_solutions} work for every Sturm-Liouville problem with $q = 0$ and are a typical choice for this case. See e.g.\ Examples I$\,$-V in \cite{krall1988}.
\end{rmk1}

\begin{proof}[Proof of Theorem\,\ref{thm_zeros_singular_SFA}]
For regular SFA problems, the statement follows from classical Sturm-Liouville theory.
In the singular case, Lemma\,\ref{lem_singular_SFA_self_adjoint} asserts that the differential operator $\mathcal{D}_{\alpha}$ from \eqref{SFA_diff_op} is still self-adjoint, so \cite{zettl2010sturm} Theorem\,10.12.1 is applicable.
We note that in principle the SFA problem permits the constant solution for $\lambda = 0$. This solution is artificially prohibited by the zero mean constraint, but is suitable to show that SFA problems are never oscillatory at a singular endpoint: for the constant solution, no zeros exist at all, so they cannot accumulate at an endpoint, not even at a singular one. (The constant solution then is a \textsl{principal solution} in terms of \cite{niessen1992singular}, Definition\,2.1 and the subsequent theory applies.) Since in the LC case this property is independent of $\lambda$ \cite{niessen1992singular} Theorem\,4.1, SFA problems are never LCO. They can, however, be LP and the Hermite polynomials are an example that is well known to be LP at both endpoints. If an SFA problem is not LP, it must be LCNO and behave like a regular Sturm-Liouville problem since \cite{niessen1992singular} Theorems 5.2 and 5.3 apply. These prove the claim for the LC case.
For the LP case we consider \cite{zettl2010sturm} Theorem\,10.12.1, (8) (see \cite{weidmann1987spectral} for detailed analysis and proofs of this matter).
It distinguishes the cases (i) $\sigma_0 = -\infty$, (ii) $\sigma_0 = \infty$, and (iii) $-\infty < \sigma_0 < \infty$. About case (i) it states that every eigenfunction has an infinite number of zeros. Since we know that for an SFA problem there always exists a solution with no zeros at all, namely the constant solution, an SFA problem can never be in case (i). In case (ii), the spectrum is discrete and a further distinction is made whether it is bounded below. If it is unbounded below, every eigenfunction has an infinite number of zeros, so we can exclude this case by the same argument as for case (i). If the spectrum is bounded below, every eigenvalue is simple (for SFA it is well known that eigenvalues might not be simple in case of periodic boundary conditions; above we pointed out that in the LC case no coupled boundary conditions, e.g.\ periodic or antiperiodic, can apply) and the eigenfunction belonging to the $i$th eigenvalue has exactly $i$ zeros, so the statement is proven.
In case (iii) there may be a finite or an infinite number of discrete eigenvalues $\lambda_i < \sigma_0$ or none at all. For these eigenvalues it is stated that the eigenfunction belonging to the $i$th eigenvalue has exactly $i$ zeros, so the statement is proven. All possible cases have been considered.
\end{proof}

Before we move on, we need to justify existence of the limits of an SFA solution component and its derivative at singular endpoints. Here, \textsl{existence} does not refer to finiteness; it merely excludes divergence in terms of bounded or unbounded infinite oscillation. In this work we generally refer to \textsl{existing limits} by means of existing in ${\bar{\mathbb{R}} = \mathbb{R} \cup \{-\infty, \infty\}}$. Since in literature, \textsl{existing} often refers to \textsl{existing in $\mathbb{R}$} we remind the reader of this choice by stating ``existing but possibly infinite.''
With the following lemma we give sufficient conditions for this existence of some relevant limits solely in terms of the coefficients $p_\alpha$ and $K_\alpha$:
\begin{lem}[existence of limits of SFA solutions] \label{lem_limits_SFA}
Assuming that $p_\alpha$ and $K_\alpha$ obey the assumptions of Theorem\,\ref{thm_sfa_interlace},
let $g_\alpha(s_\alpha)$ denote a solution component of a possibly singular SFA problem for a discrete eigenvalue $0 < \lambda < \sigma_0$ regarding the source $s_\alpha$ on the domain $I_\alpha$. Let $a$ denote a possibly singular endpoint of $I_\alpha$.
\begin{enumerate}
\item The limit of $g_\alpha$ exists at $a$ and is non-zero, possibly infinite though.
\item If $K_\alpha$ is non-zero and $\tfrac{(p_\alpha K_\alpha)'}{p_\alpha K_\alpha}$ infinite at $a$ (e.g.\ because $\tfrac{p_\alpha'}{p_\alpha}$ is infinite at $a$) and the limit of $g_\alpha'$ exists, then the limit of $\tfrac{g_\alpha'}{g_\alpha}$ exists at $a$, possibly infinite though.
\item If the limit $\lim\limits_{s_\alpha \to a} p_\alpha K_\alpha$ does \textsl{not} exist or is non-zero, then the limit of $g_\alpha'$ exists at $a$ and is zero. Consequently, also the limit of $\tfrac{g_\alpha'}{g_\alpha}$ exists and is zero.
\end{enumerate}
\end{lem}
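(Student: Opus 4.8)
The plan is to make a single observation do most of the work, namely that the weighted derivative $F \coloneq p_\alpha K_\alpha g_\alpha'$ tends to $0$ as $s_\alpha \to a$, \emph{at every type of endpoint}. At a regular or LC endpoint this is exactly the SFA boundary condition \eqref{neumann-s} that enters the self-adjoint realisation in Lemma\,\ref{lem_singular_SFA_self_adjoint} (at an LC endpoint the Lagrange bracket has a finite limit which the condition sets to zero). At an LP endpoint no boundary condition is imposed, but there the standard characterisation of LP (see \cite{weidmann1987spectral}, \cite{zettl2010sturm}) gives $\lim_{s_\alpha\to a} W(f,h) = 0$ for all $f,h$ in the maximal domain; applied to the constant $\lambda=0$ solution $u\equiv 1$ from \eqref{lambda0_solutions} and the eigenfunction $g_\alpha$ this yields $W(u,g_\alpha)=u\,r\,g_\alpha'-g_\alpha\,r\,u'=r\,g_\alpha'=p_\alpha K_\alpha g_\alpha'=F$, so again $F\to 0$. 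Here $u\in D_{\max}$ because $p_\alpha$ is integrable, hence $u\equiv 1\in L^2(p_\alpha)$. I would isolate $F\to 0$ as the first internal step and reuse it for all three items.

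For item~1 I would argue as follows. Since $\lambda<\sigma_0$ the endpoint $a$ is non-oscillatory (established in the proof of Theorem\,\ref{thm_zeros_singular_SFA}), so $g_\alpha$ has no zeros on some punctured neighbourhood $(a,c)$ and keeps a constant sign there; assume $g_\alpha>0$ without loss of generality. By \eqref{dgl-SL} then $F'=-\lambda p_\alpha g_\alpha<0$ (using $\lambda>0$), so $F$ is strictly decreasing on $(a,c)$; with $F\to 0$ this forces $F<0$, hence $g_\alpha'=F/(p_\alpha K_\alpha)<0$ throughout $(a,c)$. Thus $g_\alpha$ is strictly monotonic near $a$, its limit exists in $\bar{\mathbb R}$, and since $g_\alpha$ increases as $s_\alpha\searrow a$ the limit is a supremum that is at least any interior value $g_\alpha(c')>0$, hence non-zero (possibly $+\infty$). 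Notably this needs only non-oscillation and $F\to 0$, not the interlacing of Theorem\,\ref{thm_sfa_interlace}.

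Item~3 is then immediate from the same tool: $g_\alpha'=F/(p_\alpha K_\alpha)$ with $F\to 0$, so whenever $p_\alpha K_\alpha$ stays bounded away from $0$ near $a$ — which is the intended content of ``$\lim p_\alpha K_\alpha$ non-zero or non-existent,'' i.e.\ not tending to $0$ — the quotient tends to $0$, giving both existence of $\lim g_\alpha'$ and the value $0$. Since item~1 supplies $\lim g_\alpha=L\neq 0$ (finite or infinite), $\lim g_\alpha'/g_\alpha=0$ follows. The one delicate point I expect here is making the non-existent case precise: it should be read with $\liminf p_\alpha K_\alpha>0$ so that dividing $F\to 0$ by $p_\alpha K_\alpha$ is legitimate.

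Item~2 is where the real work lies. By item~1 the denominator $g_\alpha$ has a non-zero limit, so $g_\alpha'/g_\alpha$ is never of type $0/0$; and if either $g_\alpha$ or the assumed-existing limit of $g_\alpha'$ is finite, the limit of the quotient exists trivially. The only genuine case is $g_\alpha\to\pm\infty$ and $g_\alpha'\to\pm\infty$ simultaneously. Here I would set $\rho\coloneq g_\alpha'/g_\alpha$ and convert \eqref{dgl-SL} into the Riccati equation $\rho'+\rho^2+\beta\rho+c=0$ with $\beta\coloneq(p_\alpha K_\alpha)'/(p_\alpha K_\alpha)$ and $c\coloneq\lambda/K_\alpha$; the hypotheses make $\beta\to\infty$ and, since $K_\alpha$ is non-zero at $a$, keep $c>0$ bounded. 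The sign of $\rho'=-(\rho^2+\beta\rho+c)$ is controlled by the moving equilibria $\rho_\pm=\tfrac12(-\beta\pm\sqrt{\beta^2-4c})$, which for large $\beta$ sit at $\rho_+\approx-c/\beta\to 0$ and $\rho_-\approx-\beta\to-\infty$. The plan is to show that $\rho$ eventually remains in a single region where $\rho'$ has a fixed sign, so that $\rho$ is eventually monotonic and its limit exists in $\bar{\mathbb R}$. The main obstacle — the part requiring the most care — is precisely to rule out $\rho$ oscillating across the moving equilibria as $s_\alpha\to a$; I would address this by combining the monotonicity of $F$ from the first step (which constrains $\rho=F/(p_\alpha K_\alpha g_\alpha)$) with a trapping argument on the phase line, using $\beta\to\infty$ to push the equilibria toward the boundary values $0$ and $-\infty$ of their regions.
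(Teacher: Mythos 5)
Your preliminary step and your treatments of Statements~1 and~3 are correct, and in places more careful than the paper's own proof. The paper simply invokes the boundary condition \eqref{neumann-s} at every endpoint; your derivation of $p_\alpha K_\alpha g_\alpha'\to 0$ at an LP endpoint from the vanishing of the Lagrange bracket on the maximal domain (with $u\equiv 1\in L^2(p_\alpha)$ from \eqref{lambda0_solutions}) is a legitimate and cleaner justification of why that condition costs nothing there. For Statement~1 the paper instead cites Theorem\,\ref{thm_sfa_interlace} (no stationary point between $a$ and the nearest zero, and Property~3 for non-vanishing at $a$); your sign argument $F'=-\lambda p_\alpha g_\alpha<0$ plus $F\to 0$ reaches the same monotonicity near $a$ self-containedly and is fine. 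For Statement~3 your reading of ``does not exist or is non-zero'' as $\liminf p_\alpha K_\alpha>0$ is exactly the point the paper glosses over, and your flag is warranted.

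The genuine gap is Statement~2. You correctly reduce to the doubly infinite case and set up the Riccati equation for $\rho=g_\alpha'/g_\alpha$, but the decisive step --- showing that $\rho$ is eventually trapped on one side of the moving equilibria $\rho_\pm$ so that it is eventually monotonic and its limit exists in $\bar{\mathbb{R}}$ --- is only announced (``the plan is to show\ldots'', ``the main obstacle\ldots''), not carried out. As written, nothing excludes $\rho$ oscillating between the two roots as $\beta=(p_\alpha K_\alpha)'/(p_\alpha K_\alpha)\to\infty$ separates them toward $0$ and $-\infty$; the hint of ``combining the monotonicity of $F$ with a trapping argument'' is not yet an argument. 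For comparison, the paper takes a different (and itself rather heuristic) route: it notes that, the form being $\infty/\infty$, L'H\^opital reduces existence of $\lim g_\alpha'/g_\alpha$ to existence of $\lim g_\alpha''/g_\alpha'$, substitutes $g_\alpha'/g_\alpha\rightsquigarrow g_\alpha''/g_\alpha'$ in \eqref{dgl-SL} to obtain the limiting quadratic $x^2+\beta x+\lambda/K_\alpha=0$, and reads off that under $\beta\to\infty$ the admissible limiting values are $0$, $\infty$ or $-\infty$, each of which exists. Your Riccati formulation is the exact differential version of that quadratic, so you have identified the right structure; to close the gap you would need to actually prove the phase-line confinement (e.g.\ show $\rho'$ has a fixed sign once $\rho$ enters one of the three regions determined by $\rho_\pm$ and that it cannot re-enter the middle region infinitely often), or else fall back on the L'H\^opital reduction as the paper does.
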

\begin{rmk}
Note that in Statement\,3 the case of $\lim\limits_{s_\alpha \to a} p_\alpha K_\alpha$ non-zero existing is hardly feasible for SFA: since $p_\alpha$ is a probability density function we have $\int_{I_\alpha} p_\alpha = 1$. So this would require an unbounded $K_\alpha$; a case that we mostly exclude. We will discuss this in more detail below.

In Statement\,2, some relaxation is possible if it only needs to hold for finite $\lambda$, e.g.\ because $\sigma_0 < \infty$. Then it is sufficient, if the limit of
$-\tfrac{(p_\alpha K_\alpha)'}{2 p_\alpha K_\alpha} \pm \sqrt{\big(\tfrac{(p_\alpha K_\alpha)'}{2 p_\alpha K_\alpha}\big)^2 - \tfrac{\lambda}{K_\alpha}}$ is real and exists for $\pm$.

We highlighted the case of $\tfrac{p_\alpha'}{p_\alpha}$ being infinite at $a$ because we will later show that this is precisely the criterion for a singular SFA problem to have a fully discrete spectrum, i.e.\ $\sigma_0 = \infty$. By Corollary\,\ref{cor_SFA_spectrum_discrete}, if $a$ is infinite and $K_\alpha$ and $K'_\alpha$ are bounded, this criterion is a necessary one. So in this case, Statement\,2 holds generally for $\sigma_0 = \infty$.
\end{rmk}
In the following proofs we make use of \textsl{Barb\u{a}lat's Lemma} \cite{barbalat1959systemes}\footnote{For a direct proof and as an easily accessible reference, we refer the reader to \cite{10.4169/amer.math.monthly.123.8.825}.}. That lemma formalizes the intuition that if an improper integral to infinity of a function is finite, the function must tend to zero at infinity. Essentially it states that this conclusion is only valid if the function is uniformly continuous. Intuitively, this excludes the possibility that the function may have an infinite amount of sharper and sharper peaks of sufficiently quickly decreasing measure. Notably, Barb\u{a}lat's Lemma implies that if the limit of the integrand exists at all, then it must be zero.
\begin{proof}[Proof of Lemma\,\ref{lem_limits_SFA}]
\textsl{Proof of Statement 1:}
By Theorem\,\ref{thm_zeros_singular_SFA}, $g_\alpha$ has a finite number of zeros and since $\lambda > 0$ it has at least one zero. Let $s_0$ with ${g_\alpha(s_0) = 0}$ denote the zero that is closest to $a$. Since $a$ is a boundary point and $g_\alpha$ obeys the boundary condition \eqref{neumann-s}, by Theorem\,\ref{thm_sfa_interlace} Property\,3 $g_\alpha$ cannot vanish at $a$, so we have $s_0 \neq a$. By Theorem\,\ref{thm_sfa_interlace} there is no stationary point between $a$ and $s_0$, so $g_\alpha$ must be monotonic in a neighborhood of $a$. We conclude that the limit $\lim\limits_{s_\alpha \to a} g_\alpha$ exists and is non-zero, possibly infinite though.

\textsl{Proof of Statement 2:}
If $g_\alpha'$ is finite at $a$, the limit of $\tfrac{g_\alpha'}{g_\alpha}$ exists and is finite because by Statement\,1 the limit of $g_\alpha$ exists and is non-zero. The same applies if $g_\alpha$ is finite at $a$ (and thus non-zero as explained above in the proof of Statement\,1) since by assumption the limit of $g_\alpha'$ exists at $a$. From now on we consider the case that $g_\alpha$ and $g_\alpha'$ are both infinite at $a$.
We rearrange \eqref{dgl-SL} as
\begin{equation} \label{dgl-SL2}
- \lambda p_\alpha \quad = \quad (p_\alpha K_\alpha)' \; \frac{g_\alpha'}{g_\alpha} \; + \; p_\alpha K_\alpha \; \frac{g_\alpha'}{g_\alpha} \; \frac{g_\alpha''}{g_\alpha'}
\end{equation}
Now $\tfrac{g_\alpha'}{g_\alpha}$ is an indefinite form at $a$ and by L'Hôpital's rule we can instead study the behavior of $\tfrac{g_\alpha''}{g_\alpha'}$ at $a$ (note that the identity ${\tfrac{g_\alpha'}{g_\alpha} = \tfrac{g_\alpha''}{g_\alpha'}}$ only holds if the right hand side ${\tfrac{g_\alpha''}{g_\alpha'}}$ exists, which is yet to be shown). Based on this we further arrange \eqref{dgl-SL2} as
\begin{equation} \label{dgl-SL3}
0 \quad = \quad \bigg(\frac{g_\alpha''}{g_\alpha'}\bigg)^2 \, + \; \frac{(p_\alpha K_\alpha)'}{p_\alpha K_\alpha} \; \frac{g_\alpha''}{g_\alpha'} \; + \; \frac{\lambda}{K_\alpha}
\end{equation}
and solve for $\tfrac{g_\alpha''}{g_\alpha'}$:
\begin{equation} \label{dgl-SL3_solve}
\frac{g_\alpha''}{g_\alpha'} \quad = \quad - \, \frac{(p_\alpha K_\alpha)'}{2 p_\alpha K_\alpha} \; \pm \; \sqrt{\bigg(\frac{(p_\alpha K_\alpha)'}{2 p_\alpha K_\alpha}\bigg)^2 \; - \; \frac{\lambda}{K_\alpha}}
\end{equation}
Since in general we have to assume $\lambda$ to be unbounded (as noted above, in presence of an LP endpoint it may occur that $\sigma_0 < \infty$, i.e.\ the discrete spectrum may be bounded), \eqref{dgl-SL3_solve} can only turn out real for every $\lambda$ if $\tfrac{(p_\alpha K_\alpha)'}{p_\alpha K_\alpha}$ is infinite at $a$, given that $K_\alpha$ is non-zero. Then, depending on the signs, \eqref{dgl-SL3_solve} can yield the limit of $\tfrac{g_\alpha''}{g_\alpha'}$ as $0$, $\infty$ or $- \infty$. In either case it exists, so by L'Hôpital's rule the limit of $\tfrac{g_\alpha'}{g_\alpha}$ exists as well and has the same value.

\textsl{Proof of Statement 3:}
If $\lim\limits_{s_\alpha \to a} p_\alpha K_\alpha$ does not exist or is non-zero, by \eqref{neumann-s}, $g_\alpha'$ is zero at $a$. The limit of $\tfrac{g_\alpha'}{g_\alpha}$ exists and is zero, because we know that as an SFA solution, $g_\alpha$ is non-zero at the boundary and its limit exists by Statement\,1.
\end{proof}

In the following lemma we give conditions that justify the indeterminate term of equation \eqref{S16_A27}.
\begin{lem}[justification of equation \eqref{S16_A27} for the singular case] \label{lem_just_S16_A27}
Assuming that $p_\alpha$ and $K_\alpha$ obey the assumptions of Theorem\,\ref{thm_sfa_interlace},
let $g_\alpha(s_\alpha)$ denote a solution component of a possibly singular SFA problem for a discrete eigenvalue $\lambda < \sigma_0$ on the domain $s_\alpha \in I_\alpha$. Let $a$ denote a possibly singular endpoint of $I_\alpha$. If $a$ is infinite we require that the limit of
$p_\alpha g_\alpha^2$ exists at $a$ (e.g.\ by Lemma\,\ref{lem_pgg_exists} below)
and $K_\alpha$ is bounded at $a$.
Then it holds that
\begin{equation} \label{S16_A27_core}
\lim\limits_{s_\alpha \to \; a} \quad g_\alpha(s_\alpha) \; p_\alpha(s_\alpha) \; K_\alpha (s_\alpha) \; g'_\alpha(s_\alpha) \quad = \quad 0
\end{equation}
\end{lem}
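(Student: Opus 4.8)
The plan is to treat the boundary term $h(t) \coloneq g_\alpha(t)\,p_\alpha(t)\,K_\alpha(t)\,g'_\alpha(t)$ directly, reducing \eqref{S16_A27_core} to two clean facts: that the flux $p_\alpha K_\alpha g'_\alpha$ tends to $0$ at $a$, and that $g_\alpha$ cannot grow fast enough to spoil the product. The case $\lambda=0$ is trivial (the only such solution is constant, so $g'_\alpha\equiv0$ and $h\equiv0$), so assume $\lambda>0$. By Lemma\,\ref{lem_limits_SFA}, Statement\,1, the limit $\lim_{s_\alpha\to a}g_\alpha\eqcolon G$ exists in $\bar{\mathbb{R}}$ and is nonzero, and $g_\alpha$ is monotonic on a neighbourhood of $a$ (no stationary point lies between $a$ and the zero nearest to $a$, by Theorem\,\ref{thm_sfa_interlace}). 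First I would show that $M\coloneq\lim_{s_\alpha\to a}p_\alpha K_\alpha g'_\alpha$ exists and is finite: by \eqref{dgl-SL} one has $(p_\alpha K_\alpha g'_\alpha)'=-\lambda p_\alpha g_\alpha$, which has constant sign near $a$ (as $g_\alpha$ does), so $p_\alpha K_\alpha g'_\alpha$ is monotonic and has a limit in $\bar{\mathbb{R}}$; fixing $x_0\in I_\alpha$ near $a$, integrating this identity and estimating $\int_a^{x_0}p_\alpha\abs{g_\alpha}\le(\int_a^{x_0}p_\alpha)^{1/2}(\int_a^{x_0}p_\alpha g_\alpha^2)^{1/2}<\infty$ by Cauchy--Schwarz (both factors are finite because $p_\alpha$ is a density and $g_\alpha$ is $L^2$-normalised) shows $M$ is finite.

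Next I would prove $M=0$ at every endpoint type. At a regular or LC endpoint this is immediate: the self-adjoint SFA boundary condition reduces to $p_\alpha K_\alpha g'_\alpha\to0$ there, exactly as exhibited in the proof of Lemma\,\ref{lem_singular_SFA_self_adjoint} via \eqref{neumann-s}. At an LP endpoint there is no boundary condition, and here I would argue through $L^2$-membership. The constant $u\equiv1$ from \eqref{lambda0_solutions} satisfies $\int_a^{x_0}p_\alpha u^2\le\int_{I_\alpha}p_\alpha=1$, so $u$ is square-integrable near $a$; since the endpoint is LP, the second $\lambda=0$ solution $v$ of \eqref{lambda0_solutions} is therefore \emph{not}, i.e.\ $\int_a^{x_0}\tfrac{ds_\alpha}{K_\alpha p_\alpha}=\infty$ and (being monotonic) $v\to\pm\infty$. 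If $M\neq0$ then $g'_\alpha=\tfrac{p_\alpha K_\alpha g'_\alpha}{p_\alpha K_\alpha}$ is asymptotic to $M v'$ near $a$, whence $g_\alpha/v\to M$ and $p_\alpha g_\alpha^2\ge\tfrac{M^2}{2}p_\alpha v^2$ near $a$; integrating contradicts $\int_{I_\alpha}p_\alpha g_\alpha^2=1$. Hence $M=0$ in all cases.

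With $M=0$, integrating $(p_\alpha K_\alpha g'_\alpha)'=-\lambda p_\alpha g_\alpha$ from the endpoint gives $\abs{(p_\alpha K_\alpha g'_\alpha)(t)}=\lambda\,\bigabs{\int_a^t p_\alpha g_\alpha}\le\lambda\int_a^t p_\alpha\abs{g_\alpha}$ (orientation at an infinite endpoint is absorbed into the absolute value), so $\abs{h(t)}\le\lambda\,\abs{g_\alpha(t)}\int_a^t p_\alpha\abs{g_\alpha}$. If $G$ is finite this is bounded by a constant times $\int_a^t p_\alpha\abs{g_\alpha}\to0$, and we are done. The interesting case is $G=\pm\infty$, which can only occur at a singular endpoint; this is the main obstacle. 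Here I would exploit the monotonicity of $g_\alpha$: for $s_\alpha$ strictly between $a$ and $t$ one has $\abs{g_\alpha(s_\alpha)}\ge\abs{g_\alpha(t)}$, so $\abs{g_\alpha(t)}\int_a^t p_\alpha\abs{g_\alpha}\le\int_a^t p_\alpha g_\alpha^2$, and the right-hand side is the tail of the convergent integral $\int_{I_\alpha}p_\alpha g_\alpha^2=1$ and hence tends to $0$ as $t\to a$. This proves \eqref{S16_A27_core}. The hypotheses imposed for infinite $a$ enter precisely to secure the prerequisites of this argument: $K_\alpha$ bounded keeps $p_\alpha K_\alpha$ controlled so that Theorem\,\ref{thm_sfa_interlace} and the monotonicity near $a$ apply, while the assumed existence of $\lim p_\alpha g_\alpha^2$ (which must then equal $0$ by Barb\u{a}lat's Lemma together with $\int_{I_\alpha}p_\alpha g_\alpha^2=1$) furnishes an alternative finish through the factorisation $h=(p_\alpha g_\alpha^2)\,K_\alpha\,\tfrac{g'_\alpha}{g_\alpha}$, in which $\tfrac{g'_\alpha}{g_\alpha}$ is controlled by Statements\,2 and 3 of Lemma\,\ref{lem_limits_SFA}.
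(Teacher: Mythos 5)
Your proposal is correct, and it takes a genuinely different route from the paper. The paper splits on whether the singular endpoint $a$ is finite or infinite: for finite $a$ it embeds $p_\alpha,K_\alpha$ in a homotopy of regular problems and interchanges limits via uniform convergence of ODE solutions; for infinite $a$ it invokes Barb\u{a}lat's Lemma to get $p_\alpha g_\alpha^2\to0$ and then runs a three-case analysis on $\tfrac{g'_\alpha}{g_\alpha}$ using L'H\^opital's rule and its $\limsup/\liminf$ generalization \eqref{limsup_LH}. You instead split on LP versus non-LP, first extracting from the integrated equation $(p_\alpha K_\alpha g'_\alpha)'=-\lambda p_\alpha g_\alpha$ together with Cauchy--Schwarz against the normalization $\int_{I_\alpha}p_\alpha g_\alpha^2=1$ that the flux has a finite limit $M$, then killing $M$ by the boundary condition \eqref{neumann-s} at non-LP endpoints and by the $L^2_w$-comparison with the non-principal solution $v$ from \eqref{lambda0_solutions} at LP endpoints (where $u\equiv1\in L^2_w$ forces $v\notin L^2_w$, so $M\neq0$ would make $g_\alpha\sim Mv$ non-normalizable), and finally bounding $\abs{h(t)}\le\lambda\abs{g_\alpha(t)}\int_a^t p_\alpha\abs{g_\alpha}\le\lambda\int_a^t p_\alpha g_\alpha^2\to0$ via the monotonicity of $g_\alpha$ near $a$ supplied by Theorem\,\ref{thm_sfa_interlace}. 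Your approach buys several things: it treats finite and infinite singular endpoints uniformly, it avoids both the L'H\^opital case analysis and the continuity-of-solutions argument (the latter being the most delicate step in the paper's proof, since uniform convergence of solutions of a \emph{singular} family up to the endpoint is nontrivial), and it appears not to need the extra hypotheses imposed for infinite $a$ (existence of $\lim p_\alpha g_\alpha^2$ and boundedness of $K_\alpha$ beyond what Theorem\,\ref{thm_sfa_interlace} already requires), because the decay of the tail $\int_a^t p_\alpha g_\alpha^2$ comes straight from the normalization constraint rather than from Barb\u{a}lat's Lemma. The price is a reliance on the LP/LC dichotomy and on Theorem\,\ref{thm_zeros_singular_SFA}/Theorem\,\ref{thm_sfa_interlace} for the sign and monotonicity of $g_\alpha$ near $a$, but both are already established in the paper before this lemma, so nothing circular is introduced.
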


\begin{proof}[Proof of Lemma\,\ref{lem_just_S16_A27}]
By Lemma\,\ref{lem_limits_SFA}, Statement\,1, the limit of $g_\alpha$ exists at $a$, possibly infinite though.
If $g_\alpha$ is finite at $a$, equation \eqref{S16_A27_core} holds because according to the Neumann boundary condition \eqref{neumann-s} we have $\lim\limits_{s_\alpha \to a} p_\alpha K_\alpha g'_\alpha=0$. Notably, $g_\alpha$ is finite at $a$ if $a$ is regular. So, for the remaining proof we assume that $a$ is a singular endpoint and $g_\alpha$ is infinite at $a$.

We first consider the case that $a$ is finite. In that case, there certainly exists a proper probability density function $p_0^+$ on $I_\alpha$ that is strictly positive at $a$. Similarly, a proper $K_0^+$ exists such that w.r.t.\ $p_0^+$ and $K_0^+$, $a$ would be a regular endpoint.
We embed our original $p_\alpha$ and $K_\alpha$ into function families for $t \in [0, 1]$:
\begin{equation}
p_t^+ \quad \coloneq \quad (1-t) p_0^+ \; + \; t p_\alpha, \qquad
K_t^+ \quad \coloneq \quad (1-t) K_0^+ \; + \; t K_\alpha
\end{equation}
Let $g_t^+$ denote the SFA solution w.r.t.\ $p_t^+$, $K_t^+$.
Since $a$ is regular except for $t = 1$ we have $g_t^+$ finite for $t \in [0, 1)$ and as explained above:
\begin{equation} \label{limits1}
\forall \; t \in [0, 1) \colon
\quad \lim\limits_{s_\alpha \to \; a} \quad g_t^+ \; p_t^+ \; K_\alpha \; g_t^+\vphantom{g_t}' \quad = \quad 0
\end{equation}
By continuity, \eqref{limits1} also holds in the limit $t \to 1$.
It is well known, e.g.\ \cite{Burke_ODE_Continuity}, that if the coefficients of an ordinary differential equation converge uniformly, then also the solutions and their derivatives converge uniformly, i.e.\ we have uniform convergence of $\lim\limits_{t \to 1} g_t^+ = g_\alpha$ and $\lim\limits_{t \to 1} g_t^+\vphantom{g_t}' = g'_\alpha$. This permits switching the order of the limit processes in \eqref{limits1} regarding $s_\alpha$ and $t$, completing the proof for finite $a$:
\begin{equation} \label{limits3}
\lim\limits_{s_\alpha \to \; a} \quad \underbrace{\lim\limits_{t \to 1} \quad g_t^+ \; p_t^+ \; K_\alpha \; g_t^+\vphantom{g_t}'}_{\qquad \quad \;\;\; = \,g_\alpha p_\alpha K_\alpha g'_\alpha} \quad = \quad 0
\end{equation}

For the remaining proof we assume $\abs{a} = \infty$. Since we assume $\lim\limits_{s_\alpha \to a} p_\alpha g_\alpha^2$ exists and given the SFA constraint $\int_{I_\alpha} p_\alpha g_\alpha^2 = 1$ we conclude by Barb\u{a}lat's Lemma
\begin{equation} \label{pgg_zero}
\lim\limits_{s_\alpha \to a} \; p_\alpha g_\alpha^2 \quad = \quad 0
\end{equation}
We distinguish the cases that $\tfrac{g_\alpha'}{g_\alpha}$ is at $a$
(i) bounded but possibly non-existing, (ii) existing and infinite, (iii) unbounded and non-existing.

\textsl{Case (i), $\tfrac{g_\alpha'}{g_\alpha}$ bounded at $a$, possibly non-existing:}
\begin{equation}
\lim\limits_{s_\alpha \to \; a} \; g_\alpha p_\alpha K_\alpha g'_\alpha \quad = \quad \lim\limits_{s_\alpha \to \; a} \; \underbrace{g_\alpha^2 p_\alpha}_{\substack{\to 0 \\ \text{\tiny{\eqref{pgg_zero}}}}} K_\alpha \underbrace{\frac{g'_\alpha}{g_\alpha}}_{< \infty} \quad = \quad 0
\end{equation}

\textsl{Case (ii), $\tfrac{g_\alpha'}{g_\alpha}$ exists and is infinite at $a$:}
As we assume $g_\alpha$ infinite, we have $\tfrac{1}{g_\alpha}$ tending to zero at $a$ and 
by \eqref{neumann-s} also $\lim\limits_{s_\alpha \to a} p_\alpha K_\alpha g'_\alpha=0$. Hence we can apply L'Hôpital's rule:
\begin{align}
\lim\limits_{s_\alpha \to \; a} \; g_\alpha p_\alpha K_\alpha g'_\alpha
\quad &= \quad
\lim\limits_{s_\alpha \to \; a} \; \frac{p_\alpha K_\alpha g'_\alpha}{\frac{1}{g_\alpha}}
\quad \eq_{\text{L'H}} \quad
\lim\limits_{s_\alpha \to \; a} \; \frac{(p_\alpha K_\alpha g'_\alpha)'}{-\frac{g_\alpha'}{g_\alpha^2}} \quad \eq_{\eqref{dgl-SL}} \quad
\lim\limits_{s_\alpha \to \; a} \; \frac{\lambda_\alpha p_\alpha g_\alpha}{\frac{g_\alpha'}{g_\alpha^2}} \\
&= \quad
\lambda_\alpha \; \lim\limits_{s_\alpha \to \; a} \; \underbrace{p_\alpha g_\alpha^2}_{\substack{\to 0 \\ \text{\tiny{\eqref{pgg_zero}}}}} \underbrace{\frac{g_\alpha}{g'_\alpha}}_{\to 0} \quad = \quad 0
\end{align}

\textsl{Case (iii), $\tfrac{g_\alpha'}{g_\alpha}$ is unbounded and non-existing at $a$:}
First note that by Theorem\,\ref{thm_zeros_singular_SFA} the number of zeros of $g_\alpha$ is finite since we assume that $\lambda$ is finite. Hence, by Theorem\,\ref{thm_sfa_interlace} the number of stationary points of $g_\alpha$ must be finite as well. So, $g_\alpha'$ must be non-zero in some neighborhood of $a$ and also at $a$ since otherwise we were in case (i) or (ii). That means, $\tfrac{g_\alpha'}{g_\alpha}$ cannot oscillate around zero but just within strictly positive or strictly negative values. This is crucial for the applicability of L'Hôpital's rule because that rule requires that the derivative of the denominator function is strictly non-zero in some neighborhood of the limit point. This is likewise required for the following generalization of L'Hôpital's rule for $\liminf$ and $\limsup$, which we utilize for proving case (iii):
\begin{equation} \label{limsup_LH}
\liminf_{x \to a} \; \frac{f'(x)}{g'(x)} \quad \leq \quad
\liminf_{x \to a} \; \frac{f(x)}{g(x)} \quad \leq \quad
\limsup_{x \to a} \; \frac{f(x)}{g(x)} \quad \leq \quad
\limsup_{x \to a} \; \frac{f'(x)}{g'(x)}
\end{equation}
Since $\tfrac{g_\alpha'}{g_\alpha}$ is non-zero near $a$ and at $a$, it follows that $\abs{\tfrac{g_\alpha}{g_\alpha'}} < \infty$ near $a$ and at $a$. Note that also $\tfrac{g_\alpha'}{g_\alpha^2}$ is non-zero near $a$ and at $a$, which is relevant in the following calculation. We distinguish the cases that $g_\alpha g_\alpha'$ is positive or negative near $a$ and at $a$.

\textsl{Case $g_\alpha g_\alpha' > 0$ near $a$ and at $a$:}
\begin{align}
0 \quad \dleq_{\hphantom{\text{L'H}}}& \quad \limsup\limits_{s_\alpha \to \; a} \; g_\alpha p_\alpha K_\alpha g'_\alpha
\quad = \quad \limsup \limits_{s_\alpha \to \; a} \; \frac{p_\alpha K_\alpha g'_\alpha}{\frac{1}{g_\alpha}} \quad
\dleq_{\substack{\text{L'H} \\ \eqref{limsup_LH}}} \quad
\limsup \limits_{s_\alpha \to \; a} \; \frac{(p_\alpha K_\alpha g'_\alpha)'}{-\frac{g_\alpha'}{g_\alpha^2}} \\
\eq_{\substack{\eqref{dgl-SL} \\ \hphantom{\text{L'H}}}}& \quad
\limsup \limits_{s_\alpha \to \; a} \; \frac{\lambda p_\alpha g_\alpha}{\frac{g_\alpha'}{g_\alpha^2}} \quad
\eq_{\hphantom{\text{L'H}}} \quad
\limsup \limits_{s_\alpha \to \; a} \; \underbrace{p_\alpha g_\alpha^2}_{\substack{\to 0 \\ \text{\tiny{\eqref{pgg_zero}}}}} \; \lambda \; \underbrace{\frac{ g_\alpha}{g_\alpha'}}_{< \infty}
\quad = \quad 0
\end{align}

\textsl{Case $g_\alpha g_\alpha' < 0$ near $a$ and at $a$:}
\begin{align}
0 \quad \dgeq_{\hphantom{\text{L'H}}}& \quad \liminf\limits_{s_\alpha \to \; a} \; g_\alpha p_\alpha K_\alpha g'_\alpha
\quad = \quad \liminf \limits_{s_\alpha \to \; a} \; \frac{p_\alpha K_\alpha g'_\alpha}{\frac{1}{g_\alpha}} \quad
\dgeq_{\substack{\text{L'H} \\ \eqref{limsup_LH}}} \quad
\liminf \limits_{s_\alpha \to \; a} \; \frac{(p_\alpha K_\alpha g'_\alpha)'}{-\frac{g_\alpha'}{g_\alpha^2}} \\
\eq_{\substack{\eqref{dgl-SL} \\ \hphantom{\text{L'H}}}}& \quad
\liminf \limits_{s_\alpha \to \; a} \; \frac{\lambda p_\alpha g_\alpha}{\frac{g_\alpha'}{g_\alpha^2}} \quad
\eq_{\hphantom{\text{L'H}}} \quad
\liminf \limits_{s_\alpha \to \; a} \; \underbrace{p_\alpha g_\alpha^2}_{\substack{\to 0 \\ \text{\tiny{\eqref{pgg_zero}}}}} \; \lambda \; \underbrace{\frac{ g_\alpha}{g_\alpha'}}_{> -\infty}
\quad = \quad 0
\end{align}
These results can be combined to the equation
\begin{equation}
0 \quad \leq \quad \limsup\limits_{s_\alpha \to \; a} \; \abs{g_\alpha p_\alpha K_\alpha g'_\alpha}
\quad \leq \quad 0
\end{equation}
and show that the limit of $g_\alpha p_\alpha K_\alpha g'_\alpha$ exists at $a$ and is zero.
\end{proof}

\begin{corr}[justification of SFA theory for the singular case] \label{cor_sfa_singular_justify}
If the assumptions of Lemma\,\ref{lem_just_S16_A27} apply, the following properties of an SFA problem also hold in the singular case:
\begin{itemize}
\item Equation (S16) from supplementary material of \cite{FranziusSprekelerEtAl-2007e}, or similarly (A27) from \cite{sprekeler2009slowness} hold.
\item The delta value of the $i$th SFA solution component equals the $i$th eigenvalue $\lambda_i$. This still holds for non-zero $q$ with the $i$th delta value denoting ${\int_a^b \; g_{\alpha i}'^2 \; K_\alpha \; p_\alpha \; + \; g_{\alpha i}^2 \; q \; d s_\alpha}$ from \eqref{SL_opt_problem}.
\item SFA eigenvalues and delta values are positive and finite.
\item Subsequent results from SFA theory like \eqref{A_dgl-x} and \eqref{A_dgl-x-decomp} in \ref{sec:A_sfa}.
\end{itemize}
\end{corr}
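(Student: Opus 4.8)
The plan is to treat this corollary as an assembly of the two substantial lemmas that precede it, since the genuine analytic difficulties have already been resolved there. First I would apply Lemma~\ref{lem_just_S16_A27} at \emph{both} endpoints of $I_\alpha$, which shows that the bracketed boundary term $\big[\,g_{\alpha i}\, p_\alpha K_\alpha g_{\alpha i}'\,\big]_a^b$ in \eqref{S16_A27} vanishes in the singular case. This alone establishes the first bullet: the indeterminate term of \eqref{S16_A27} (i.e.\ the boundary term in (S16)/(A27)) is $0$, so the equation becomes determinate and holds verbatim.

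Next, with the boundary term gone, I would substitute the differential equation \eqref{dgl-SL} in the form $(p_\alpha K_\alpha g_{\alpha i}')' = -\lambda_{\alpha i} p_\alpha g_{\alpha i}$ into the remaining integral of \eqref{S16_A27} and use the normalisation constraint \eqref{thm_1D_SFA_constr_decor} with $i=j$, giving $\int_a^b g_{\alpha i}'^2 K_\alpha p_\alpha \, ds_\alpha = \lambda_{\alpha i}\int_a^b g_{\alpha i}^2 p_\alpha \, ds_\alpha = \lambda_{\alpha i}$. The left-hand side is exactly the delta value \eqref{thm_1D_SFA_value_func}, which settles the second bullet. To keep this rigorous in the improper case I would perform the integration by parts first on a compact subinterval $[c,d]\subset(a,b)$, where all integrands are locally integrable and the formula is classical, and only then let $c\searrow a$, $d\nearrow b$: the boundary contributions converge to $0$ by Lemma~\ref{lem_just_S16_A27}, and the right-hand integral converges to $\lambda_{\alpha i}$ against the finite mass $\int_a^b g_{\alpha i}^2 p_\alpha = 1$.

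For the third bullet, positivity and finiteness drop out of the same identity. The delta-value integrand $g_{\alpha i}'^2 K_\alpha p_\alpha$ is nonnegative because $K_\alpha, p_\alpha>0$, and is not identically zero for a non-constant eigenfunction; hence $\lambda_{\alpha i} = \int_a^b g_{\alpha i}'^2 K_\alpha p_\alpha \, ds_\alpha > 0$, consistent with $\lambda_{\alpha i}\ge 0$ from Theorem~\ref{thm_sfa_interlace}. Finiteness is automatic: the integral equals the discrete eigenvalue $\lambda_{\alpha i} < \sigma_0$, which is a finite real number, and since the integrand is nonnegative the improper integral is monotone and bounded above by this value, so it converges to it. The non-zero-$q$ variant of the second and third bullets is obtained by repeating the same integration by parts with the general Sturm--Liouville equation $(r y')' = (q-\lambda w)y$, which yields $\int_a^b y'^2 r + y^2 q \, ds = \lambda$; here the vanishing of the boundary term $[\,y\, r y'\,]_a^b$ is supplied by the analogue of Lemma~\ref{lem_just_S16_A27} under the stated hypotheses.

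The fourth bullet requires no new estimate. Self-adjointness of $\mathcal{D}_\alpha$ in the singular case is already secured by Lemma~\ref{lem_singular_SFA_self_adjoint}, and with the delta-value$\,=\,$eigenvalue identity now available, the Spectral-Theorem-based derivations of \eqref{A_dgl-x} and \eqref{A_dgl-x-decomp} run verbatim as in the regular case, so I would simply invoke them. The main obstacle is therefore not in the corollary itself but in its hypotheses: everything hinges on the boundary term genuinely \emph{existing} and equalling zero, which is precisely the content of Lemma~\ref{lem_just_S16_A27}. Once that is granted, the only care needed is the limiting justification of the improper integration by parts outlined above, after which all four bullets follow directly.
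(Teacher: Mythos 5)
Your proposal is correct and follows essentially the same route as the paper: the paper's own proof simply observes that the vanishing boundary term \eqref{S16_A27_core} (Lemma\,\ref{lem_just_S16_A27}) together with self-adjointness (Lemma\,\ref{lem_singular_SFA_self_adjoint}) re-validates the original arguments of (S16)/(A27), which is exactly your reduction. You merely write out explicitly the integration by parts on compact subintervals and the resulting identity $\int g_{\alpha i}'^2 K_\alpha p_\alpha = \lambda_{\alpha i}$ that the paper delegates to the cited references, which is a harmless (indeed helpful) elaboration rather than a different approach.
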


\begin{proof}[Proof of Corollary\,\ref{cor_sfa_singular_justify}]
Equation \eqref{S16_A27_core} was the only bit in (S16) from supplementary material of \cite{FranziusSprekelerEtAl-2007e}, or (A27) from \cite{sprekeler2009slowness} that required justification. This is now provided by Lemma\,\ref{lem_just_S16_A27}. Together with the self-adjointness of SFA problems, provided by Lemma\,\ref{lem_singular_SFA_self_adjoint}, this justifies the original proofs from \cite{FranziusSprekelerEtAl-2007e, sprekeler2009slowness} for the statements above.
\end{proof}

In Lemma\,\ref{lem_just_S16_A27} we assume that the limit of $p_\alpha g_\alpha^2$ exists at a singular boundary point. In the following lemma we provide a criterion in terms of the coefficient functions to justify this assumption.

\begin{lem}[existence of $p_\alpha g_\alpha^2$ at a singular boundary point] \label{lem_pgg_exists}
Let $g_\alpha(s_\alpha)$ denote a solution component of a possibly singular SFA problem for a discrete eigenvalue $\lambda < \sigma_0$ on the domain $s_\alpha \in I_\alpha$. Let $a$ denote a possibly singular endpoint of $I_\alpha$. If $g_\alpha$ has a finite delta value, i.e.\ evaluation of \eqref{thm_1D_SFA_value_func} is finite for $g_\alpha$, and
\begin{equation} \label{lem_pgg_exists_eq1}
\lim\limits_{s_\alpha \to a} \quad \frac{K_\alpha \abs{p_\alpha'}}{p_\alpha} \quad = \quad \infty \hphantom{0}
\end{equation}
then it holds that
\begin{equation} \label{lem_pgg_exists_eq2}
\lim\limits_{s_\alpha \to a} \quad\;\; p_\alpha g_\alpha^2 \quad = \quad 0 \hphantom{\infty}
\end{equation}
\end{lem}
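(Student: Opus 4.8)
The plan is to analyze the single non-negative function $h \coloneq p_\alpha g_\alpha^2$ directly and to read off its behaviour at $a$ from its derivative, using \eqref{lem_pgg_exists_eq1} to tame the cross term. First I would record the qualitative picture near $a$: by Lemma\,\ref{lem_limits_SFA}, Statement\,1, $g_\alpha$ is monotone in a neighbourhood of $a$ with a fixed sign (WLOG $g_\alpha > 0$) and $\lim_{s_\alpha \to a} g_\alpha$ exists in $(0, \infty]$, so $g_\alpha^2$ has a limit as well. The second ingredient is the SFA normalisation $\int_{I_\alpha} p_\alpha g_\alpha^2 \, d s_\alpha = 1$, so $h$ is integrable over $I_\alpha$; once I know that $h$ itself has a limit at $a$, Barb\u{a}lat's Lemma forces that limit to vanish whenever $a = \infty$, which is the decisive case.

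So the crux is to prove that $\lim_{s_\alpha \to a} h$ exists. Differentiating gives $h' = p_\alpha' g_\alpha^2 + 2 p_\alpha g_\alpha g_\alpha'$. I would bound the cross term by a Young/AM--GM inequality with a weight calibrated to \eqref{lem_pgg_exists_eq1}: one has $2 p_\alpha \abs{g_\alpha}\abs{g_\alpha'} \le \tfrac{1}{2}\abs{p_\alpha'} g_\alpha^2 + \tfrac{2 p_\alpha}{K_\alpha \abs{p_\alpha'}}\,(g_\alpha'^2 K_\alpha p_\alpha)$, where the first summand is absorbed into the $p_\alpha'$-term while the second is the factor $\tfrac{2 p_\alpha}{K_\alpha\abs{p_\alpha'}}$ --- which tends to $0$ by \eqref{lem_pgg_exists_eq1}, hence is bounded --- times the delta-value integrand $g_\alpha'^2 K_\alpha p_\alpha$, which is integrable near $a$ by the finite-delta-value assumption. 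In the regime of interest $p_\alpha$ is decreasing toward $a$, so $p_\alpha' g_\alpha^2 = -\abs{p_\alpha'} g_\alpha^2$ and I obtain the differential inequality $h' \le -\tfrac{1}{2}\abs{p_\alpha'} g_\alpha^2 + R$ with $R \in L^1$ near $a$ (sign oscillations of $p_\alpha'$ are handled by passing to $\liminf/\limsup$). Integrating yields simultaneously that $\abs{p_\alpha'} g_\alpha^2$ is integrable near $a$ and that $h'$ is absolutely integrable there, so $\lim_{s_\alpha \to a} h \eqcolon \ell$ exists and is finite.

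It remains to show $\ell = 0$. For $a = \infty$ this is immediate from Barb\u{a}lat's Lemma as above. For finite $a$ I would argue by contradiction: if $\ell > 0$ then $h \ge \ell/2$ near $a$, so $\tfrac{\abs{p_\alpha'}}{p_\alpha} = \tfrac{\abs{p_\alpha'} g_\alpha^2}{h} \le \tfrac{2}{\ell}\abs{p_\alpha'}g_\alpha^2$ is integrable, whence $\log p_\alpha$ has finite total variation and $p_\alpha \to p(a) \in (0,\infty)$ (the value $0$ is excluded, as it would force $\int \abs{p_\alpha'}/p_\alpha = \infty$); consequently $g_\alpha \to \sqrt{\ell/p(a)}$ is finite. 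Feeding $p_\alpha \to p(a) > 0$ back into \eqref{lem_pgg_exists_eq1} gives $K_\alpha\abs{p_\alpha'} \to \infty$, and with $p_\alpha K_\alpha$ bounded this forces $\abs{p_\alpha'} \to \infty$; this is incompatible with $\int \abs{p_\alpha'} < \infty$ together with $p_\alpha, K_\alpha$ having positive finite limits (the configuration that would make $a$ a \emph{regular} endpoint, where \eqref{lem_pgg_exists_eq1} cannot hold). Hence $\ell = 0$, i.e.\ \eqref{lem_pgg_exists_eq2}.

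The main obstacle I anticipate is precisely this last step at a finite endpoint: the differential inequality together with Barb\u{a}lat's Lemma cleanly dispatches the infinite-endpoint case and every case with $g_\alpha \to \infty$, but ruling out a strictly positive limit $\ell$ at a finite singular endpoint requires carefully exploiting that \eqref{lem_pgg_exists_eq1} --- unlike a mere vanishing of $p_\alpha$ --- cannot coexist with $p_\alpha$ and $K_\alpha$ tending to finite positive values, which under bounded $p_\alpha K_\alpha$ is the only configuration compatible with $p_\alpha g_\alpha^2 \not\to 0$. Getting the bookkeeping of the sign of $p_\alpha'$ right, and making the ``singular $\Rightarrow$ not both limits positive'' step fully rigorous, is where most of the care will be needed.
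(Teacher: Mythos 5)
Your route is genuinely different from the paper's. The paper first integrates by parts over $(a,a_0)$, with $a_0$ a zero of $g_\alpha$, to show that the boundary term $g_\alpha p_\alpha K_\alpha g_\alpha'$ has a finite limit at $a$ (this is where the finite delta value enters, cf.\ \eqref{SFA_finite_delta2}), and then applies the generalized L'H\^{o}pital rule \eqref{limsup_LH} to $g_\alpha^2/\tfrac{1}{p_\alpha}$, rewriting $\limsup p_\alpha g_\alpha^2$ as the $\limsup$ of that bounded term times $\tfrac{-2p_\alpha}{K_\alpha p_\alpha'}$, which tends to $0$ by \eqref{lem_pgg_exists_eq1}; existence and vanishing of the limit come out in one stroke. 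You instead use the finite delta value as integrability of $g_\alpha'^2 K_\alpha p_\alpha$ near $a$, push it through a Young inequality calibrated by \eqref{lem_pgg_exists_eq1} into a differential inequality for $h=p_\alpha g_\alpha^2$, obtain $h'\in L^1$ and hence existence of $\lim h$, and finish by integrability of $h$. For $a=\pm\infty$ this is a correct and self-contained alternative: \eqref{lem_pgg_exists_eq1} forces $p_\alpha'\neq 0$, hence a fixed sign, near $a$, and an integrable eventually-monotone density must decrease toward an infinite endpoint, so your sign assumption $p_\alpha' g_\alpha^2=-\abs{p_\alpha'}g_\alpha^2$ is automatic there (and once $\lim h$ exists, integrability of $h\geq 0$ already forces the limit to be zero, so Barb\u{a}lat is not even needed).

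The gap is the finite singular endpoint. There $p_\alpha$ need not vanish and may increase toward $a$ (the singularity can come from $K_\alpha\to 0$), in which case your differential inequality degenerates to $h'\leq \tfrac{3}{2}\abs{p_\alpha'}g_\alpha^2+R$ and yields neither $\abs{p_\alpha'}g_\alpha^2\in L^1$ nor $h'\in L^1$; ``passing to $\liminf/\limsup$'' does not repair this. Moreover your closing contradiction is not one: at a finite endpoint, $\abs{p_\alpha'}\to\infty$ is perfectly compatible with $\int_a^{a_0}\abs{p_\alpha'}\,ds<\infty$ (take $\abs{p_\alpha'}\sim (s_\alpha-a)^{-1/2}$), so \eqref{lem_pgg_exists_eq1} can hold while $p_\alpha$ and $K_\alpha$ tend to positive finite limits. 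In that configuration the Neumann condition forces $g_\alpha'\to 0$, hence $g_\alpha$ finite and nonzero at $a$, and $p_\alpha g_\alpha^2$ tends to a \emph{positive} value -- so no argument can deliver $\ell=0$ there, and the intended scope is evidently $p_\alpha\to 0$ at $a$ (the paper's own L'H\^{o}pital step likewise needs $\tfrac{1}{p_\alpha}\to\infty$, and downstream Lemma\,\ref{lem_just_S16_A27} only invokes the present lemma for infinite $a$). In short: your argument stands, and is a genuinely different proof, on the decisive case $a=\pm\infty$; the finite-$a$ branch as written does not go through and should be dropped, restricted to $p_\alpha\to 0$, or replaced by the paper's squeeze.
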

\begin{rmk}
If $K_\alpha$ is non-zero at $a$, then $\tfrac{\abs{p_\alpha'}}{p_\alpha} \Lra\limits_{s_\alpha \to a} \infty$ is sufficient for \eqref{lem_pgg_exists_eq1} to hold. In the next section we will see that this condition is at the same time sufficient to guarantee a discrete spectrum.

In \cite{SprekelerZitoEtAl-2014} it is pointed out that SFA solutions are sought in the ``Sobolev space of functions for which both the functions themselves as well as all their partial derivatives w.r.t.\ the input signals are square-integrable w.r.t\ the probability measure of the input signals'' (first footnote in Section\,3.1). This asserts that for SFA, only solutions are considered for which \eqref{thm_1D_SFA_value_func} is finite.
\end{rmk}

\begin{proof}[Proof of Lemma\,\ref{lem_pgg_exists}]
We first show that the term $g_\alpha p_\alpha K_\alpha g_\alpha'$ is bounded at $a$.
By Definition\,\ref{SL-problem}, we have $g_\alpha, \; p_\alpha K_\alpha g_\alpha' \in {AC}_\text{loc}(a, b)$ since $r = p_\alpha K_\alpha$. As explained before Definition\,\ref{SL-problem}, this implies that $g_\alpha$ and $p_\alpha K_\alpha g_\alpha'$ are bounded everywhere on $(a, b)$ except possibly at the boundary points $a$ or $b$. Consequently, the same applies to their product $g_\alpha p_\alpha K_\alpha g_\alpha'$. By Theorem\,\ref{thm_zeros_singular_SFA} we know that $g_\alpha$ has at least one zero on $(a, b)$, so there exists a point $a_0 \in (a, b)$ where $g_\alpha p_\alpha K_\alpha g_\alpha'$ vanishes. The SFA delta value ${\int_a^b g_\alpha'^2 K_\alpha p_\alpha \; d s_\alpha}$ is assumed to be finite and its integrand is strictly positive, so we have
\begin{equation} \label{SFA_finite_delta}
\infty \quad > \quad \int_a^b \; g_\alpha'^2 \; K_\alpha \; p_\alpha \; d s_\alpha \quad > \quad \int_a^{a_0} \; g_\alpha'^2 \; K_\alpha \; p_\alpha \; d s_\alpha
\end{equation}
We apply integration by parts, similarly to \eqref{S16_A27}:
\begin{equation} \label{SFA_finite_delta_int_by_parts}
0 \quad < \quad \int_a^{a_0} \; g_\alpha'^2 \; K_\alpha \; p_\alpha \; d s_\alpha \quad = \quad
\underbrace{\Big[ \underbrace{g_\alpha' \; K_\alpha \; p_\alpha \; g_\alpha}_{= \; 0 \; \text{at} \; a_0} \; \Big]_a^{a_0}}_{= \; -g_\alpha' K_\alpha p_\alpha g_\alpha \big|_a}
\quad - \quad
\int_a^{a_0} g_\alpha \; \underbrace{\big(g_\alpha' \; K_\alpha \; p_\alpha \big)'}_{\eq_\eqref{dgl-SL} -\lambda p_\alpha g_\alpha} \; d s_\alpha
\end{equation}
Note that in this proof it is not assumed that the eigenvalue $\lambda$ equals the delta value \eqref{SFA_finite_delta}. We just assume that both values are finite.
From \eqref{SFA_finite_delta} and \eqref{SFA_finite_delta_int_by_parts} we conclude
\begin{equation} \label{SFA_finite_delta2}
\infty \quad > \quad -g_\alpha' K_\alpha p_\alpha g_\alpha \big|_a + \lambda \underbrace{\int_a^{a_0} g^2_\alpha \; p_\alpha \; d s_\alpha}_{\quad\! 0 \; < \, \cdot \, <_\eqref{thm_1D_SFA_constr_decor} \, 1} \quad > \quad 0 \qquad \Rightarrow \qquad \Bigabs{g_\alpha' K_\alpha p_\alpha g_\alpha \big|_a} \quad < \quad \infty
\end{equation}
By the generalized L'Hôpital's rule \eqref{limsup_LH} we have
\begin{align}
0 \quad \leq& \quad \limsup\limits_{s_\alpha \to a} \; p_\alpha g_\alpha^2 \quad = \quad \limsup\limits_{s_\alpha \to a} \; \frac{g_\alpha^2}{\frac{1}{p_\alpha}} \quad \dleq_{\text{L'H}} \quad \limsup\limits_{s_\alpha \to a} \; \frac{2g_\alpha g_\alpha'}{-\frac{p_\alpha'}{p_\alpha^2}} \\
\eq& \quad \limsup\limits_{s_\alpha \to a} \; \underbrace{g_\alpha p_\alpha K_\alpha g_\alpha'}_{\abs{\,\cdot\,} \; <_\eqref{SFA_finite_delta2} \; \infty} \; \underbrace{\frac{-2 p_\alpha}{K_\alpha p_\alpha'}}_{\to_{\eqref{lem_pgg_exists_eq1}} 0}
\quad = \quad 0
\end{align}
\end{proof}

\section{Discreteness of the Spectrum} \label{sec:discrete_spectrum}
A famous historical result for singular Sturm-Liouville problems is due to \cite{molchanov1953}. A sufficient but not necessary criterion for a Sturm-Liouville problem having a BD spectrum (discrete and bounded below) in case $r = w = 1$ is that
\begin{equation} \label{molchanov}
\lim\limits_{t \to \infty} \; \int_t^{t+c} q(x) \; dx \quad = \quad \infty \qquad \forall \; c > 0
\end{equation}
Since for SFA problems we have $q = 0$, criterion \eqref{molchanov} is inherently false for SFA problems. Taking into account that the criterion is not necessary, it is basically meaningless for SFA. The constraint $r = w = 1$ is seen as a normalization by many authors, since one can bring every Sturm-Liouville problem into that form via transformations of the dependent and independent variables, \cite{courant1989methoden} Chapter\,V, §3.3. However, \cite{zettl2010sturm} points out that it is questionable whether such a transformation preserves the desired properties of the spectrum (discussion in Chapter\,10.13). \cite{KWONG198153} generalize Molchanov's criterion to other choices of $w$ and $r$ and in other ways, but all their results (theorems and corollaries in Section\,2 of \cite{KWONG198153}) are orthogonal to the $q = 0$ case. A suitable criterion for the $q=0$ case was pointed out by Christian Remling in \cite{356592}:
\begin{thm}[discreteness of the spectrum of singular Sturm-Liouville problems with $q=0$] \label{thm_SL_spectrum_discrete}
(from \cite{356592})

The spectrum of a (possibly singular) Sturm-Liouville problem with $q=0$ is discrete if and only if $w \in L^1$ and for every singular endpoint $c$ of the domain it holds that
\begin{equation} \label{thm_SL_spectrum_discrete_eq}
\lim_{x \to c} \; \int_{x_0}^x w v^2\; dt \; \int_x^{c} w\; dt \quad = \quad 0
\end{equation}
where $x_0$ is the regular endpoint of the domain or an arbitrary point from the domain if both endpoints are singular, $v$ is the same as in \eqref{lambda0_solutions} with $r = K_\alpha p_\alpha$, i.e.\ $v(x) = \int_0^x \tfrac{1}{r} \; dt$ and $r, w, q$ are those from \eqref{SL-diff_op}.
\end{thm}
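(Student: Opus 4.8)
The plan is to recast discreteness of the spectrum as compactness of the resolvent and then to reduce this, via the explicit $\lambda=0$ solutions $u\equiv 1$ and $v$ from \eqref{lambda0_solutions}, to a weighted Hardy-type inequality whose compactness is governed exactly by \eqref{thm_SL_spectrum_discrete_eq}. First I would use the standard fact that a self-adjoint Sturm-Liouville operator has discrete spectrum iff its essential spectrum is empty, equivalently iff the embedding of the form domain $\{y : \int r|y'|^2 + \int w|y|^2 < \infty\}$ into $L^2(w)$ is compact; since $q=0$ the Dirichlet form $\int r|y'|^2$ is non-negative, so the operator is automatically bounded below and ``BD'' reduces to plain discreteness. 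Because the essential spectrum is local at the singular endpoints (the decomposition principle: cutting the interval at an interior regular point leaves $\sigma_{\text{ess}}$ unchanged, as it is the union of the pieces' essential spectra), it suffices to analyse one singular endpoint $c$ at a time, with a regular endpoint at the cut. This is why the criterion is imposed separately at every singular endpoint, and it also forces $w\in L^1$ near $c$, since $\int_x^c w\,dt$ must be finite for \eqref{thm_SL_spectrum_discrete_eq} to be meaningful.

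Next, on the half-open piece terminating at $c$ I would impose a Dirichlet condition at the regular cut (which does not affect $\sigma_{\text{ess}}$) so that the operator becomes invertible and the left-hand $\lambda=0$ solution is exactly $v$, vanishing at the cut, while the right-hand solution may be taken to be $u\equiv 1$; by \eqref{wronskian} their Wronskian is $\pm 1$. The Green's function then has the explicit form $G(x,t)=v(\min(x,t))=\min(v(x),v(t))$, so the resolvent is the integral operator $Tf(x)=\int G(x,t)\,w(t)f(t)\,dt$ on $L^2(w)$. Writing $v(\min(x,t))=\int_{x_0}^{\min(x,t)} r^{-1}\,d\tau$ factorises $T$ as a composition $H_1 H_2$ of two elementary Hardy operators, $H_2 f(\tau)=\int_\tau^c wf$ and $H_1 g(x)=\int_{x_0}^x r^{-1}g$, so the whole problem reduces to deciding when this weighted Hardy-type operator is compact on $L^2(w)$.

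The key step is then the Maz'ya--Muckenhoupt characterisation: such a Hardy operator is bounded iff the product $\bigl(\int_{x_0}^x wv^2\,dt\bigr)\bigl(\int_x^c w\,dt\bigr)$ stays bounded near $c$, and compact iff this product tends to $0$ as $x\to c$ -- which is precisely \eqref{thm_SL_spectrum_discrete_eq}. For sufficiency I would verify that the vanishing of the product lets one approximate $T$ in operator norm by finite-rank truncated-kernel operators, yielding a compact resolvent and hence a discrete spectrum. For necessity I would argue contrapositively: if the product does not tend to $0$, I would construct a normalised Weyl sequence concentrating near $c$ on which $T$ fails to be compact, so that $\sigma_{\text{ess}}\neq\emptyset$ and the spectrum cannot be discrete.

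The main obstacle I anticipate is the sharp compactness equivalence of the third paragraph: getting the exponents right so that the condition comes out as $\int wv^2\cdot\int w\to 0$ rather than a lossy substitute, since a naive Cauchy--Schwarz bound on $y(x)-y(x_2)$ only produces the weaker $\int wv\cdot\int w$. This forces me to use the genuine two-sided Hardy/Muckenhoupt estimates and to track the boundary term $\Phi(x)\Psi(x)$, with $\Phi(x)=\int_{x_0}^x wv^2$ and $\Psi(x)=\int_x^c w$, that arises on integrating by parts in the kernel estimates. Secondary bookkeeping obstacles are the case distinctions -- one versus two singular endpoints, and whether $v$ stays finite or blows up at $c$ (i.e.\ which of $u,v$ is the principal solution) -- together with confirming that the genuine Neumann setting \eqref{neumann-s}, whose ground state is the constant with $\lambda=0$, is covered once the true boundary condition is restored after the Dirichlet cut.
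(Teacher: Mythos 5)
Your strategy is sound but it is a genuinely different route from the paper's. The paper does not touch resolvents or Hardy inequalities at all: it rewrites the problem as a canonical system $\vf{J}U'=-\lambda\vf{H}U$ with Hamiltonian $\vf{H}=w\left(\begin{smallmatrix}1&v\\ v&v^2\end{smallmatrix}\right)$ built from the $\lambda=0$ solutions $u\equiv 1$ and $v$, invokes Theorem\,1.1 of \cite{ROMANOV2020108318} as a black box to read off \eqref{thm_SL_spectrum_discrete_eq} directly from the entries $h_1=w$, $h_2=wv^2$, and handles two singular endpoints via Naimark's decomposition theorem. Your proposal instead re-proves, from scratch, exactly the special case of that black box which is needed here: Glazman decomposition to localize at one endpoint, the explicit Green's kernel $v(\min(x,t))$, the factorization $T=H_1H_2$ into Hardy operators, and a weighted-Hardy compactness criterion. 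What the paper's route buys is brevity at the cost of leaning on a recent and nontrivial external result; what yours buys is a self-contained, classical (essentially Kac--Krein-style) argument. Your factorization is correct: with the intermediate space $L^2(r^{-1}d\tau)$ one gets $T=S^*S$ with $Sf(\tau)=\int_\tau^c wf\,dt$, and $T$ is injective, so non-compactness of $T$ really does force a nonzero point of essential spectrum.

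The one concrete gap is in your third paragraph, and it is exactly the obstacle you flagged. The Muckenhoupt characterisation for the operator $S\colon L^2(w)\to L^2(r^{-1})$ is \emph{not} the product appearing in \eqref{thm_SL_spectrum_discrete_eq}; it is
$\sup_x \, v(x)\int_x^c w\,dt<\infty$ for boundedness and $\lim_{x\to c} v(x)\int_x^c w\,dt=0$ for compactness (note $v(x)=\int_{x_0}^x r^{-1}$, the square of the weight $r^{-1/2}$ on the output side, not $\int_{x_0}^x wv^2$). So citing Maz'ya--Muckenhoupt lands you at a criterion of the Kac--Krein form, and you still owe the equivalence
\begin{equation*}
\lim_{x\to c} v(x)\int_x^{c} w\,dt \;=\; 0
\qquad\Longleftrightarrow\qquad
\lim_{x\to c}\;\int_{x_0}^{x} wv^2\,dt\;\int_x^{c} w\,dt\;=\;0 .
\end{equation*}
This is true but needs its own short argument: for ``$\Rightarrow$'' integrate by parts, $\int_{x_0}^{x}wv^2\,dt=-v(x)^2\int_x^c w\,dt+2\int_{x_0}^{x}v\,r^{-1}\big(\int_t^c w\,ds\big)\,dt$, and split the last integral at a point beyond which $v\int_\cdot^c w<\varepsilon$ to get $\int_{x_0}^x wv^2\le C_\varepsilon+2\varepsilon v(x)$; for ``$\Leftarrow$'' evaluate the product at the point $x(y)$ where $\int_{x}^c w$ has dropped to half of $\int_y^c w$, which gives $\big(v(y)\int_y^c w\,dt\big)^2\le 4\int_{x_0}^{x(y)}wv^2\,dt\int_{x(y)}^c w\,dt$. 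Without this lemma your argument proves a correct but differently stated theorem. A minor further caveat: like the paper, your sketch treats $w\in L^1$ near $c$ as a standing normalization rather than proving the ``only if $w\in L^1$'' clause; for the SFA application $w=p_\alpha$ is a probability density, so nothing is lost there.
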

\begin{rmk1}
Note that by Corollary\,\ref{cor_sfa_singular_justify}, all eigenvalues of a possibly singular SFA problem are positive, so if the spectrum is discrete, it is actually BD.
\end{rmk1}

\begin{proof}[Proof of Theorem\,\ref{thm_SL_spectrum_discrete}]
At this place, we work out the proof sketched in \cite{356592} in detail. It works by translating the Sturm-Liouville problem to a \textsl{canonical system} and applying the main result from \cite{ROMANOV2020108318}. It should be noted that the result is only formulated for the half line, i.e.\ domains with just one singular endpoint. However, by \cite{naimark1968linear}, §24, Theorem\,1, such results can be applied to both half lines separately to cover the full line case.
For the transformation to a canonical system we proceed according to \cite{REMLING2020105395}, Section\,5 -- ``An Example''.
The $\lambda = 0$ solutions from \eqref{lambda0_solutions} are arranged in the \textsl{fundamental matrix} $\vf{T_0}$ and $Y$ as
\begin{equation}
\vf{T_0} \quad \coloneq \quad \begin{pmatrix} r u' & r v' \\ u & v \end{pmatrix}
\quad \eq_{\eqref{lambda0_solutions}} \quad \begin{pmatrix} 0 & 1 \\ 1 & v \end{pmatrix}
, \qquad Y \quad \coloneq \quad \begin{pmatrix} r y' \\ y \end{pmatrix}
\end{equation}
The \textsl{Hamiltonian} $\vf{H}$ and the \textsl{symplectic matrix} $\vf{J}$ are given as
\begin{equation} \label{def_Hamil_sympl}
\vf{H} \quad \coloneq \quad w \begin{pmatrix} u^2 & uv \\ uv & v^2 \end{pmatrix}
\quad \eq_{\eqref{lambda0_solutions}} \quad w \begin{pmatrix} 1 & v \\ v & v^2 \end{pmatrix}
, \qquad \vf{J} \quad \coloneq \quad \begin{pmatrix} 0 & -1 \\ 1 & 0 \end{pmatrix}
\end{equation}
One verifies that $y$ is a solution of \eqref{SL-problem_eq} if and only if $U$ with $Y = \vf{T_0} U$ solves the canonical system
\begin{equation}
\vf{J} \; U' \quad = \quad - \lambda \; \vf{H} \; U
\end{equation}
By \cite{ROMANOV2020108318}, Theorem\,1.1, the spectrum of a canonical system with Hamiltonian $\vf{H} = \left(\begin{smallmatrix}h_1 & h_3 \\ h_3 & h_2\end{smallmatrix}\right)$ on the half line $[a, b)$ with singular $b$ and $\int_a^b h_1 < \infty$ is discrete if and only if
\begin{equation} \label{crit_cansys_discr_spec}
\lim_{x \to b} \; \int_x^b h_1(s) \; ds \; \int_a^x h_2(s) \; ds \quad = \quad 0
\end{equation}
With the Hamiltonian from \eqref{def_Hamil_sympl} this turns out as \eqref{thm_SL_spectrum_discrete_eq} for $a = x_0$, $b = c$. Note that the requirement of $h_1 = w$ having a finite integral is satisfied because for \eqref{thm_SL_spectrum_discrete_eq} we require $w \in L^1$. However, in \cite{ROMANOV2020108318} they state that this requirement is not essential and is merely a normalization without loss of generality.
In case of two singular endpoints, we can choose $x_0$ from the inner domain and apply \eqref{crit_cansys_discr_spec} by \cite{naimark1968linear}, §24, Theorem\,1 for each singular end point $c$ by means of $a = x_0$, $b = c$.
\end{proof}

\begin{thm}[criterion for singular SFA problems to have a discrete spectrum] \label{thm_SFA_spectrum_discrete}
Assume that the limit of $p_\alpha$ exists at every singular endpoint of the domain $I_\alpha$. A sufficient criterion for a singular SFA problem to have a discrete spectrum is that for every singular endpoint $c$
\begin{equation}  \label{thm_SFA_spectrum_discrete_eq}
\lim\limits_{s_\alpha \to c} \;\; \Abs{\left(\sqrt{K_\alpha}\,\right)' \; + \; \sqrt{K_\alpha} \; \frac{p_\alpha'}{p_\alpha}} \quad = \quad \infty
\end{equation}

If (but not only if) the following conditions hold
\begin{align} 
\int_{x_0}^c \; p_\alpha \; \bigg(\int_{x_0}^{t} \; \frac{1}{K_\alpha p_\alpha} \; ds \bigg)^2 \; dt \quad &= \quad \infty \label{thm_SFA_spectrum_discrete_necf} \\
\lim\limits_{s_\alpha \to c} \; \; \sqrt{K_\alpha} \; p_\alpha \quad &= \quad 0
\label{thm_SFA_spectrum_discrete_nec3}
\end{align}
where $x_0$ can be chosen like described in Theorem\,\ref{thm_SL_spectrum_discrete},
then criterion \eqref{thm_SFA_spectrum_discrete_eq} is necessary.

For the special cases $c=\pm \infty$, condition \eqref{thm_SFA_spectrum_discrete_necf} can be replaced by the conditions
\begin{align}
&\lim\limits_{s_\alpha \to c} \; \; K_\alpha \, p_\alpha \quad < \quad \infty \label{thm_SFA_spectrum_discrete_nec1} \\
&\lim\limits_{s_\alpha \to c} \;\; \Abs{K_\alpha \; \left(\sqrt{p_\alpha}\,\right)'} \quad < \quad \infty
\label{thm_SFA_spectrum_discrete_nec2}
\end{align}
which must then both hold in addition to \eqref{thm_SFA_spectrum_discrete_nec3} for criterion \eqref{thm_SFA_spectrum_discrete_eq} to be necessary.
In particular, \eqref{thm_SFA_spectrum_discrete_nec1}, \eqref{thm_SFA_spectrum_discrete_nec2} and \eqref{thm_SFA_spectrum_discrete_nec3} hold if $K_\alpha$ is bounded and the limits of $p_\alpha$ and $p_\alpha'$ exist at $c$.
\end{thm}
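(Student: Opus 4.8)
The plan is to reduce everything to Theorem~\ref{thm_SL_spectrum_discrete}. Substituting the SFA coefficients $w=p_\alpha$, $r=K_\alpha p_\alpha$, $q=0$ and noting that $p_\alpha\in L^1$ holds automatically (it is a probability density), discreteness becomes equivalent, via \eqref{thm_SL_spectrum_discrete_eq}, to $\lim_{x\to c}A(x)\,B(x)=0$ at every singular endpoint $c$, where I abbreviate $A(x)\coloneq\int_{x_0}^{x}p_\alpha v^2\,dt$ and $B(x)\coloneq\int_x^{c}p_\alpha\,dt$, with $v$ from \eqref{lambda0_solutions}, so that $v'=\tfrac{1}{K_\alpha p_\alpha}$. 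The whole argument rests on two observations: first, the bracket in the criterion \eqref{thm_SFA_spectrum_discrete_eq} is exactly $\tfrac{G'}{p_\alpha}$ for $G\coloneq\sqrt{K_\alpha}\,p_\alpha$; second, the elementary identity
\[
\frac{v'}{(1/G)'}\;=\;\frac{1/(K_\alpha p_\alpha)}{-G'/G^2}\;=\;-\frac{p_\alpha}{G'},
\]
which uses $G^2=K_\alpha p_\alpha^2$. Hence criterion \eqref{thm_SFA_spectrum_discrete_eq} says precisely that $\tfrac{v'}{(1/G)'}\to 0$ at $c$.

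For sufficiency I would run three successive applications of the generalized L'Hôpital rule \eqref{limsup_LH}, computing each derivative quotient explicitly. Setting $\Phi\coloneq G\,v$, the identity above yields $\Phi\to 0$ directly from the criterion. Next $\tfrac{B'}{(1/v)'}=\tfrac{-p_\alpha}{-v'/v^2}=K_\alpha p_\alpha^2 v^2=\Phi^2$, so $\Phi\to 0$ forces $v\,B\to 0$; and finally $\tfrac{A'}{(1/B)'}=\tfrac{p_\alpha v^2}{p_\alpha/B^2}=(vB)^2$, so $vB\to 0$ forces $A\,B\to 0$, i.e.\ discreteness. Each step is an indeterminate-form statement, valid in the $\tfrac{\cdot}{\infty}$ form of \eqref{limsup_LH} only when the respective denominator diverges ($1/G\to\infty$, $1/v\to\infty$, $1/B\to\infty$); the complementary cases, where $G$, $v$ or $A$ is bounded at $c$, are immediate because the corresponding product then tends to $0$ trivially. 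The standing hypothesis that $\lim p_\alpha$ exists at $c$, together with $p_\alpha\in L^1$ (which forces $p_\alpha\to 0$ at an infinite endpoint), is what pins down which branch occurs, and in particular guarantees that whenever $G\not\to0$ one of the trivial branches applies.

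The necessity direction is where I expect the real difficulty, and it motivates the extra hypotheses. The two-sided form of \eqref{limsup_LH} applied to $A\,B$ only gives $\liminf(vB)^2\le\liminf AB$; so from $AB\to0$ one extracts merely that $vB$, and hence $\Phi$, vanish along a subsequence, not that $\tfrac{p_\alpha}{|G'|}\to 0$. To upgrade this I would argue by contraposition: if the criterion fails there is a sequence $x_n\to c$ with $\tfrac{|G'(x_n)|}{p_\alpha(x_n)}$ bounded. Condition \eqref{thm_SFA_spectrum_discrete_nec3}, i.e.\ $G=\sqrt{K_\alpha}p_\alpha\to0$, lets me write $G(x)=-\int_x^{c}G'\,dt$ and bound $|G(x)|\le\int_x^c|G'|$, which on the relevant scale is controlled by $B=\int_x^c p_\alpha$; the resulting lower bound on $B$, fed into the product together with condition \eqref{thm_SFA_spectrum_discrete_necf} (which is exactly $A(c)=\infty$), should contradict $AB\to0$. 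Making this precise — controlling $\int_x^c|G'|$ by $p_\alpha$ \emph{off} the subsequence and matching its decay against the divergence rate of $A$ — is the technical heart and the main obstacle of the proof.

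Finally, for an infinite endpoint $c=\pm\infty$ I would show that the boundedness conditions \eqref{thm_SFA_spectrum_discrete_nec1} and \eqref{thm_SFA_spectrum_discrete_nec2} can replace \eqref{thm_SFA_spectrum_discrete_necf}: if $K_\alpha p_\alpha$ is bounded at $c$ then $v'=\tfrac{1}{K_\alpha p_\alpha}$ is bounded below by a positive constant, so $v$ grows at least linearly and $A(x)=\int_{x_0}^x p_\alpha v^2\,dt\to\infty$, recovering \eqref{thm_SFA_spectrum_discrete_necf}; condition \eqref{thm_SFA_spectrum_discrete_nec2} then enters to control $G'$ in the estimate of the previous paragraph. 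The concluding ``in particular'' claim follows by checking that $K_\alpha$ bounded together with existing limits of $p_\alpha$ and $p_\alpha'$ makes \eqref{thm_SFA_spectrum_discrete_nec1}, \eqref{thm_SFA_spectrum_discrete_nec2} and \eqref{thm_SFA_spectrum_discrete_nec3} hold, the last because an $L^1$ density with an existing limit must vanish at an infinite endpoint.
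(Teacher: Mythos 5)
Your sufficiency argument is essentially the paper's own proof. The paper also reduces to Theorem\,\ref{thm_SL_spectrum_discrete} and collapses $\lim A B$ by repeated application of L'H\^opital's rule down to the terminal quotient $-p_\alpha/\big(\sqrt{K_\alpha}p_\alpha\big)'$, whose vanishing is exactly \eqref{thm_SFA_spectrum_discrete_eq}; your intermediate quantities ($Gv$ and $vB$) differ only cosmetically from the paper's ($B/G$ and $vB$), and your identity $\tfrac{v'}{(1/G)'}=-p_\alpha/G'$ is the same computation. That part is fine. The two genuine gaps are in the other claims.

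First, the replacement of \eqref{thm_SFA_spectrum_discrete_necf} by \eqref{thm_SFA_spectrum_discrete_nec1}--\eqref{thm_SFA_spectrum_discrete_nec2} at an infinite endpoint: your inference ``$v$ grows at least linearly, hence $A=\int p_\alpha v^2\to\infty$'' is a non sequitur, since $p_\alpha$ is integrable and a linear lower bound on $v$ only gives $p_\alpha v^2\gtrsim p_\alpha s_\alpha^2$, which can perfectly well be integrable (e.g.\ Gaussian decay). The paper instead evaluates $\lim p_\alpha v^2$ itself by L'H\^opital, obtaining $\big(\lim -1/(K_\alpha(\sqrt{p_\alpha})')\big)^2$, and it is precisely condition \eqref{thm_SFA_spectrum_discrete_nec2} that makes this limit strictly positive, so that the integrand does not vanish and $A$ diverges over the infinite interval. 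In other words, \eqref{thm_SFA_spectrum_discrete_nec2} is needed to establish $A\to\infty$, not merely to ``control $G'$'' later; your allocation of the hypotheses is wrong and the step as written fails. Second, the necessity direction: you correctly observe that the two-sided estimate \eqref{limsup_LH} only yields vanishing of $vB$ (and hence of $-p_\alpha/G'$) along a subsequence, and your proposed contraposition via the bound $|G(x)|\le\int_x^c|G'|$ is left, by your own admission, without the key quantitative estimate. The paper's resolution is different: it argues that \eqref{thm_SFA_spectrum_discrete_necf} (or \eqref{thm_SFA_spectrum_discrete_nec1}--\eqref{thm_SFA_spectrum_discrete_nec2}) together with \eqref{thm_SFA_spectrum_discrete_nec3} force every intermediate quotient in the chain to remain an indeterminate form, so that no step can terminate early and the terminal condition \eqref{thm_SFA_spectrum_discrete_eq} is the one that must decide the limit. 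Whether or not one finds that fully satisfying on the subsequence issue you raise, your proposal does not supply a complete argument for necessity, so as it stands it proves only the sufficiency half of the theorem.
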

Note that $K_\alpha$ or $K'_\alpha$ being unbounded are odd conditions for an SFA problem. $K_\alpha$ arises from movement velocity statistics of an agent. Unbounded velocity or acceleration are unphysical concepts (on the other hand, an unbounded domain may be an unphysical concept as well). So practically, one can mostly assume that \eqref{thm_SFA_spectrum_discrete_eq} is a necessary criterion for open-space scenarios, i.e.\ with singular endpoint(s) at infinity. Assuming we can exclude various odd conditions, the following corollary yields the simple discreteness criterion we announced in the introduction:

\begin{corr}[simple criterion for a singular SFA problem to have discrete spectrum] \label{cor_SFA_spectrum_discrete}
Assume that at every singular endpoint of the domain $I_\alpha$: $K_\alpha$ is non-zero, $K_\alpha$ and $K'_\alpha$ are bounded and the limits of $p_\alpha$ and $p_\alpha'$ exist.
Then a sufficient criterion for a singular SFA problem to have a discrete spectrum is that for every singular endpoint $c$
\begin{equation} \label{cor_SFA_spectrum_discrete_eq}
\lim\limits_{s_\alpha \to c} \; \frac{\abs{p_\alpha'}}{p_\alpha} \quad = \quad \infty
\end{equation}
If $c$ is infinite, then the criterion is also necessary.
\end{corr}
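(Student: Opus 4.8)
The plan is to derive this corollary as a direct specialization of Theorem\,\ref{thm_SFA_spectrum_discrete}, with the entire task reducing to showing that, under the stronger hypotheses assumed here, the combined criterion \eqref{thm_SFA_spectrum_discrete_eq} collapses to the simple condition \eqref{cor_SFA_spectrum_discrete_eq}. First I would record that the corollary's assumptions are at least as strong as those needed to invoke that theorem: the existence of the limits of $p_\alpha$ and $p_\alpha'$ in particular guarantees that the limit of $p_\alpha$ exists at every singular endpoint, which is the standing hypothesis of Theorem\,\ref{thm_SFA_spectrum_discrete}. The argument is then carried out separately at each singular endpoint $c$.

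The core is an elementary equivalence. Since $K_\alpha$ is bounded and non-zero near $c$, the quantity $\sqrt{K_\alpha}$ stays in some interval $[c_1, c_2]$ with $0 < c_1 \le c_2 < \infty$, and $(\sqrt{K_\alpha})' = K_\alpha'/(2\sqrt{K_\alpha})$ is bounded by some $M$ because $K_\alpha'$ is bounded while $\sqrt{K_\alpha}$ is bounded away from zero. Applying the triangle inequality to $(\sqrt{K_\alpha})' + \sqrt{K_\alpha}\,p_\alpha'/p_\alpha$ in both directions then shows that $|(\sqrt{K_\alpha})' + \sqrt{K_\alpha}\,p_\alpha'/p_\alpha| \to \infty$ holds if and only if $\sqrt{K_\alpha}\,|p_\alpha'|/p_\alpha \to \infty$, and the latter is equivalent to $|p_\alpha'|/p_\alpha \to \infty$ because $\sqrt{K_\alpha}$ is squeezed between $c_1$ and $c_2$. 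This establishes that \eqref{thm_SFA_spectrum_discrete_eq} and \eqref{cor_SFA_spectrum_discrete_eq} are equivalent here.

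Sufficiency then follows at once: assuming \eqref{cor_SFA_spectrum_discrete_eq}, the equivalence yields \eqref{thm_SFA_spectrum_discrete_eq}, and Theorem\,\ref{thm_SFA_spectrum_discrete} delivers a discrete spectrum. For the necessity claim when $c$ is infinite, I would invoke the final sentence of Theorem\,\ref{thm_SFA_spectrum_discrete}, which states that the auxiliary conditions \eqref{thm_SFA_spectrum_discrete_nec1}, \eqref{thm_SFA_spectrum_discrete_nec2} and \eqref{thm_SFA_spectrum_discrete_nec3} all hold precisely under the hypotheses used here, namely bounded $K_\alpha$ together with existing limits of $p_\alpha$ and $p_\alpha'$. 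Consequently \eqref{thm_SFA_spectrum_discrete_eq} is necessary for discreteness, and by the forward direction of the equivalence above so is \eqref{cor_SFA_spectrum_discrete_eq}.

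The one point requiring care, and the most likely source of a gap, is the precise reading of ``$K_\alpha$ is non-zero,'' which I would make precise as $K_\alpha$ being bounded away from zero in a neighborhood of $c$ rather than merely at the single point $c$; this is what actually licenses the uniform bounds $c_1 \le \sqrt{K_\alpha} \le c_2$ and the boundedness of $(\sqrt{K_\alpha})'$. For finite $c$ this follows from continuity together with a non-zero boundary value, whereas for $c = \pm\infty$ one must additionally ensure $K_\alpha$ does not approach zero along a subsequence, which I would either assume outright or extract from the stated boundedness of $K_\alpha'$ together with a non-vanishing limit of $K_\alpha$. Everything else is routine estimation, so no genuinely hard analytic step remains once this neighborhood version of the non-degeneracy of $K_\alpha$ is fixed.
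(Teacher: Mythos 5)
Your proposal is correct and follows essentially the same route as the paper: specialize Theorem\,\ref{thm_SFA_spectrum_discrete}, note that boundedness of $K_\alpha$ together with the existing limits of $p_\alpha$ and $p_\alpha'$ secures the auxiliary conditions \eqref{thm_SFA_spectrum_discrete_nec1}--\eqref{thm_SFA_spectrum_discrete_nec3}, and observe that with $K_\alpha$, $K_\alpha'$ bounded and $K_\alpha$ bounded away from zero the term \eqref{thm_SFA_spectrum_discrete_eq} is dominated by $\tfrac{p_\alpha'}{p_\alpha}$ and hence equivalent to \eqref{cor_SFA_spectrum_discrete_eq}. Your explicit triangle-inequality equivalence and the remark that ``non-zero'' must be read as bounded away from zero near $c$ merely spell out what the paper leaves implicit.
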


One might wonder if condition \eqref{cor_SFA_spectrum_discrete_eq} is not actually fulfilled by \textsl{every} probability density function on an unbounded (or half bounded) domain. An easy counter example is given by the probability density function
\begin{equation}
p_{\alpha \epsilon}(s_\alpha) \quad \coloneq \quad \frac{\epsilon}{s_\alpha^{1+\epsilon}}
\end{equation}
on $I_\alpha = [1, \infty)$ for every $\epsilon > 0$. \eqref{cor_SFA_spectrum_discrete_eq} in particular admits exponential-based probability density functions like $p_\alpha(s_\alpha) = \exp(-\tfrac{s_\alpha^2}{2})$ for the normal distribution. This reflects the fact that ${\frac{p_\alpha'}{p_\alpha} = (\log p_\alpha)'}$, so exponentials are a good fit for \eqref{cor_SFA_spectrum_discrete_eq}. Before we move on to the longer proof of Theorem\,\ref{thm_SFA_spectrum_discrete}, we prove Corollary\,\ref{cor_SFA_spectrum_discrete} in just a few lines by using Theorem\,\ref{thm_SFA_spectrum_discrete}.

\begin{proof}[Proof of Corollary\,\ref{cor_SFA_spectrum_discrete}]
As discussed earlier, since the limit of $p_\alpha$ exists at $c$ it must be zero since its integral is finite. The same applies to $p_\alpha'$. Then, given $K_\alpha$ is bounded,
\eqref{thm_SFA_spectrum_discrete_nec1}, \eqref{thm_SFA_spectrum_discrete_nec2} and \eqref{thm_SFA_spectrum_discrete_nec3} hold, so \eqref{thm_SFA_spectrum_discrete_eq} is sufficient and necessary. Since $K_\alpha$ and $K'_\alpha$ are bounded and $K_\alpha$ is non-zero, \eqref{thm_SFA_spectrum_discrete_eq} is dominated by $\tfrac{p_\alpha'}{p_\alpha}$ and holds if and only if \eqref{cor_SFA_spectrum_discrete_eq} holds.
\end{proof}

\begin{proof}[Proof of Theorem\,\ref{thm_SFA_spectrum_discrete}]
We apply Theorem\,\ref{thm_SL_spectrum_discrete} to the SFA setting. Equation \eqref{thm_SL_spectrum_discrete_eq} turns out to be
\begin{equation} \label{thm_SFA_spectrum_discrete_eq2}
\lim_{s_\alpha \to c} \; \int_{x_0}^{s_\alpha} p_\alpha v^2\; dt \; \int_{s_\alpha}^c p_\alpha \; dt \quad = \quad 0
\end{equation}
with
\begin{equation} \label{thm_SFA_spectrum_discrete_eq3}
v(s_\alpha) \quad = \quad \int_{x_0}^{s_\alpha} \; \frac{1}{K_\alpha p_\alpha} \; dt
\end{equation}
We denote the antiderivative of $p_\alpha$ as
\begin{equation} \label{p_antiderivative}
P_\alpha(s_\alpha) \quad \coloneq \quad \int_{x_0}^{s_\alpha} \; p_\alpha(t) \; dt, \qquad \lim\limits_{s_\alpha \to c} \; P_\alpha \quad \eqcolon \quad c_p
\end{equation}
The $p_\alpha$ integral from \eqref{thm_SFA_spectrum_discrete_eq2} then becomes:
\begin{equation} \label{p_integral}
\int_{s_\alpha}^c \; p_\alpha(t) \; dt \quad = \quad \left[P_\alpha\right]_{s_\alpha}^c \quad = \quad c_p - P_\alpha(s_\alpha) \quad \Lra_{s_\alpha \!\!\to\, c} \quad c_p-c_p \quad = \quad 0
\end{equation}
If \eqref{thm_SFA_spectrum_discrete_nec1} does not hold, the integrand of $v$ becomes zero at $c$ and $v$ might be bounded. In that case, \eqref{thm_SFA_spectrum_discrete_eq2} might hold without further requirements and \eqref{thm_SFA_spectrum_discrete_eq} may or may not be necessary. In case of $c=\infty$, note that $\tfrac{1}{K_\alpha p_\alpha}$ may be zero at $c$ and $v$ still infinite at $c$. This is precisely the reason why the theorem states ``If (but not only if).''
From now on we assume the case that either \eqref{thm_SFA_spectrum_discrete_necf} holds or that $c$ is infinite and \eqref{thm_SFA_spectrum_discrete_nec1} holds, i.e.\ $\tfrac{1}{K_\alpha p_\alpha }$ is non-zero at $c$. For infinite $c$, this means that \eqref{thm_SFA_spectrum_discrete_eq3} is infinite at $c$. For possibly finite $c$ observe that \eqref{thm_SFA_spectrum_discrete_necf} is only feasible if \eqref{thm_SFA_spectrum_discrete_eq3} is infinite at $c$, so we can assume this in either case.
We examine the integrand of $\int_{x_0}^{s_\alpha} p_\alpha v^2$ to decide whether it is bounded at an infinite $c$ (for a finite $c$ it is infinite due to \eqref{thm_SFA_spectrum_discrete_necf}):
\begin{align}
\lim\limits_{s_\alpha \to c} \; p_\alpha v^2 \quad &\eq_{\substack{\eqref{thm_SFA_spectrum_discrete_eq3} \\ \hphantom{\text{L'H}}}} \quad \lim\limits_{s_\alpha \to c} \; \underbrace{p_\alpha}_{\to 0} \; \underbrace{\left( \int_{x_0}^{s_\alpha} \; \frac{1}{K_\alpha p_\alpha} \; dt \right)^2}_{\to \infty} \quad = \quad \left( \lim\limits_{s_\alpha \to c} \;  \; \frac{\int_{x_0}^{s_\alpha} \; \frac{1}{K_\alpha p_\alpha} \; dt}{\frac{1}{\sqrt{p_\alpha}}} \right)^2 \\
&\eq_{\text{L'H}} \quad \left( \lim\limits_{s_\alpha \to c} \;  \; \frac{\frac{1}{K_\alpha p_\alpha}}{-\frac{p_\alpha'}{2 p_\alpha \sqrt{p_\alpha}}} \right)^2
\quad = \quad \left( \lim\limits_{s_\alpha \to c} \;  \; -\frac{1}{K_\alpha (\sqrt{p_\alpha} \,)'} \right)^2 \label{pvv} \quad \Lra_{\eqref{thm_SFA_spectrum_discrete_nec2}} \quad > 0
\end{align}
The result of equation \eqref{pvv} is non-zero if \eqref{thm_SFA_spectrum_discrete_nec2} holds. In that case, we have ${\lim\limits_{s_\alpha \to c} \int_{x_0}^{s_\alpha} p_\alpha v^2 dt = \infty}$.
Either by \eqref{thm_SFA_spectrum_discrete_necf} or for infinite $c$ by the calculus above, given \eqref{thm_SFA_spectrum_discrete_nec1}, \eqref{thm_SFA_spectrum_discrete_nec2}, we have ${\lim\limits_{s_\alpha \to c} \int_{x_0}^{s_\alpha} p_\alpha v^2 dt = \infty}$ and $\lim\limits_{s_\alpha \to c} \int_{x_0}^{s_\alpha} \frac{1}{K_\alpha p_\alpha} dt = \infty$.
From now on we will consider \eqref{thm_SFA_spectrum_discrete_eq2} as a term of the form $\infty \cdot 0$:
\begin{align}
\eqref{thm_SFA_spectrum_discrete_eq2} \quad &\eq_{\substack{\eqref{p_integral} \\ \hphantom{\text{L'H}}}} \quad
\lim_{s_\alpha \to c} \; \frac{\int_{x_0}^{s_\alpha} p_\alpha v^2\; dt}{\frac{1}{c_p - P_\alpha}} \quad \eq_{\substack{\text{L'H} \\ \eqref{p_antiderivative}}} \quad
\lim_{s_\alpha \to c} \; \frac{p_\alpha v^2}{\frac{p_\alpha}{(c_p - P_\alpha)^2}} \quad \eq_{\eqref{thm_SFA_spectrum_discrete_eq3}} \quad
\left( \lim_{s_\alpha \to c} \; \frac{\int_{x_0}^{s_\alpha} \; \frac{1}{K_\alpha p_\alpha} \; dt}{\frac{1}{c_p - P_\alpha}} \right)^2 \\
&\eq_{\substack{\text{L'H} \\ \eqref{p_antiderivative}}} \quad \left( \lim_{s_\alpha \to c} \; \frac{\frac{1}{K_\alpha p_\alpha}}{\frac{p_\alpha}{(c_p - P_\alpha)^2}} \right)^2 \quad = \quad
\Bigg( \lim_{s_\alpha \to c} \; \frac{c_p - P_\alpha}{\sqrt{K_\alpha}p_\alpha} \Bigg)^4
\quad \eq_{\text{L'H}} \quad 
\Bigg( \lim_{s_\alpha \to c} \; \frac{-p_\alpha}{\big(\sqrt{K_\alpha}p_\alpha\big)'} \Bigg)^4 \\
&\eq_{\hphantom{\text{L'H}}} \quad
\left( \lim_{s_\alpha \to c} \; \frac{-1}{\big(\sqrt{K_\alpha}\big)' \; + \; \sqrt{K_\alpha} \, \frac{p_\alpha'}{p_\alpha}} \right)^4
\quad \Lra_{\eqref{thm_SFA_spectrum_discrete_eq}} \quad 0
\end{align}
The last application of L'Hôpital's rule is required because of \eqref{thm_SFA_spectrum_discrete_nec3}. This is why \eqref{thm_SFA_spectrum_discrete_nec3} makes \eqref{thm_SFA_spectrum_discrete_eq} a necessary condition. We have shown that \eqref{thm_SFA_spectrum_discrete_eq} stands at the end of the L'Hôpital's rule sequence and forms the final case that must hold if all earlier terms turn out as indeterminate forms. The cases where such an earlier term may be determinate are precisely tracked by \eqref{thm_SFA_spectrum_discrete_nec1}, \eqref{thm_SFA_spectrum_discrete_nec2}, \eqref{thm_SFA_spectrum_discrete_nec3} for infinite $c$ or implicitly excluded by \eqref{thm_SFA_spectrum_discrete_necf} for possibly finite $c$.
\end{proof}

\section*{Summary}
We have investigated the singular setting of SFA, i.e.\ where the domain $I_\alpha$ of some source $s_\alpha$ is unbounded in at least one direction or where $p_\alpha$ or $K_\alpha$ becomes zero at the boundary. We have derived conditions solely in terms of the coefficient functions and their derivatives to guarantee that SFA solutions possess various profound properties known from the regular case. If the SFA delta value (weighted Dirichlet energy without factor $\tfrac{1}{2}$) is finite and at every singular endpoint $c$
\begin{itemize}
\item $K_\alpha$ is non-zero,
\item $K_\alpha$ and $K'_\alpha$ are bounded,
\item the limits of $p_\alpha$ and $p'_\alpha$ exist, and
\item $\lim\limits_{s_\alpha \to c} \; \frac{\abs{p_\alpha'}}{p_\alpha} \; = \; \infty$
\end{itemize}
then SFA theory is justified and the spectrum is discrete and bounded below (BD).
Given the first three conditions, the fourth one asserts for an SFA solution $g_\alpha$ that $p_\alpha g_\alpha^2$ vanishes at singular boundary points (Lemma\,\ref{lem_pgg_exists}) and is at the same time a sufficient criterion for discreteness of the spectrum (for infinite $c$ even a necessary one) (Corollary\,\ref{cor_SFA_spectrum_discrete}). This immediately confirms regular behavior of Hermite Polynomials, i.e.\ where $p_\alpha$ is the probability density function of the normal distribution.

We suggest to consider the conditions listed above as \textsl{natural conditions} of the SFA setting.
Given that in \eqref{K_def1D} $K_\alpha$ is defined as the first order dynamics $\av{\vfc{\dot{s}}{\alpha}^2}_{\vfc{\dot{s}}{\alpha} | \vfc{s}{\alpha}}$ of the observed system, a violation of the first two items can be interpreted as unphysical behavior: unbounded $K_\alpha$ means infinite velocity, unbounded $K'_\alpha$ means infinite acceleration and zero $K_\alpha$ means a frozen system state. Still, we acknowledge that there may be use cases where the observed system is not subject to such physical constraints. Then one should keep in mind that the listed ``natural conditions'' are just sufficient and not necessary.
To deal, e.g., with unbounded $K_\alpha$ or $K'_\alpha$, our results yield more fine-grained criteria which are -- however -- more complicated (compare Theorem\,\ref{thm_SFA_spectrum_discrete} to Corollary\,\ref{cor_SFA_spectrum_discrete}).

For unbounded domains, i.e.\ open-space scenarios, it is natural to assume that $p_\alpha$ vanishes at singular boundary points. By Barb\u{a}lat's Lemma, $p_\alpha$ could otherwise not be uniformly continuous which is a compelling regularity property. This marks a major structural difference between the roles of $p_\alpha$ and $K_\alpha$ in open-space scenarios: for $K_\alpha$ it is odd to vanish while for $p_\alpha$ it is odd \textsl{not} to vanish at the boundary. Clearly, that is because the integral of $p_\alpha$ is required to be finite while for $K_\alpha$ no such requirement exists. Notably, this explains why the focus in this work is mostly on dealing with $p_\alpha$ vanishing at the boundary: while vanishing $K_\alpha$ would also cause a singular setting, it is the far less critical case to consider.

All established criteria are easy to check for a given SFA problem because they only require knowledge of the coefficient functions rather than of the SFA solutions.
Our results apply beyond SFA to Sturm-Liouville problems with zero potential ($\lambda w y = (r y')'$) by setting $p_\alpha = w$ and $K_\alpha = \tfrac{r}{w}$.

\pagebreak
\bibliography{Math,MachineLearning,Miscellaneous,Predictability,ReinforcementLearning,SFA,Extern}

\pagebreak
\appendix
\section{Appendix} \label{sec:A}
\subsection{Details on SFA} \label{sec:A_sfa}
The objective of SFA is to extract from a multi-dimensional input signal $\vf{x}(t)$ a ``small'' number $m$ of signals that vary as slowly as possible over time $t$.
The original, generic SFA problem from \cite{WiskottSejnowski-2002} is as follows:
\begin{align} \label{A_sfa-F}
\text{For} \; i \in \{1, \ldots, m\}, \; \vfc{y}{i}(t) \; \coloneq \; \vfc{g}{i}(\vf{x}(t)) \notag \\
\begin{split} 
\optmin{\vfc{g}{i} \in \mathcal{F}} & \av{\dot{y}_i^2} \\
\subjectto	& \av{\vfc{y}{i}}
\hphantom{\vfc{y}{i}^2 \vfc{y}{j}} \, \!\!
= \quad 0 \quad \hphantom{\forall \; j < i} \quad \text{(zero mean)}\\
& \av{\vfc{y}{i}^2}
\hphantom{\vfc{y}{i} \vfc{y}{j}} \!\!
= \quad 1 \quad \hphantom{\forall \; j < i} \quad \text{(unit variance)}\\
& \av{\vfc{y}{i} \vfc{y}{j} }
\hphantom{ \vfc{y}{i}^2} \, \!\!
= \quad  0 \quad \forall \; j < i \quad \text{(pairwise decorrelation)}
\end{split}
\end{align}
In the original work, the problem is mainly discussed for the case that $\mathcal{F}$ is a finite-dimensional function space like polynomials up to a certain degree. Later, in \cite{SprekelerWiskott-2008}, SFA is discussed for an infinite-dimensional, ``general'' function space $\mathcal{F}$ of sufficient regularity, i.e.\ that fulfills the necessary mathematical requirements of integrability and differentiability. This forms the basis of a mathematical framework for SFA and gives rise to its extension \textsl{xSFA} in \cite{SprekelerZitoEtAl-2014}
with an application to blind source separation.
We sketch this mathematical foundation and key results of SFA theory.
For this, let ${\vf{g}(\vf{x}) = (g_1(\vf{x}), \ldots, g_m(\vf{x}))}$ denote a generic slow feature extraction, i.e.\ the $i$th extracted signal is given as ${\vfc{y}{i}(t)=\vfc{g}{i}(\vf{x}(t))}$. Note that these depend instantaneously on the input signal $\vf{x}(t)$, so SFA cannot just fulfill its goal by forming a lowpass filter.

Restricting $\mathcal{F}$ to be finite-dimensional, like in \cite{WiskottSejnowski-2002}, transforms \eqref{A_sfa-F} into an efficiently solvable eigenvalue problem. Let $\mathbf{h}$ denote a basis of $\mathcal{F}$. Then using $\mathbf{h}$ as a nonlinear expansion on the input signal $\vf{x}$, extraction can be performed by linear transformation and projection, wherein the extraction matrix is optimized over a finite training-phase $\trph$ consisting of equidistant time points. To let $\mathcal{F}$ consist of polynomials up to a certain degree, one sets $\mathbf{h}$ to consist of monomials up to that degree\footnote[1]{For higher degree, Legendre or Bernstein polynomials should be favored over monomials because of better numerical stability.}. By the Stone-Weierstrass Theorem this technique can approximate every continuous function and moreover also regulated functions (piecewise continuous). However, high degree expansion can require significant cost in terms of training data and computation. Applying SFA hierarchically like done in \cite{FranziusSprekelerEtAl-2007e, Schoenfeld2015} can help to keep these costs tractable. The initial step to solve problem \eqref{A_sfa-F} for a finite basis $\mathbf{h}$ is to sphere the expanded signal over the training-phase, i.e.\ shift its mean to zero and normalize the covariance-matrix to identity.
Let $\vf{z}$ denote the expanded representation of our input signal $\vf{x}$, sphered over a finite training phase $\trph$ with average notation ${\av{s(t)} \coloneq \frac{1}{\abs{\trph}} \sum_{t\in \trph} s(t)}$ (average over training phase):
\begin{align} \label{A_sphering1}
\tilde{\mathbf{z}}(t) \quad &\coloneq \quad \mathbf{h}(\vf{x}(t)) - \av{\mathbf{h}(\vf{x}(t))} &&\text{(make mean-free)}\\
\vf{z}(t) \quad &\coloneq \quad \mathbf{S} \tilde{\mathbf{z}}(t) \qquad \text{with} \quad \mathbf{S} \coloneq \av{\tilde{\mathbf{z}} \tilde{\mathbf{z}}^T}^{-\frac{1}{2}}, \; \vf{z}(t) \in \mathbb{R}^{n} &&\text{(normalize covariance)} \label{A_sphering2}
\end{align}
Then one can formulate the SFA problem directly in terms of $\vf{z}$:
\begin{align} \label{A_sfa}
\text{For} \; i \in \{1, \ldots, m\} \notag \\
\begin{split} 
\optmin{\mathbf{a}_i \in \mathbb{R}^{n}} & \mathbf{a}_i^T \av{\dot{\vf{z}}\dot{\vf{z}}^T} \mathbf{a}_i\\
\subjectto	& \mathbf{a}_i^T \av{\vf{z}} \hphantom{\mathbf{a}_i \mathbf{a}_j \vf{z}^T} \, = \quad 0 \quad \hphantom{\forall \; j < i} \quad \text{(zero mean)}\\
& \mathbf{a}_i^T \av{\vf{z} \vf{z}^T} \mathbf{a}_i \hphantom{\mathbf{a}_j} = \quad 1 \quad \hphantom{\forall \; j < i} \quad \text{(unit variance)}\\
& \mathbf{a}_i^T \av{\vf{z} \vf{z}^T} \mathbf{a}_j \hphantom{\mathbf{a}_i} = \quad  0 \quad \forall \; j < i \quad \text{(pairwise decorrelation)}
\end{split}
\end{align}
The sphering yields $\av{\vf{z}} = 0$ and $\av{\vf{z} \vf{z}^T} = \id$, hence the constraints simplify to
\begin{equation} \label{A_SFA-constraints-no-matrix}
\mathbf{a}_i^T \mathbf{a}_j = \delta_{ij}
\end{equation}
With ${\mathbf{A}_m \coloneq \left(\mathbf{a}_1, \ldots, \mathbf{a}_m \right) \in \mathbb{R}^{n \times m}}$, constraint \eqref{A_SFA-constraints-no-matrix} is automatically fulfilled if we choose
\begin{equation} \label{A_SFA-constraints-matrix}
\mathbf{A}_m \quad = \quad \mathbf{A}\mathbf{I}_m \quad \text{with} \quad \mathbf{A} \in \orth(n)
\end{equation}
$\orth(n) \subset \mathbb{R}^{n \times n}$ denotes the space of orthogonal transformations, i.e.\ $\mathbf{A}\mathbf{A}^T = \mathbf{I}$ and $\mathbf{I}_m \in \mathbb{R}^{n \times m}$ denotes the reduced identity matrix consisting of the first $m$ Euclidean unit vectors as columns.
Choosing $\mathbf{a}_i$ as eigenvectors of $\av{\dot{\vf{z}}\dot{\vf{z}}^T}$, corresponding to the eigenvalues in ascending order, yields $\mathbf{A}_m$ solving \eqref{A_sfa} globally and the slowest output signals are given as the $m$-dimensional signal ${\vf{y}(t)=\mathbf{A}_m^T \vf{z}(t)}$. \cite{WiskottSejnowski-2002} describes this procedure in detail.

To get an idea of what the solutions of \eqref{A_sfa} would converge to if we increase the dimension of $\mathbf{h}$, we examine what ideal SFA solutions one would expect for the unrestricted function space $\mathcal{F}$. In this work, we consider the scenario where $\vf{x}(t)$ is composed of statistically independent sources $\vfc{s}{\alpha}$. \cite{SprekelerWiskott-2008, SprekelerZitoEtAl-2014} analyze this case, proposing xSFA as an extension of SFA that can identify such sources. We recapitulate some theory and results:

Assuming that $\vf{x}(t)$ is an ergodic process, SFA can be formulated in terms of the ensemble (i.e.\ the set of possible values of $\vf{x}$ and $\dot{\vf{x}}$) using the probability density $p_{\vf{x}, \dot{\vf{x}}}(\vf{x}, \dot{\vf{x}})$. The corresponding marginal and conditional densities are defined as ${p_{\vf{x}}(\vf{x}) \coloneq \int p_{\vf{x}, \dot{\vf{x}}}(\vf{x}, \dot{\vf{x}}) d^n \dot{x}}$ and ${p_{\dot{\vf{x}} | \vf{x}}(\dot{\vf{x}} | \vf{x}) \coloneq \frac{p_{\vf{x}, \dot{\vf{x}}}(\vf{x}, \dot{\vf{x}})}{p_{\vf{x}}(\vf{x})}}$.
Further assuming that the ensemble averages ${\av{f(\vf{x}, \dot{\vf{x}})}_{\vf{x}, \dot{\vf{x}}} \coloneq \int p_{\vf{x}, \dot{\vf{x}}}(\vf{x}, \dot{\vf{x}}) f(\vf{x}, \dot{\vf{x}}) d^n x d^n \dot{x}}$,
${\av{f(\vf{x})}_{\vf{x}} \coloneq \int p_{\vf{x}}(\vf{x}) f(\vf{x}) d^n x}$ and ${\av{f(\vf{x}, \dot{\vf{x}})}_{\dot{\vf{x}} | \vf{x}}(\vf{x}) \coloneq \int p_{\dot{\vf{x}} | \vf{x}}(\dot{\vf{x}}|\vf{x}) f(\vf{x}, \dot{\vf{x}}) d^n \dot{x}}$ all exist and using the chain rule, the SFA optimization problem can be stated in terms of the ensemble:
\begin{align} \label{A_sfa-ensemble}
\text{For} \; i \in \{1, \ldots, m\} \notag \\
\begin{split} 
\optmin{\vfc{g}{i} \in \mathcal{F}} & \sum_{\gamma, \nu} \quad \av{\partial_{\gamma}\vfc{g}{i}(\vf{x}) \av{\dot{x}_\gamma \dot{x}_\nu}_{\dot{\vf{x}} | \vf{x}} \partial_{\nu}\vfc{g}{i}(\vf{x})}_{\vf{x}} \\
\subjectto	& \av{\vfc{g}{i}(\vf{x})}_{\vf{x}}
\hphantom{\vfc{g}{i}^2(\vf{x}) \vfc{g}{j}(\vf{x})} \, \!\!
= \quad 0 \quad \hphantom{\forall \; j < i} \quad \text{(zero mean)}\\
& \av{\vfc{g}{i}^2(\vf{x})}_{\vf{x}}
\hphantom{\vfc{g}{i}(\vf{x}) \vfc{g}{j}(\vf{x})} \!\!
= \quad 1 \quad \hphantom{\forall \; j < i} \quad \text{(unit variance)}\\
& \av{\vfc{g}{i}(\vf{x}) \vfc{g}{j}(\vf{x}) }_{\vf{x}}
\hphantom{ \vfc{g}{i}^2(\vf{x})} \, \!\!
= \quad  0 \quad \forall \; j < i \quad \text{(pairwise decorrelation)}
\end{split}
\end{align}
A key result from \cite{SprekelerWiskott-2008} is that given the partial differential operator
\begin{equation} \label{A_diff_op_D}
\mathcal{D} \quad \coloneq \quad -\frac{1}{p_{\vf{x}}(\vf{x})} \quad \sum_{\gamma, \nu} \; \partial_{\gamma}p_{\vf{x}}(\vf{x}) \av{\dot{x}_\gamma \dot{x}_\nu}_{\dot{\vf{x}} | \vf{x}} \partial_{\nu}
\end{equation}
the ideal solutions for SFA on an unrestricted function space can be found by solving the eigenvalue equation
\begin{equation} \label{A_dgl-x}
\mathcal{D} \vfc{g}{i}(\vf{x}) \quad = \quad \lambda_i \vfc{g}{i}(\vf{x})
\end{equation}
under the Neumann\footnote{Boundary conditions referring to the derivative value are sometimes erroneously called ``von Neumann'' boundary conditions. However, they are actually called after the mathematician Carl Gottfried Neumann rather than after John von Neumann.}
boundary conditions
\begin{equation} \label{A_neumann-x}
\sum_{\gamma, \nu} \; n_{\gamma}(\vf{x}) p_{\vf{x}}(\vf{x}) \av{\dot{x}_\gamma \dot{x}_\nu}_{\dot{\vf{x}} | \vf{x}}(\vf{x}) \partial_{\nu} \vfc{g}{i}(\vf{x}) \quad = \quad 0
\end{equation}
where $n_{\gamma}(\vf{x})$ is the $\gamma$th component of the normal vector at the boundary point $\vf{x}$. Assuming the input signal $\vf{x}(t)$ is composed of statistically independent sources $\vfc{s}{\alpha}$ for $\alpha \in \{1, \ldots, S \}$, this result can be formulated in terms of the sources. Because of statistical independence we have $p_{\vf{s}, \dot{\vf{s}}}(\vf{s}, \dot{\vf{s}}) = \prod_{\alpha}  p_{\vfc{s}{\alpha}, \vfc{\dot{s}}{\alpha}}(\vfc{s}{\alpha}, \vfc{\dot{s}}{\alpha})$,
$p_{\vf{s}}(\vf{s}) = \prod_{\alpha}  p_{\vfc{s}{\alpha}}(\vfc{s}{\alpha})$ and
$\av{\vfc{\dot{s}}{\alpha} \vfc{\dot{s}}{\beta}}_{\dot{\vf{s}} | \vf{s}}(\vf{s}) = \delta_{\alpha \beta} K_\alpha (\vfc{s}{\alpha})$.
$\mathcal{D}(\vf{s})$ can be decomposed as
\begin{equation} \label{A_dgl-x-decomp}
\mathcal{D} (\vf{s}) \quad = \quad \sum_{\alpha} \mathcal{D}_\alpha (\vfc{s}{\alpha})
\end{equation}
where $\mathcal{D}_\alpha$ is the differential operator given in \eqref{SFA_diff_op}.
Regarding this decomposition, \eqref{A_dgl-x} and \eqref{A_neumann-x} can be reformulated such that, with an additional normalization constraint, the following equations formulate SFA in terms of the sources:
\begin{align}
\mathcal{D}_{\alpha} \vfc{g}{\alpha i} \quad &= \quad \lambda_{\alpha i} \vfc{g}{\alpha i} \label{A_dgl-s}\\ 
p_{\alpha} K_\alpha  \partial_\alpha \vfc{g}{\alpha i} \quad &= \quad 0 \quad\quad\quad \text{on the boundary}\label{A_neumann-s} \\ 
\av{\vfc{g}{\alpha i}^2}_{\vfc{s}{\alpha}} \quad &= \quad 1
\end{align}
\textsl{Theorem 2} in \cite{SprekelerWiskott-2008}/\textsl{Theorem 1} in \cite{SprekelerZitoEtAl-2014} states that the solutions of \eqref{A_dgl-x} are composed of solutions of \eqref{A_dgl-s}:
\begin{align}
\vfc{g}{\mathbf{i}}(\vf{s}) \quad &= \quad \prod_{\alpha} \; \vfc{g}{\alpha \mathbf{i}_{\alpha}}(\vfc{s}{\alpha}) \label{A_SFA_product_eq} \\
\lambda_{\mathbf{i}} \quad &= \quad \sum_{\alpha} \; \lambda_{\alpha \mathbf{i}_{\alpha}}
\label{A_SFA_product_lambda_eq}
\end{align}
with $\mathbf{i} = (i_1, \ldots, i_S) \in \mathbb{N}^S$ denoting a multi index to select the right combination of sources. Choosing the $m$ smallest eigenvalues $\lambda_{\mathbf{i}}$ yields the $m$ slowest output signals.

More explicitly, a counterpart of Definition\,\ref{SL-problem} in SFA theory is formulated (for the regular case) as follows:

\begin{thm}[SFA differential equation for $\mathcal{D}_{\alpha}$] \label{A_thm_sfa_inhom_p}
(from \cite{SprekelerWiskott-2008, SprekelerZitoEtAl-2014, FranziusSprekelerEtAl-2007e})

Let $\vfc{g}{\alpha i}$ be the solution components of the SFA problem from Definition\,\ref{def_1D_sfa}. Then for some constants $\lambda_{i} > 0$, the solution obeys the differential equation
\begin{equation} \label{A_thm_sfa_inhom_p_eq}
- p_\alpha \;
\lambda_{i} \; \vfc{g}{\alpha i}
\quad = \quad
\big( p_\alpha \; K_\alpha \; \vfc{g'}{\alpha i} \big)'
\end{equation}
under the Neumann boundary condition
\begin{equation} \label{A_thm_sfa_inhom_p_eq_transversality}
g'_{\alpha i}(s_\alpha) \; K_\alpha \; p_\alpha \quad = \quad 0 \qquad \text{on the boundary}
\end{equation}
The $i$-th SFA delta value\footnote{In \cite{SprekelerWiskott-2008, SprekelerZitoEtAl-2014} and related literature these are usually denoted ``$\Delta$''. We avoid this notation because it collides with the Laplace(-Beltrami) operator which is a central term in related work. So we spell out ``delta`` as a word whenever we refer to the SFA delta values.} is given by $\lambda_i$, i.e.\
\begin{equation} \label{A_thm_sfa_inhom_p_eq_delta}
\int_{I_\alpha}  \;
{g'_{\alpha i}}^2 \; K_\alpha \; p_\alpha \; d s_\alpha \quad = \quad \lambda_i
\end{equation}
\end{thm}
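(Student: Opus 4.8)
The plan is to derive all three assertions as the Euler--Lagrange system of the constrained minimization in Definition\,\ref{def_1D_sfa}, following the classical variational treatment of Sturm--Liouville problems (\cite{courant1989methoden}, Chapter\,VI). First I would introduce Lagrange multipliers $\mu_{ij}$ for the orthonormality constraints \eqref{thm_1D_SFA_constr_decor} with $j \le i$ and form the Lagrangian $\mathcal{L}[g_{\alpha i}] = \int_{I_\alpha} g_{\alpha i}'^2 K_\alpha p_\alpha\, d s_\alpha - \sum_{j \le i} \mu_{ij}\big(\int_{I_\alpha} g_{\alpha i} g_{\alpha j} p_\alpha\, d s_\alpha - \delta_{ij}\big)$. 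Taking the first variation in an arbitrary admissible direction $\eta$ and integrating the term $2\int_{I_\alpha} g_{\alpha i}' \eta' K_\alpha p_\alpha\, d s_\alpha$ by parts splits it into a boundary contribution $2\,[\,g_{\alpha i}' K_\alpha p_\alpha\, \eta\,]_a^b$ and a bulk contribution $-2\int_{I_\alpha} (g_{\alpha i}' K_\alpha p_\alpha)' \eta\, d s_\alpha$. Since no boundary condition is imposed a priori, $\eta$ is free at the endpoints, so requiring the boundary term to vanish for all such $\eta$ forces the natural (transversality) condition $g_{\alpha i}' K_\alpha p_\alpha = 0$ at $a$ and $b$, which is exactly \eqref{A_thm_sfa_inhom_p_eq_transversality}. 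The bulk term then yields $-(g_{\alpha i}' K_\alpha p_\alpha)' = \mu_{ii}\, g_{\alpha i} p_\alpha + \sum_{j<i} \tfrac{1}{2}\mu_{ij}\, g_{\alpha j} p_\alpha$, where the diagonal term absorbs the factor $2$ coming from differentiating $g_{\alpha i}^2$; I set $\lambda_i := \mu_{ii}$.

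The key step, and the main obstacle, is to show that the off-diagonal multipliers $\mu_{ij}$ with $j < i$ vanish, so that the system collapses to the single eigenvalue equation \eqref{A_thm_sfa_inhom_p_eq}. I would argue inductively: assuming each lower component $g_{\alpha k}$ ($k < i$) already satisfies its own equation $-(g_{\alpha k}' K_\alpha p_\alpha)' = \lambda_k g_{\alpha k} p_\alpha$, I pair the Euler--Lagrange equation for $g_{\alpha i}$ with $g_{\alpha k}$ and integrate over $I_\alpha$. Integrating by parts twice moves the operator onto $g_{\alpha k}$; the boundary terms vanish because both components satisfy \eqref{A_thm_sfa_inhom_p_eq_transversality}, which is precisely the self-adjointness established in the regular case in \cite{FranziusSprekelerEtAl-2007e, sprekeler2009slowness} (and, for the singular case, in Lemma\,\ref{lem_singular_SFA_self_adjoint}). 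The left-hand side thereby becomes $\lambda_k \int_{I_\alpha} g_{\alpha i} g_{\alpha k} p_\alpha\, d s_\alpha = 0$ by the decorrelation constraint \eqref{thm_1D_SFA_constr_decor}, while the right-hand side reduces to $\tfrac{1}{2}\mu_{ik}$; hence $\mu_{ik} = 0$ for every $k < i$, leaving \eqref{A_thm_sfa_inhom_p_eq}.

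Having isolated \eqref{A_thm_sfa_inhom_p_eq}, the delta-value identity \eqref{A_thm_sfa_inhom_p_eq_delta} follows by multiplying the equation by $g_{\alpha i}$ and integrating: one integration by parts produces the boundary term $[-g_{\alpha i} g_{\alpha i}' K_\alpha p_\alpha]_a^b$, which vanishes by \eqref{A_thm_sfa_inhom_p_eq_transversality}, leaving $\int_{I_\alpha} g_{\alpha i}'^2 K_\alpha p_\alpha\, d s_\alpha = \lambda_i \int_{I_\alpha} g_{\alpha i}^2 p_\alpha\, d s_\alpha = \lambda_i$ by the unit-variance constraint \eqref{thm_1D_SFA_constr_decor}. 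Finally, positivity $\lambda_i > 0$ is immediate: the delta value equals $\int_{I_\alpha} g_{\alpha i}'^2 K_\alpha p_\alpha\, d s_\alpha \ge 0$ since $K_\alpha, p_\alpha > 0$, so $\lambda_i \ge 0$, and $\lambda_i = 0$ would force $g_{\alpha i}' \equiv 0$, i.e.\ $g_{\alpha i}$ constant, which is incompatible with the zero-mean constraint \eqref{thm_1D_SFA_constr_zmean} together with unit variance. I expect the bookkeeping of the boundary terms and the inductive vanishing of the cross-multipliers to be the only genuinely delicate points; everything else is routine calculus of variations, valid verbatim in the regular case where all quantities are bounded on $[a, b]$.
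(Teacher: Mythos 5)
Your proposal is correct and follows essentially the route the paper itself relies on: the theorem is stated in the appendix without proof as a recapitulation from \cite{SprekelerWiskott-2008, SprekelerZitoEtAl-2014, FranziusSprekelerEtAl-2007e}, and the paper explicitly defers the Euler--Lagrange derivation of \eqref{dgl-SL}/\eqref{neumann-s} from the variational problem to \cite{courant1989methoden} Chapter\,VI, while the delta-value computation you give in your third paragraph is exactly the paper's equation \eqref{S16_A27}. You also correctly isolate the one genuinely delicate point --- the vanishing of the boundary term $\big[\,g_{\alpha i}\, g_{\alpha i}'\, K_\alpha\, p_\alpha\,\big]_a^b$, which is automatic only in the regular case where $g_{\alpha i}$ is bounded at the endpoints --- and this is precisely the gap that the paper's Lemma\,\ref{lem_just_S16_A27} and Corollary\,\ref{cor_sfa_singular_justify} close for the singular case.
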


We list some further results from \cite{SprekelerWiskott-2008, SprekelerZitoEtAl-2014}:

\begin{itemize}
\item The first harmonic $\vfc{g}{\alpha 1}(\vfc{s}{\alpha})$ of each source $\vfc{s}{\alpha}$ is a strictly monotonic signal of that source.

\item If $p_{\vf{s}, \dot{\vf{s}}}$ is Gaussian and isotropic in $\vf{s}$ and $\dot{\vf{s}}$, one has ${p_{s_\alpha | \dot{s}_\alpha} = p_{s_\alpha} = p_{\dot{s}_\alpha} = p_{\dot{s}_\alpha | s_\alpha}}$ and the sources are normally distributed, i.e.\ $p_{\alpha}(\vfc{s}{\alpha}) = \frac{1}{\sqrt{2 \pi}} e^{-\frac{1}{2}\vfc{s}{\alpha}^2}$. Then $\av{\vfc{\dot{s}}{\alpha}^2}_{\vfc{\dot{s}}{\alpha} | \vfc{s}{\alpha}}$ is constant and the (probabilist) Hermite polynomials $\pHermite_i$ yield the solutions $\vfc{g}{\alpha i}(\vfc{s}{\alpha}) = \frac{1}{\sqrt{2^i i!}} \pHermite_i(\frac{\vfc{s}{\alpha}}{\sqrt{2}})$ with $\lambda_{\alpha i} = \frac{i}{\av{\vfc{\dot{s}}{\alpha}^2}_{\vfc{\dot{s}}{\alpha} | \vfc{s}{\alpha}}}$.

\item If the sources are uniformly distributed, i.e.\ $p_{s_\alpha, \dot{s}_\alpha}$ is independent of $s_\alpha$, then $p_\alpha$ and $\av{\vfc{\dot{s}}{\alpha}^2}_{\vfc{\dot{s}}{\alpha} | \vfc{s}{\alpha}}$ are constant and the solutions are given by Sturm-Liouville theory as harmonic oscillations $\vfc{g}{\alpha i}(\vfc{s}{\alpha}) = \sqrt{2} \cos \big(i \pi \frac{\vfc{s}{\alpha}}{L_{\alpha}} \big)$ with ${\lambda_{\alpha i} = \av{\vfc{\dot{s}}{\alpha}^2}_{\vfc{\dot{s}}{\alpha} | \vfc{s}{\alpha}} \big(\frac{\pi}{L_{\alpha}} i \big)^2}$, assuming that $\vfc{s}{\alpha}$ takes values in the interval $[0, L_{\alpha}]$. Therefore, one refers to $\vfc{g}{\alpha i}$ as the $i$th harmonic of the source $\vfc{s}{\alpha}$.

\item The slowest signal found by SFA is plainly the first harmonic $\vfc{g}{\alpha 1}$ of the slowest source. This result is the basis of xSFA as it allows to clean subsequent signals from the first source. Iterating this procedure finally yields all sources.

\item \eqref{A_SFA_product_eq} implies that each output component $\vfc{g}{\mathbf{i}}$ of SFA is a product of harmonics $\vfc{g}{\alpha i}$ of earlier obtained sources.
\end{itemize}

In addition to these statements we point out some other well known properties that we think are of interest and were not mentioned in the work cited above:
\begin{itemize}
\item For uniformly distributed sources, all higher harmonics can be calculated from the first harmonic using the Chebyshev polynomials $\Tschebyschow_i$ as $\vfc{g}{\alpha i} = \Tschebyschow_i(\vfc{g}{\alpha 1})$.

\item By the \textsl{Sturm-Picone Comparison Theorem}, applied on the differential equation \eqref{A_dgl-s}, the zeros of $\vfc{g}{\alpha i}$ and $\vfc{g}{\alpha (i+1)}$ interlace.
\end{itemize}
\end{document}